\documentclass[12pt]{saim}

\usepackage{amsmath,amsfonts,bm}

\def\1{\bm{1}}

\DeclareMathAlphabet{\mathsfit}{\encodingdefault}{\sfdefault}{m}{sl}
\SetMathAlphabet{\mathsfit}{bold}{\encodingdefault}{\sfdefault}{bx}{n}

\def\1{\bm{1}}

\DeclareMathAlphabet{\mathsfit}{\encodingdefault}{\sfdefault}{m}{sl}
\SetMathAlphabet{\mathsfit}{bold}{\encodingdefault}{\sfdefault}{bx}{n}

\usepackage{hyperref}
\usepackage{url}
\usepackage{multirow}
\usepackage{pifont}
\usepackage{tablefootnote}
\usepackage{enumitem}
\usepackage{graphicx}
\usepackage{makecell}
\usepackage{epigraph}
\usepackage{float}
\usepackage{colortbl}
\usepackage[most]{tcolorbox}
\usepackage{xcolor}
\usepackage{wrapfig}
\usepackage[ruled, vlined, linesnumbered]{algorithm2e}

\usepackage{amsmath, amssymb, amsfonts, amsthm, mathtools, bm, bbm}
\usepackage{booktabs, longtable, tabularx}
\usepackage{tikz}
\usetikzlibrary{bayesnet,positioning,fit}

\definecolor{gray94}{gray}{.94}
\definecolor{gray90}{gray}{.90}
\definecolor{basegray}{RGB}{245,245,245}
\definecolor{rlvrblue}{RGB}{230,245,255}

\definecolor{lightred}{RGB}{255,240,240}
\definecolor{lightgreen}{RGB}{240,255,240}
\definecolor{lightblue}{RGB}{240,245,255}
\definecolor{lightgray}{RGB}{245,245,245}
\definecolor{MyGreen}{rgb}{0.13, 0.55, 0.13}

\newtcolorbox{AIbox}[2][]{aibox,title=#2,#1}

\definecolor{rliableolive}{HTML}{BBCC33}
\definecolor{rliableblue}{HTML}{77AADD}
\definecolor{rliablered}{HTML}{EE8866}
\definecolor{SDEblue}{RGB}{28 58 88}
\definecolor{cc1}{rgb}{1.0, 0.44, 0.37}
\definecolor{cc2}{rgb}{0.0, 0.2, 0.6}
\definecolor{cc3}{RGB}{255, 191, 0}
\definecolor{cc4}{RGB}{0, 128, 128}

\definecolor{myblue}{RGB}{0,77,64} \definecolor{mybg}{RGB}{240,248,247}

\newtcolorbox{takeawaybox}{
  enhanced,
  colback=mybg,
  colframe=myblue,
  coltitle=myblue,
  boxrule=0.6pt,
  arc=6pt,
  left=6pt,
  right=6pt,
  top=6pt,
  bottom=6pt,
  title=\textsc{Takeaway}
}

\newcommand{\ProofStep}[1]{\par\smallskip\noindent\textbf{#1}\quad}

\newtheorem{theorem}{Theorem}[section]
\newtheorem{proposition}[theorem]{Proposition}
\newtheorem{lemma}[theorem]{Lemma}
\newtheorem{assumption}[theorem]{Assumption}
\newtheorem{corollary}[theorem]{Corollary}
\newtheorem{definition}[theorem]{Definition}
\newtheorem{remark}[theorem]{Remark}
\newtheorem{example}[theorem]{Example}

\title{Generalized Hierarchical Bayesian Segmentation with Irregular Designs, Multi-Sequence Hierarchies, and Grouped/Latent-Group Designs
}

\author[1]{Omid Shams Solari}

\affiliation[1]{sAIm Labs}

\metadata[Author emails]{\email{omid.solari@saimai.com}}

\abstract{
Bayesian change-point and segmentation models provide uncertainty-aware piecewise-constant representations of ordered data, but exact inference is often tied to narrow likelihood classes, single-sequence settings, or index-uniform designs. We present \texttt{BayesBreak}\footnote{Reference implementation: \url{https://github.com/osolari/bayesbreak}.}, a modular offline Bayesian segmentation framework built around a simple separation: each candidate block contributes a marginal likelihood and any required moment numerators, and a global dynamic program combines those block scores into posterior quantities over segment counts, boundary locations, and latent signals. For weighted exponential-family likelihoods with conjugate priors, block evidences and posterior moments are available in closed form from cumulative sufficient statistics, yielding exact sum-product inference for $p(y\mid k)$, $p(k\mid y)$, boundary marginals, and Bayes regression curves. We also distinguish these quantities from the \emph{joint} MAP segmentation, which is recovered by a separate max-sum backtracking recursion.

The framework extends naturally to design-aware partition priors for irregularly spaced observations, to exact pooling across replicates with shared boundaries, and to a latent-template mixture for unknown group membership with exact EM updates for the template-mixture objective. For non-conjugate GLM blocks, the same DP layer can be reused with deterministic local approximations such as Laplace, variational (including Jaakkola--Jordan and P\'olya--Gamma mean-field), EP, or one-dimensional quadrature block routines; we prove a posterior-odds stability bound showing that a uniform per-block log-evidence error $\varepsilon$ perturbs $k$-odds and boundary-odds on the segmentation DP by at most $(k+k')\varepsilon$ and $2k\varepsilon$ respectively. At prediction time, BayesBreak supports posterior-predictive scoring for new sequences or set-valued units under exported segmentations or Bayes curves. The current manuscript is validated by a synthetic recovery/calibration/scaling suite together with four minimal real-data illustrations (well-log geology, array-CGH copy number, equity-return volatility, and CpG-atlas methylation), and the modular construction is designed to make larger real-data benchmarking straightforward.

\paragraph{Keywords:} Bayesian segmentation; change-point detection; dynamic programming; product partitions; hierarchical models.
}

\begin{document}

\maketitle
\tableofcontents
\listoffigures
\listoftables

\newpage
\section{Introduction}

Piecewise-constant segmentation---also called multiple change-point inference---is a core
problem in statistics, signal processing, econometrics, computational biology, and machine
learning. In its simplest form, one observes an ordered sequence $y_1,\dots,y_n$ (optionally at
locations $x_1<\cdots<x_n$) and seeks to infer the number of regimes, the boundary locations,
and the segment-specific parameters governing the observations inside each regime. The appeal of
Bayesian formulations is well known: they quantify uncertainty over boundary locations and
segment counts, they support principled model comparison through marginal likelihoods, and they
naturally propagate uncertainty to downstream tasks such as denoising, regression-curve
estimation, and prediction \citep{barry1992ppm, barry1993bayesCP, fearnhead2006exact,
  hutter2006bpcr}.

For offline segmentation, exact Bayesian inference is possible whenever single-segment
parameters can be integrated out and the partition prior factorizes appropriately. In that case,
one can precompute block evidences for all candidate segments and then perform dynamic
programming (DP) over contiguous partitions to obtain posterior quantities in polynomial time
\citep{fearnhead2006exact, hutter2006bpcr}. This observation makes piecewise-constant Bayesian
regression unusually modular: once blockwise marginal likelihoods are available, the same global
DP machinery delivers evidences, posterior boundary probabilities, Bayes regression curves, and
joint MAP segmentations.

Despite this favorable structure, the literature remains fragmented. Exact offline Bayesian
methods are often presented for a small number of likelihood/prior pairs, typically in single-
sequence settings, and usually on index-uniform grids. Hierarchical or multi-sequence extensions
are either specialized to particular data types, based on approximate sampling schemes, or focused
on synchrony assumptions that are difficult to combine with a reusable block-evidence interface
\citep{carlin1992hierBayesCP, fearnhead2011dependence, fan2017basic, quinlan2024jrpm}. At the
same time, many modern applications require more than boundary recovery: they need predictive
scores for new sequences, group-level pooling, and transparent uncertainty summaries that can be
exported to downstream decision pipelines.

This paper develops \texttt{BayesBreak}, a unified framework for offline Bayesian segmentation
that separates \emph{local block modeling} from \emph{global partition inference}. The framework is
exact whenever block evidences are available in closed form and remains usable when they are
replaced by controlled deterministic approximations. The central design principle is simple: if a
candidate segment $(i,j]$ can be summarized by sufficient statistics and assigned a block score
$A^{(0)}_{ij}$ together with any required moment numerators, then the same DP machinery can be
reused across observation models, design priors, and several structured extensions.

\paragraph{Contributions.}
The main contributions are as follows.
\begin{itemize}
\item \textbf{A reusable exponential-family block interface.}
For exposure-aware or precision-aware exponential-family models with conjugate priors, we derive
closed-form block evidences and posterior moments from cumulative sufficient statistics. This
recovers familiar Gaussian, Poisson, and Beta--Binomial formulas as special cases, and isolates
all model-specific work inside a block routine.

\item \textbf{Exact offline Bayesian inference with coherent posterior summaries.}
We derive forward/backward DP recursions for marginal evidences, posterior segment-count
probabilities, boundary marginals, and Bayes regression curves. We also distinguish these
sum-product quantities from the \emph{joint} MAP segmentation, which is recovered by a separate
max-sum/backtracking recursion rather than by maximizing boundary marginals independently.

\item \textbf{Irregular designs via design-aware partition priors.}
Irregular spacing is handled at the partition-prior level through segment-length factors
$g(x_{t_q}-x_{t_{q-1}})$. We make explicit that physical spacing and exposure weighting are
conceptually distinct: irregular designs do not, by themselves, force likelihood weights, but
segment-length priors provide a principled way to encode design geometry.

\item \textbf{Exact pooling for shared-boundary replicates and known groups.}
When multiple sequences share boundaries but retain sequence-specific segment parameters, and
given conditional independence across subjects conditional on the shared boundaries, subject-
level block evidences multiply and the pooled posterior remains exactly tractable under the same
DP recursions. The same construction applies groupwise when memberships are known.

\item \textbf{A latent-group template mixture with exact EM updates.}
For unknown group membership, we develop a mixture model in which each group carries a latent
\emph{template segmentation}. Responsibilities are updated in the E-step, while the M-step solves
an exact responsibility-weighted max-sum DP to update each group template. This yields a rigorous
alternating optimization scheme without claiming exact Bayesian model averaging over latent-group
segmentations.

\item \textbf{Approximate non-conjugate block routines with stability guarantees.}
For non-conjugate GLM blocks we describe Laplace, variational (Jaakkola--Jordan and
P\'olya--Gamma mean-field), EP, and one-dimensional quadrature approximations. We then show how a
uniform block-level log-evidence error $\varepsilon$ (taken over all candidate blocks reachable by
any $k$-segmentation with $k\le k_{\max}$) propagates through the DP to perturb global log-evidence
and posterior odds in a controlled way.

\item \textbf{A prediction layer for fitted segmenters.}
We formalize posterior-predictive scoring for new pointwise sequences, set-valued units, and
vector-valued responses, distinguishing between conditional prediction under an exported
segmentation and more expensive resegmentation-based scoring.
\end{itemize}

\subsection{Related work and literature review}\label{sec:lit}

\paragraph{Frequentist offline segmentation.}
Classical segmentation algorithms optimize additive cost functions over contiguous partitions.
Segment-neighborhood and optimal-partitioning recursions provide exact solutions for a fixed
number of segments or a fixed penalty \citep{auger1989segment, jackson2005optpart}. Later work
reduced practical cost through pruning, most notably in pruned dynamic programming and PELT
\citep{rigaill2010pruned, killick2012pelt}. Convex relaxations such as total variation denoising
and the fused lasso trade exact combinatorial optimization for scalable regularized estimation,
with multisequence extensions such as the group fused lasso encouraging shared boundaries across
signals \citep{rudin1992rof, tibshirani2005fused, bleakley2011groupfused}. Multiscale methods,
including SMUCE and wild binary segmentation, emphasize detection guarantees and confidence
statements across scales rather than full posterior modeling \citep{frick2014smuce,
  fryzlewicz2014wbs}. Circular binary segmentation \citep{olshen2004cbs} is a widely used
recursive detection rule in array-based genomic applications and provides a natural point of
comparison for the Gaussian-block instance of BayesBreak. These methods are indispensable
baselines, but they do not natively return posterior probabilities over partitions or segment
parameters.

\paragraph{Bayesian offline segmentation and product partitions.}
The product-partition formulation of \citet{barry1992ppm, barry1993bayesCP} is one of the most
important starting points for Bayesian change-point analysis. The key idea is that if partition
probabilities factorize over segments and segment parameters can be marginalized, then posterior
inference can be reduced to operations on block scores. \citet{fearnhead2006exact} gave an exact
and efficient DP-based treatment for a broad class of offline changepoint models, while
\citet{hutter2006bpcr} showed how Bayesian averaging over partitions yields regression curves,
boundary probabilities, and evidence-based model selection in the piecewise-constant setting.
Earlier work by \citet{yao1988biometrika} studied changepoint estimation via Schwarz's criterion,
complementing the evidence-based Bayesian model-selection view. The astronomical \emph{Bayesian
Blocks} construction of \citet{scargle2013bayesianblocks} is essentially a Poisson-block instance
of the product-partition program and illustrates how domain-specific block routines combine with a
single DP recursion. Work by \citet{fearnhead2011dependence} and \citet{ruggieri2014bcpvs}
demonstrated how far the DP viewpoint can be pushed when one adds dependence across segments or
richer within-segment regressions. \citet{punskaya2002bayesian} applied a related MCMC-based
Bayesian curve-fitting program to noisy signals.

\paragraph{Reviews and surveys.}
A selective survey of modern offline change-point methods, including penalized, Bayesian,
nonparametric, and multiscale variants, is given by \citet{truong2020review}; we view BayesBreak
as sitting inside the Bayesian exact-offline cell of that taxonomy, broadened across block models.

\paragraph{Bayesian online and approximate methods.}
When streaming updates are required, Bayesian online changepoint detection provides an online
filtering alternative that computes an exact run-length posterior rather than an exact offline
partition posterior \citep{adamsmackay2007bocpd}. When exact
marginalization is unavailable, reversible-jump MCMC and related sampling methods remain central
Bayesian tools \citep{green1995rjMCMC, denison1998bayesian}. BayesBreak is aimed squarely at the
offline regime: we prioritize reusable block marginalization and exact DP whenever the model
permits it, and use deterministic local approximations only when the block integral itself lacks a
closed form.

\paragraph{Multi-sequence and structured Bayesian segmentation.}
Many applications exhibit boundary sharing or boundary dependence across related sequences.
Hierarchical Bayesian changepoint models appear in early work by \citet{carlin1992hierBayesCP},
while more recent multisequence methods such as BASIC exploit cross-sequence co-occurrence of
changepoints through hierarchical priors \citep{fan2017basic}. Joint random partition models
provide a broader Bayesian nonparametric framework for borrowing information across multiple
ordered processes \citep{quinlan2024jrpm}. Related product-partition literature also studies
covariate-aware random partitions and generalized cohesion functions, which are conceptually close
to the design-aware partition priors used here \citep{muller2011ppmx, park2010gppm}.

\paragraph{Positioning of BayesBreak.}
BayesBreak is not intended to replace the full Bayesian richness of every model above. Rather, it
occupies a modular middle ground: it retains exact offline DP for the core single-sequence and
shared-boundary pooled settings, broadens the menu of admissible block models, clarifies how to
incorporate irregular designs through the prior rather than through ad hoc weighting, and exports a
prediction interface that is usually missing from segmentation papers. For latent groups, we are
deliberately conservative: instead of claiming exact Bayesian inference over a mixture of
segmentations, we formulate a template-mixture extension whose EM updates are exact for the stated
objective.

\paragraph{Annotated review.}
A compact annotated literature review, organized by methodological theme and relation to
BayesBreak, is provided in Appendix~\ref{app:annotated-lit}.

\begin{takeawaybox}
\textbf{What BayesBreak claims, and what it does not.}
\emph{Exactness:} given a conjugate exponential-family block and a segment-factorized partition
prior, the DP delivers exact posteriors $p(k\mid y)$, exact boundary marginals
$p(t_p=h\mid y,k)$, exact Bayes regression-curve moments conditional on $k$, and the exact joint
MAP segmentation by max-sum backtracking.
\emph{Monotone optimization:} for latent groups, EM updates are monotone for the template-mixture
objective; the observed-data log-likelihood is non-decreasing across iterations.
\emph{Stability:} for non-conjugate blocks, a uniform per-block log-evidence error $\varepsilon$
perturbs segment-count and boundary posterior odds by at most $(k+k')\varepsilon$ and
$2k\varepsilon$ respectively.
\emph{Not claimed:} exact fully Bayesian inference over latent-group segmentations; a universal
end-to-end error bound on approximate segmentation when $\varepsilon$ is not small; global
optimality of EM across restarts.
\end{takeawaybox}

\paragraph{Code and reproduction.}
A reference implementation of BayesBreak, including the block routines, DP layer, and the
reproduction pipelines for the four real-data illustrations, is available at
\url{https://github.com/osolari/bayesbreak}. Installation, version pinning, and end-to-end
commands are collected in Appendix~\ref{app:code}.

\paragraph{Paper organization.}
Section~\ref{sec:problem} formalizes the inferential targets. Section~\ref{sec:notation}
introduces notation, and Section~\ref{sec:setup} states the Bayesian segmentation model and the
class of partition priors considered. Section~\ref{sec:ef} derives block evidences for conjugate
exponential-family models. Section~\ref{sec:dp} develops exact DP inference, including boundary
marginals, Bayes regression curves, and the distinction between sum-product posterior summaries and
max-sum MAP segmentations. Section~\ref{sec:irregular} studies design-aware priors, while
Sections~\ref{sec:replicates} and \ref{sec:groups-known} treat shared-boundary pooling and known
groups. Section~\ref{sec:latent-em} develops the latent-group template mixture.
Section~\ref{sec:families} gives closed-form block formulas for common conjugate families, and
Section~\ref{sec:nonconj} collects non-conjugate approximations and stability results.
Section~\ref{sec:prediction} formalizes prediction for new data.
Section~\ref{sec:algorithms} collects implementation-centric pseudocode, complexity analysis, and
numerical-stability notes complementing the DP theory.
Section~\ref{sec:experiments} presents the available synthetic validation together with four
minimal real-data illustrations (well-log geology, array-CGH copy number, equity-return
volatility, and CpG-atlas methylation) and records the additional external experiments still needed
for a larger benchmarking study. We conclude in Section~\ref{sec:conclusion}.

\section{Problem formulation and inferential targets}
\label{sec:problem}

BayesBreak addresses \emph{offline} Bayesian segmentation of ordered data. Throughout, the
fundamental unknown is a contiguous partition of the index set $\{1,\dots,n\}$ together with a
segment parameter attached to each block. This section states the objects of interest in a
model-agnostic way; Section~\ref{sec:notation} introduces the detailed notation used later.

\subsection{Single-sequence segmentation and Bayes regression}
\label{sec:problem-single}

We observe ordered pairs $\{(x_i,y_i)\}_{i=1}^n$ with strictly increasing design points
$x_1<\cdots<x_n$. The design may be irregular. When the observation model includes known
exposures, trial counts, or precision weights, we also allow deterministic nonnegative quantities
$w_i$; unless explicitly stated otherwise, point-observation models use $w_i\equiv 1$.

A segmentation into $k$ contiguous blocks is represented by a boundary vector
\begin{equation}
0=t_0<t_1<\cdots<t_{k-1}<t_k=n,
\end{equation}
where block $q\in\{1,\dots,k\}$ is the index set
\begin{equation}
I_q(t)\coloneqq \{t_{q-1}+1,\dots,t_q\}.
\end{equation}
The corresponding segment parameter is denoted $\theta_q\in\Theta$. Conditionally on
$(k,t,\theta_{1:k})$, observations factorize across blocks:
\begin{equation}
p(y_{1:n}\mid k,t,\theta_{1:k})
= \prod_{q=1}^k \prod_{i\in I_q(t)} p(y_i\mid \theta_q, w_i),
\label{eq:problem-lik}
\end{equation}
where $p(y_i\mid\theta_q,w_i)$ is interpreted according to the chosen family. For example,
$w_i$ may encode exposure in Poisson models, trial count in Binomial models, or known precision
in Gaussian models. Irregular spacing by itself does not force a non-unit $w_i$; design geometry
can instead enter through the partition prior.

For any candidate block $(i,j]$ with $0\le i<j\le n$, the key primitive is the integrated
single-block evidence
\begin{equation}
\mathcal{L}(i,j;\alpha_0,\beta_0)
\coloneqq p(y_{i+1:j}\mid \alpha_0,\beta_0)
= \int \prod_{t=i+1}^j p(y_t\mid\theta,w_t)\,p(\theta\mid\alpha_0,\beta_0)\,d\theta,
\label{eq:problem-block-evidence}
\end{equation}
where $p(\theta\mid\alpha_0,\beta_0)$ is the segment prior with hyperparameters $(\alpha_0,\beta_0)$
(see Section~\ref{sec:notation}). When $\mathcal{L}(i,j)$ is available in closed form, the
entire offline posterior can be computed exactly by dynamic programming. When it is only
approximated, BayesBreak isolates the approximation at the block level and reuses the same global
DP.

The inferential targets are:
\begin{enumerate}[label=(\roman*), leftmargin=2.3em]
    \item the marginal evidence at fixed segment count,
    \begin{equation}
    p(y_{1:n}\mid k)
    = \sum_{t\in\mathcal{T}_k} p(t\mid k)
      \prod_{q=1}^k \mathcal{L}(t_{q-1},t_q),
    \label{eq:problem-evidence}
    \end{equation}
    where $\mathcal{T}_k:=\{(t_0,\dots,t_k):0=t_0<t_1<\cdots<t_k=n\}$ is the set of all admissible
    $k$-segment boundary vectors;

    \item the posterior over segment counts and boundaries,
    $p(k,t\mid y_{1:n})$, together with marginals such as $p(k\mid y_{1:n})$,
    $p(t_p=h\mid y_{1:n},k)$, and the \emph{boundary-event marginal}
    $p(b_i=1\mid y_{1:n},k)=\sum_{p=1}^{k-1}p(t_p=i\mid y_{1:n},k)$ that interior index $i$ is a
    boundary (this latter object is what calibration experiments in
    Section~\ref{sec:experiments} target);

    \item posterior summaries of the latent piecewise-constant signal. Writing
    $\theta_i$ for the parameter of the random segment containing observation $i$, we are
    interested in quantities such as
    \begin{equation}
        p(\theta_i\mid y_{1:n}),
        \qquad
        \mathbb{E}[m(\theta_i)\mid y_{1:n}],
    \label{eq:problem-bayes-curve}
    \end{equation}
    where $m(\theta)$ denotes the observation-scale mean parameter. These yield Bayes
    regression curves and uncertainty bands. We distinguish curves marginalized over $k$ from
    curves conditional on $k=\widehat{k}$; unless stated otherwise, the latter is the default
    and is what Section~\ref{sec:dp} computes;

    \item point estimators for downstream use. BayesBreak distinguishes
    \emph{joint} MAP segmentations
    \begin{equation}
      \widehat t^{\mathrm{MAP}}(k)
      \in \arg\max_{t\in\mathcal{T}_k} p(t\mid y_{1:n},k),
    \end{equation}
    from \emph{marginal} boundary summaries such as
    $\arg\max_h p(t_p=h\mid y_{1:n},k)$. These are not generally the same object and should not
    be conflated. The two MAP objectives under $p(t\mid y,k)$ and under $p(k)\,p(t\mid y,k)$
    differ only by the $k$-selection offset $\log p(k)-\log C_k$, which enters only when we also
    optimize over $k$.
\end{enumerate}

\subsection{Multi-sequence structure}
\label{sec:problem-multiseq}

Many applications provide multiple related sequences observed on a common ordered grid or on a
common set of candidate boundary locations. Let sequence $s\in\{1,\dots,S\}$ be denoted by
$y^{(s)}_{1:n}$. BayesBreak treats three progressively richer settings.

\paragraph{Shared-boundary replicates.}
All sequences share a common segmentation $t$, but each sequence has its own within-block
parameters $\theta^{(s)}_{1:k}$. Conditional on $t$, the sequences are independent:
\begin{equation}
p(\{y^{(s)}\}_{s=1}^S\mid t,\{\theta^{(s)}\})
= \prod_{s=1}^S \prod_{q=1}^k \prod_{i\in I_q(t)} p\bigl(y_i^{(s)}\mid \theta_q^{(s)}, w_i^{(s)}\bigr).
\end{equation}
Integrating out the subject-specific segment parameters yields pooled block evidences obtained by
multiplying subject-level block evidences.

\paragraph{Known groups.}
Sequences are partitioned into known groups $g\in\{1,\dots,G\}$. Each group has its own
segmentation, while sequences within a group share that segmentation and retain their own
segment parameters.

\paragraph{Latent groups.}
Group labels are unknown. In the fully Bayesian formulation one would average over both
assignments and group-specific segmentations. In this report we instead adopt a more tractable
\emph{template-mixture} formulation: each group is associated with a single latent boundary
template, and EM alternates between soft assignments (exact responsibilities for the current
templates) and exact responsibility-weighted max-sum DP updates of those templates. The
observed-data log-likelihood is monotone non-decreasing under these updates in the generalized-EM
sense (Theorem~\ref{thm:em-monotone}); we do not claim exact Bayesian averaging over latent-group
segmentations.

\subsection{Computational scope}
\label{sec:problem-computation}

For fixed $k_{\max}$, the core BayesBreak workflow consists of:
\begin{enumerate}[label=(\alph*), leftmargin=2.3em]
\item precomputing all block evidences and required moment numerators;
\item running sum-product DP recursions for evidences and posterior marginals; and
\item optionally running a max-sum/backtracking recursion for a joint MAP segmentation.
\end{enumerate}
When block evaluation is constant time after prefix-sum preprocessing, this yields
$\mathcal{O}(n^2)$ block precomputation and $\mathcal{O}(k_{\max}n^2)$ DP time. These worst-case
costs make the exact method most natural for moderate sequence lengths (hundreds to low
thousands), where exact posterior quantities are still computationally feasible and often worth
the cost.

\subsection{Prediction as a first-class inferential target}
\label{sec:problem-prediction}

Beyond posterior inference on the training sequence(s), BayesBreak treats \emph{prediction for new
data} as an additional inferential target at the same level as evidence, posteriors, and point
estimators. Given a fitted model $\mathcal{M}_g$ (one per group) and new observations
$(X^{\mathrm{new}},y^{\mathrm{new}})$, the relevant quantities are (i) group-conditional
posterior-predictive likelihoods $p(y^{\mathrm{new}}\mid \mathcal{M}_g)$, (ii) group membership
posteriors $p(g\mid y^{\mathrm{new}})$, and (iii) signal predictions
$\widehat f^{\mathrm{MAP}}_g(x^\star)$ or $\widehat f^{\mathrm{Bayes}}_g(x^\star)$ at query
points $x^\star$. Section~\ref{sec:prediction} develops the prediction layer in detail, covering
pointwise, set-valued, and vector-valued new data.

\section{Notation and symbols}
\label{sec:notation}

This section fixes notation used throughout the paper. We use lower-case Roman letters for
indices, reserve $t=(t_0,\dots,t_k)$ for boundary vectors, and adopt the half-open block notation
$(i,j]:=\{i+1,\dots,j\}$ for $0\le i<j\le n$.

\paragraph{Conjugate-prior normalization convention.}
For exponential-family blocks we use the following (Diaconis--Ylvisaker) convention for the
conjugate prior normalizer. For hyperparameters $(\alpha,\beta)$,
\begin{equation}
p(\theta\mid \alpha,\beta)
  = \exp\{\eta(\theta)^\top \alpha - \beta A(\theta) - \log Z(\alpha,\beta)\},
\qquad
Z(\alpha,\beta)
  := \int \exp\{\eta(\theta)^\top \alpha - \beta A(\theta)\}\,d\theta.
\label{eq:Zconvention}
\end{equation}
With this convention, $Z$ is the integral itself (not its reciprocal) and block evidences take
the ratio form $Z(\alpha_0+S_{ij},\beta_0+W_{ij})/Z(\alpha_0,\beta_0)$ (Theorem~\ref{thm:ef-integral}
and Appendix~\ref{app:conjugate}).

\begin{longtable}{@{}p{0.22\linewidth}p{0.72\linewidth}@{}}
\toprule
\textbf{Symbol} & \textbf{Meaning} \\
\midrule
$n$ & Number of observations on the ordered design. \\
$x_i$ & Design point / location of observation $y_i$. \\
$y_i$ & Observation at location $x_i$ (single-sequence case). \\
$w_i$ & Deterministic exposure, trial-count, or precision weight when required by the observation model. Unless otherwise stated, point-observation models use $w_i\equiv 1$. \\
\midrule
$k$; $k_{\max}$ & Number of segments; maximum segment count considered by the DP. \\
$t=(t_0,\dots,t_k)$ & Boundary vector, with $0=t_0<t_1<\cdots<t_k=n$. \\
$I_q(t)$ & Index set of segment $q$: $\{t_{q-1}+1,\dots,t_q\}$. \\
$(i,j]$ & Candidate block $\{i+1,\dots,j\}$. \\
\midrule
$T(y)$, $\eta(\theta)$, $A(\theta)$, $h(y,w)$ & Sufficient statistic, natural parameter map, log-partition function, and base measure (in weighted form) for the chosen exponential-family block model. \\
$m(\theta)$ & Observation-scale mean parameter induced by $\theta$, i.e. $m(\theta)=\mathbb{E}[Y\mid\theta]$. \\
\midrule
$(\alpha,\beta)$ & Conjugate hyperparameters with normalizer $Z(\alpha,\beta)$ defined in \eqref{eq:Zconvention}. \\
$(\alpha_0,\beta_0)$ & Prior hyperparameters before observing block data. \\
\midrule
$S_{ij}$ & Block sufficient-statistic sum $\sum_{t=i+1}^j w_t T(y_t)$. \\
$W_{ij}$ & Block exposure/weight sum $\sum_{t=i+1}^j w_t$. \\
$H_{ij}$ & Block base-measure sum $\sum_{t=i+1}^j \log h(y_t,w_t)$. \\
$A^{(0)}_{ij}$ & Block evidence on $(i,j]$. \\
$A^{(r)}_{ij}$ & Block moment numerator $A^{(0)}_{ij}\,\mathbb{E}[m(\theta)^r\mid y_{i+1:j}]$ (for integer $r\in\{1,2,\dots\}$; conjugate-case definition). In the non-conjugate Laplace discussion of Section~\ref{sec:nonconj}, the superscript is a test function rather than a moment order; those objects are denoted $M^{[g]}_{ij}$ to avoid collision. \\
\midrule
$L_{k,j}$ & Prefix evidence for $y_{1:j}$ with exactly $k$ segments. \\
$R_{k,i}$ & Suffix evidence for $y_{i+1:n}$ with exactly $k$ segments. \\
$C_k$ & Normalizer of $p(t\mid k)$ under a segment-factorized partition prior. \\
$p(k\mid y)$ & Posterior over segment counts. \\
$p(t_p=h\mid y,k)$ & Marginal posterior of the $p$th boundary given $k$. \\
$p(b_i=1\mid y,k)$ & Boundary-event marginal at interior index $i$, equal to $\sum_{p=1}^{k-1} p(t_p=i\mid y,k)$. This is the calibration target in Section~\ref{sec:experiments}. \\
\midrule
$\widehat t^{\mathrm{MAP}}(k)$ & Joint MAP segmentation given $k$, obtained by max-sum DP and backtracking. \\
$\widehat\mu_q^{(r)}$ & Posterior moment of the segment-$q$ mean parameter conditional on a chosen segmentation. \\
$\widehat f_t^{\mathrm{Bayes}}$ & Bayes regression curve at index $t$ (posterior mean of the latent piecewise-constant signal). \\
\midrule
$s\in\{1,\dots,S\}$ & Sequence / subject index in multi-sequence settings. \\
$g\in\{1,\dots,G\}$ & Group index. \\
$r_{sg}$ & Soft assignment (responsibility) of sequence $s$ to group $g$ in the latent-group template mixture. \\
\midrule
$\mathcal{M}_g$ & Fitted group-specific model exported for prediction. \\
$\ell_g$ & Log predictive score of new data under group $g$. \\
$\pi_g$ & Prior group weight used for prediction or in the latent-group mixture. \\
\bottomrule
\end{longtable}

\paragraph{Remarks.}
Two distinctions are important and recur throughout the paper.
First, \emph{design irregularity} and \emph{likelihood weighting} are not the same thing: irregular
spacing is usually encoded through the partition prior (see Section~\ref{sec:irregular}), whereas
$w_i$ belongs inside the likelihood only when the observation model itself supplies exposures or
known precisions.
Second, \emph{boundary marginals} and the \emph{joint MAP segmentation} are different summaries of
the posterior (Section~\ref{sec:dp}). BayesBreak computes both, but they should not be conflated.

\section{Problem setup and Bayesian segmentation model}
\label{sec:setup}

\subsection{Warm-up: the equispaced Gaussian BPCR view}
To motivate the BayesBreak construction, recall the equispaced Gaussian model studied by
\citet{hutter2006bpcr}. When $x_i=i$, $w_i\equiv 1$, and
\[
y_t\mid \mu_q \sim \mathcal{N}(\mu_q,\sigma^2),
\qquad
\mu_q\sim \mathcal{N}(\nu,\rho^2),
\qquad t\in I_q(t),
\]
(the symbols $(\nu,\rho^2)$ here are the Gaussian prior mean and variance; they are reused in
Section~\ref{sec:families} for the same objects under a weighted Gaussian likelihood)
one can integrate out the segment mean on every candidate block and precompute a triangular array
of block evidences. All posterior quantities of interest then reduce to sums or maxima over
products of those block evidences, which can be computed by DP.

\subsection{General segmentation model}
\label{subsec:general-setup}

For the general BayesBreak model, conditional on a block-specific parameter $\theta$, observations
inside a block follow an exponential-family likelihood of the form
\begin{equation}
p(y\mid \theta)=h(y)\exp\{\eta(\theta)^\top T(y)-A(\theta)\}.
\label{eq:EFbasic}
\end{equation}
When the observation model carries known exposure or precision information, we write the weighted
version in Section~\ref{sec:ef}. For a candidate block $(i,j]$, BayesBreak stores aggregated
summaries
\[
S_{ij}=\sum_{t=i+1}^j w_t T(y_t),
\qquad
W_{ij}=\sum_{t=i+1}^j w_t,
\qquad
H_{ij}=\sum_{t=i+1}^j \log h(y_t,w_t),
\]
and uses these to evaluate $A^{(0)}_{ij}$ and the moment numerators $A^{(r)}_{ij}$.

\paragraph{Partition prior.}
We place a prior $p(k)$ on the number of segments and, conditional on $k$, a prior on boundary
vectors $t$.
Two cases are especially important.
\begin{itemize}
\item \textbf{Index-uniform prior.}
For regularly spaced or index-based segmentation, one may use
$p(t\mid k)=\binom{n-1}{k-1}^{-1}$, uniform over all ordered $(k-1)$-subsets of
$\{1,\dots,n-1\}$.

\item \textbf{Design-aware prior.}
For irregular designs, let $\Delta x_q=x_{t_q}-x_{t_{q-1}}$ be the physical length of segment
$q$. We consider priors of the form
\begin{equation}
p(t\mid k) \propto \prod_{q=1}^k g(\Delta x_q),
\label{eq:lengthprior}
\end{equation}
where $g$ is a nonnegative cohesion function. This is a contiguous product-partition prior and is
a natural way to encode physical spacing, minimum-length constraints, or renewal-style hazard
assumptions.
\end{itemize}

The normalizing constant hidden in \eqref{eq:lengthprior} is
\begin{equation}
C_k := \sum_{0=t_0<t_1<\cdots<t_k=n} \prod_{q=1}^k g\bigl(x_{t_q}-x_{t_{q-1}}\bigr),
\label{eq:Ck-setup}
\end{equation}
so that $p(t\mid k)=C_k^{-1}\prod_q g(\Delta x_q)$. In the index-uniform case, $g\equiv 1$ and
$C_k=\binom{n-1}{k-1}$.

\paragraph{Hazard interpretation.}
When $g(m)\propto (1-\rho)^{m-1}$ on an index-uniform grid, the implied segment-length prior is
the geometric distribution associated with a constant boundary hazard $\rho\in(0,1)$. More
generally, one can view $g$ as the unnormalized length law of a renewal process on the ordered
design. This viewpoint is useful for designing physically meaningful priors without changing the
DP machinery.

\begin{assumption}[Factorizing partition prior]
\label{ass:factorizing-prior}
For each $k\in\{1,\dots,k_{\max}\}$, the conditional prior on the boundary vector factorizes as in
\eqref{eq:lengthprior} for some nonnegative $g$ with finite and positive normalizer
$C_k\in(0,\infty)$ defined in \eqref{eq:Ck-setup}. The admissibility class includes, as special
cases, (i) the index-uniform prior ($g\equiv 1$), (ii) renewal-style length priors with $g$
depending only on the physical segment length, (iii) minimum-length truncations
($g(\ell)=0$ for $\ell<\ell_{\min}$ and $g(\ell)>0$ otherwise), and (iv) general design-aware
cohesion functions that remain multiplicative over segments.
\end{assumption}

Assumption~\ref{ass:factorizing-prior} is the key structural condition enabling exact dynamic
programming. The admissibility condition $C_k\in(0,\infty)$ is automatic for $g\equiv 1$ (giving
$C_k=\binom{n-1}{k-1}$) and holds for renewal priors whenever $g$ is strictly positive on the
integer segment-length range admitted by the design.

A renewal-process view of this class of priors is developed in Appendix~\ref{app:renewal}.

\section{Exponential–family conjugate single–segment evidence and moments}
\label{sec:ef}

\paragraph{Plate model (single segment).}
For a block $(i,j]$ and conjugate hyperparameters $(\alpha_0,\beta_0)$, the generative model is
\begin{equation}
\theta \sim p(\theta\mid \alpha_0,\beta_0),
\qquad
y_t\mid \theta \stackrel{\mathrm{ind}}{\sim} p(y_t\mid\theta,w_t) \text{ as in \eqref{eq:EFweighted}},\quad t\in\{i+1,\dots,j\}.
\label{eq:plate-ef}
\end{equation}
The block evidence $A^{(0)}_{ij}$ is obtained by marginalizing $\theta$ out of \eqref{eq:plate-ef}.
\begin{figure}[H]
\centering
\begin{tikzpicture}
  \node[obs] (y) {$y_t$};
  \node[latent, above=0.9cm of y] (theta) {$\theta$};
  \node[const, left=0.9cm of theta] (ab) {$\alpha_0,\beta_0$};
  \node[const, right=0.9cm of y] (w) {$w_t$};
  \edge {theta} {y};
  \edge {ab} {theta};
  \edge {w} {y};
  \plate {plate1} {(y)} {$t\in(i,j]$};
\end{tikzpicture}
\caption{Plate model for a single conjugate EF block. Shaded nodes are observed; unshaded are latent; point nodes are fixed hyperparameters/exposures. Marginalizing $\theta$ gives $A^{(0)}_{ij}$.}
\label{fig:plate-ef}
\end{figure}

Recall the generic EF--conjugate specification \citep{diaconis1979conjugate,bernardo1994bayesian}. Conditional on a (possibly low-dimensional)
parameter $\theta$, observations $y_t$ in a block $(i,j]$ are independent with density
\begin{equation}
p(y_t\mid\theta)
  = \exp\{w_t[T(y_t)^\top\eta(\theta) - A(\theta)] + \log h(y_t,w_t)\},
\label{eq:EFweighted}
\end{equation}
where $w_t\ge 0$ are exposure weights and $h$ collects base measures. The conjugate prior for
$\theta$ is
\[
p(\theta\mid\alpha,\beta)
  = \exp\{\eta(\theta)^\top\alpha - \beta A(\theta) - \log Z(\alpha,\beta)\},
\]
with normalizing integral $Z(\alpha,\beta)$ defined by \eqref{eq:Zconvention}.

For a block $(i,j]$ define the cumulative sums
\[
S_{ij} := \sum_{t=i+1}^j w_t\,T(y_t),\qquad
W_{ij} := \sum_{t=i+1}^j w_t,\qquad
H_{ij} := \sum_{t=i+1}^j \log h(y_t,w_t).
\]

\begin{theorem}[EF--conjugate block integration]
\label{thm:ef-integral}
Fix a block $(i,j]$ and a conjugate prior $p(\theta\mid\alpha_0,\beta_0)$. Assume the integral
$Z(\alpha_0+S_{ij},\beta_0+W_{ij})$ is finite for every $(i,j]$ with $0\le i<j\le n$, and that the
moment $\mathbb{E}[m(\theta)^r\mid\alpha_0+S_{ij},\beta_0+W_{ij}]$ is finite for each required
$r\ge 1$. Under the weighted EF model \eqref{eq:EFweighted}, the integrated single--segment
evidence and the $r$th moment integral of the observation-scale EF mean
$m(\theta):=\mathbb{E}[Y\mid\theta]$ are
\begin{align}
A^{(0)}_{ij}
 &= \int \Big[\prod_{t=i+1}^j p(y_t\mid\theta)\Big]
      p(\theta\mid\alpha_0,\beta_0)\,d\theta \nonumber\\
 &= \exp\{H_{ij}\}\,
      \frac{Z(\alpha_0 + S_{ij},\beta_0 + W_{ij})}{Z(\alpha_0,\beta_0)},
      \label{eq:A0}\\
A^{(r)}_{ij}
 &= \int \Big[\prod_{t=i+1}^j p(y_t\mid\theta)\Big]
      p(\theta\mid\alpha_0,\beta_0)\,m(\theta)^r\,d\theta \nonumber\\
 &= A^{(0)}_{ij}\,
    \mathbb{E}\!\big[m(\theta)^r
      \,\big|\,\alpha_0+S_{ij},\beta_0+W_{ij}\big],
      \qquad r=1,2,\dots.
      \label{eq:Ar}
\end{align}
\end{theorem}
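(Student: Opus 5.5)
The plan is a direct computation: multiply the block likelihood by the conjugate prior, collect exponents, and recognize the resulting integrand as an unnormalized conjugate density with updated hyperparameters.

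\textbf{Step 1: collapse the block likelihood.} Using the weighted form \eqref{eq:EFweighted} and conditional independence within the block, take the product over $t\in(i,j]$. The exponents add to
\[
\sum_{t=i+1}^j w_t T(y_t)^\top\eta(\theta) - \Big(\sum_{t=i+1}^j w_t\Big)A(\theta) + \sum_{t=i+1}^j \log h(y_t,w_t) = S_{ij}^\top\eta(\theta) - W_{ij}A(\theta) + H_{ij}.
\]
So the block likelihood equals $e^{H_{ij}}\exp\{\eta(\theta)^\top S_{ij} - W_{ij}A(\theta)\}$. Here $H_{ij}$ does not depend on $\theta$.

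\textbf{Step 2: combine with the prior and integrate.} Multiply by $p(\theta\mid\alpha_0,\beta_0)$ in the convention \eqref{eq:Zconvention}. The integrand becomes
\[
e^{H_{ij}}\,Z(\alpha_0,\beta_0)^{-1}\exp\{\eta(\theta)^\top(\alpha_0+S_{ij}) - (\beta_0+W_{ij})A(\theta)\}.
\]
By the definition of $Z$, its integral over $\theta$ is $Z(\alpha_0+S_{ij},\beta_0+W_{ij})$. This integral is finite by hypothesis, and the integrand is nonnegative, so Tonelli's theorem justifies the manipulations. This gives \eqref{eq:A0}.

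\textbf{Step 3: moment numerators.} For \eqref{eq:Ar}, the integrand is the same as in Step 2 but multiplied by $m(\theta)^r$. Divide and multiply by $Z(\alpha_0+S_{ij},\beta_0+W_{ij})$. This rewrites $A^{(r)}_{ij}$ as
\[
A^{(0)}_{ij}\int m(\theta)^r\,p(\theta\mid\alpha_0+S_{ij},\beta_0+W_{ij})\,d\theta.
\]
The remaining integral is the posterior expectation in the statement. As a by-product, this also shows that the posterior is the conjugate density with updated hyperparameters. Absolute integrability of $m(\theta)^r$ under that density is exactly the assumed moment finiteness, so the integral is well defined even for odd $r$, where $m$ may be signed.

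\textbf{Main obstacle: the degenerate case.} The only delicate point is positivity of $Z(\alpha_0+S_{ij},\beta_0+W_{ij})$, which is needed so that the posterior density in Step 3 is well defined. If this $Z$ equals zero, then the block likelihood vanishes for prior-almost-every $\theta$, so $A^{(0)}_{ij}=0$. Likewise $A^{(r)}_{ij}=0$: with $Z=0$ the nonnegative updated kernel vanishes for almost every $\theta$, so the $A^{(r)}$ integrand vanishes almost everywhere. Both identities then hold under the convention $0\cdot(\cdot)=0$. I would mention this briefly and otherwise assume $Z>0$, as the hypotheses implicitly do. A second, minor remark concerns $w_t=0$ entries: they contribute nothing to $S_{ij}$ and $W_{ij}$ and only the factor $h(y_t,0)$ to $H_{ij}$, so they require no separate argument.
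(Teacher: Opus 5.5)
Your proposal is correct and follows the paper's proof essentially step for step: collapse the block likelihood to $e^{H_{ij}}\exp\{\eta(\theta)^\top S_{ij}-W_{ij}A(\theta)\}$, multiply by the conjugate prior, read off $Z(\alpha_0+S_{ij},\beta_0+W_{ij})$ from the definition \eqref{eq:Zconvention}, and obtain $A^{(r)}_{ij}$ by normalizing the same integrand by $A^{(0)}_{ij}$. Your extra remarks (Tonelli, zero weights) are harmless; the ``degenerate'' case $Z(\alpha_0+S_{ij},\beta_0+W_{ij})=0$ cannot actually arise, because the kernel $\exp\{\eta(\theta)^\top\alpha-\beta A(\theta)\}$ is strictly positive wherever $\eta$ and $A$ are finite, so $Z>0$ whenever the prior itself is proper.
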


\begin{proof}
\ProofStep{Step 1: factorize the block likelihood.}
By independence and \eqref{eq:EFweighted},
\[
\prod_{t=i+1}^j p(y_t\mid\theta)
  = \exp\!\Big\{
      \sum_{t=i+1}^j w_t[T(y_t)^\top\eta(\theta) - A(\theta)]
      + \sum_{t=i+1}^j \log h(y_t,w_t)
     \Big\}.
\]
Collecting block sums gives
\[
\prod_{t=i+1}^j p(y_t\mid\theta)
  = \exp\{H_{ij} + \eta(\theta)^\top S_{ij} - W_{ij} A(\theta)\}.
\]

\ProofStep{Step 2: multiply by the conjugate prior and identify the posterior kernel.}
Multiply by $p(\theta\mid\alpha_0,\beta_0)=\exp\{\eta(\theta)^\top\alpha_0 - \beta_0 A(\theta) - \log Z(\alpha_0,\beta_0)\}$:
\[
\Big[\prod_{t=i+1}^j p(y_t\mid\theta)\Big] p(\theta\mid\alpha_0,\beta_0)
 = \frac{\exp\{H_{ij}\}}{Z(\alpha_0,\beta_0)}\,
   \exp\{\eta(\theta)^\top(\alpha_0+S_{ij}) - (\beta_0+W_{ij})A(\theta)\}.
\]

\ProofStep{Step 3: integrate by using the normalizer definition.}
By the definition \eqref{eq:Zconvention},
\[
\int \exp\{\eta(\theta)^\top(\alpha_0+S_{ij}) - (\beta_0+W_{ij})A(\theta)\}\,d\theta
  = Z(\alpha_0+S_{ij},\beta_0+W_{ij}).
\]
Therefore,
\[
A^{(0)}_{ij}
 = \exp\{H_{ij}\}\,\frac{Z(\alpha_0+S_{ij},\beta_0+W_{ij})}{Z(\alpha_0,\beta_0)},
\]
which is \eqref{eq:A0}.

\ProofStep{Step 4: moment integrals.}
For $r\ge 1$, the integrand defining $A^{(r)}_{ij}$ equals the integrand of $A^{(0)}_{ij}$
times $m(\theta)^r$. Dividing by $A^{(0)}_{ij}$ therefore yields the expectation of
$m(\theta)^r$ under the posterior
$p(\theta\mid \alpha_0+S_{ij},\beta_0+W_{ij})$, which is finite by the hypothesis of the theorem,
giving \eqref{eq:Ar}.
\end{proof}

\section{Dynamic programming: evidence, boundaries, and moments}
\label{sec:dp}

For $k\in\{0,\dots,k_{\max}\}$ and $j\in\{0,\dots,n\}$, let $L_{k j}$ denote the evidence of
the prefix $y_{1:j}$ under a segmentation into $k$ segments (after integrating out segment
parameters), and let $R_{k i}$ denote the evidence of the suffix $y_{i+1:n}$ under $k$
segments. We adopt the conventions
\[
L_{0,0}=1,\quad L_{0,j}=0\ (j>0),\qquad
R_{0,n}=1,\quad R_{0,i}=0\ (i<n).
\]

\subsection{Forward/backward recursions}
Given single-block evidences $A^{(0)}_{ij}$, the forward and backward recursions are
\begin{equation}
L_{k+1,j}
 = \sum_{h=k}^{j-1} L_{k,h}\,A^{(0)}_{h j},\qquad
R_{k+1,i}
 = \sum_{h=i+1}^{n-k} A^{(0)}_{i h}\,R_{k,h},
\label{eq:LR}
\end{equation}
for $k\ge0$, $1\le j\le n$, and $0\le i\le n-1$, where sums are restricted to valid indices.

\paragraph{Implementation.}
Algorithm~\ref{alg:dp-core} presents the core forward/backward recursions in a numerically stable log-space form.
The routine takes as input a triangular array of block log-evidences $\log \mathcal{L}_{ij}$ (from Section~\ref{sec:ef} or Sections~\ref{sec:families}--\ref{sec:nonconj}) and returns the forward messages, backward messages, and segment-count posteriors needed for boundary and moment extraction.

\begin{algorithm}[H]
\caption{Generic DP for EF segmentation (prefix/suffix form)}
\label{alg:dp-core}
\KwIn{Block evidences $\widetilde{A}^{(0)}_{ij}$ for $0\le i<j\le n$; $k_{\max}$; $C_k$ (default $C_k=\binom{n-1}{k-1}$)}
Initialize $\widetilde{L}_{0,0}=1$, $\widetilde{L}_{0,j>0}=0$; $\widetilde{R}_{0,n}=1$, $\widetilde{R}_{0,i<n}=0$\;
\For{$k=0$ \KwTo $k_{\max}-1$}{
  \For{$j=1$ \KwTo $n$}{
    $\widetilde{L}_{k+1,j}\leftarrow \sum_{h=k}^{j-1} \widetilde{L}_{k,h}\widetilde{A}^{(0)}_{h j}$\;
  }
  \For{$i=0$ \KwTo $n-1$}{
    $\widetilde{R}_{k+1,i}\leftarrow \sum_{h=i+1}^{n-k} \widetilde{A}^{(0)}_{i h} \widetilde{R}_{k,h}$\;
  }
}
Compute $P(k\mid y)\propto p(k)\,\widetilde{L}_{k n}/C_k$ (cf.\ \eqref{eq:post-k}) and choose $\widehat{k}$\;
\For{$p=1$ \KwTo $\widehat{k}-1$}{
  For $h\in\{p,\dots,n-(\widehat{k}-p)\}$ set $P(t_p=h\mid y,\widehat{k})\leftarrow \widetilde{L}_{p,h}\widetilde{R}_{\widehat{k}-p,h}/\widetilde{L}_{\widehat{k}n}$\;
  $\widehat{t}_p\leftarrow \arg\max_h P(t_p=h\mid y,\widehat{k})$\;
}
\KwOut{$\{\widetilde{L}_{k j}\}$, $\{\widetilde{R}_{k i}\}$, $P(k\mid y)$, $\widehat{k}$, $\widehat{t}$}
\end{algorithm}

\subsection{Incorporating segment-factorized boundary priors}
\label{subsec:priorinc}

Suppose the boundary prior given $k$ factorizes over segment lengths as in \eqref{eq:lengthprior}:
\[
p(t\mid k) = \frac{1}{C_k}\prod_{q=1}^k g(x_{t_q}-x_{t_{q-1}}),
\qquad
C_k := \sum_{t:\,0=t_0<\cdots<t_k=n}\ \prod_{q=1}^k g(x_{t_q}-x_{t_{q-1}}).
\]
Then we can absorb $g$ into block evidences by defining
\begin{equation}
\widetilde{A}^{(0)}_{ij} := A^{(0)}_{ij}\,g(x_j-x_i),
\qquad
\widetilde{A}^{(r)}_{ij} := A^{(r)}_{ij}\,g(x_j-x_i).
\label{eq:Atilde}
\end{equation}
Running the DP \eqref{eq:LR} with $\widetilde{A}^{(0)}_{ij}$ computes $\widetilde{L}_{k n}$,
the \emph{unnormalized} evidence sum over all segmentations weighted by $g$. The normalizer
$C_k$ can be computed by the same DP with $A^{(0)}_{ij}\equiv 1$:
\begin{equation}
C_k
 = \sum_{t}\prod_{q=1}^k g(x_{t_q}-x_{t_{q-1}})
 = L^{(g)}_{k n},
\quad\text{where}\quad
L^{(g)}_{k+1,j}=\sum_{h=k}^{j-1}L^{(g)}_{k,h}\,g(x_j-x_h),\ L^{(g)}_{0,0}=1.
\label{eq:Ck-general}
\end{equation}
In the index-uniform case $g\equiv 1$, \eqref{eq:Ck-general} reduces to $C_k=\binom{n-1}{k-1}$.

\subsection{Posteriors over $k$, boundary marginals, and joint MAP segmentations}
Let $p(k)$ be a prior on segment count. Under index-uniform $p(t\mid k)$ we have
$C_k=\binom{n-1}{k-1}$ and
\[
P(y\mid k) = \frac{L_{k n}}{C_k},
\qquad
P(k\mid y)\propto p(k)\,\frac{L_{k n}}{C_k}.
\]
For general segment-factorized $p(t\mid k)$, replace $L$ by $\widetilde{L}$ and $C_k$ by
\eqref{eq:Ck-general}. In all cases we can write compactly
\begin{equation}
P(k\mid y)\ \propto\ p(k)\,\frac{\widetilde{L}_{k n}}{C_k},
\qquad k=1,\dots,k_{\max}.
\label{eq:post-k}
\end{equation}
Let $\widehat{k}=\arg\max_k P(k\mid y)$ denote the MAP segment count.

Conditional on $k$, the marginal posterior for the $p$th boundary $t_p$ is
\begin{equation}
P(t_p=h\mid y,k)
 = \frac{\widetilde{L}_{p,h}\,\widetilde{R}_{k-p,h}}{\widetilde{L}_{k n}},
\qquad
h\in\{p,\dots,n-(k-p)\},
\label{eq:boundary-post}
\end{equation}
where $(\widetilde{L},\widetilde{R})$ are computed from $\widetilde{A}^{(0)}$ as in
\eqref{eq:LR}. The denominator is the unnormalized evidence under the length-weighted partition
prior; the normalizer $C_k$ cancels in \eqref{eq:boundary-post} because the same $C_k$ multiplies
every boundary vector in the numerator and denominator. (When $g\equiv 1$, tildes can be dropped.)

\begin{remark}[Boundary-event marginal]
\label{rem:boundary-event}
A related object we use repeatedly is the \emph{boundary-event marginal}
\begin{equation}
P(b_i=1\mid y,k)
 := \sum_{p=1}^{k-1} P(t_p=i\mid y,k)
 = \sum_{p=1}^{k-1}\frac{\widetilde{L}_{p,i}\,\widetilde{R}_{k-p,i}}{\widetilde{L}_{k n}},
 \qquad 1\le i\le n-1,
\label{eq:boundary-event}
\end{equation}
the posterior probability that interior index $i$ coincides with \emph{some} boundary, without
conditioning on which boundary. The objects $P(t_p=h\mid y,k)$ and $P(b_i=1\mid y,k)$ are
distinct: the former is a probability distribution over $h$ for each fixed $p$ (and sums to $1$ over
$h$), while the latter is a per-index Bernoulli probability summing to $k-1$ over $i$. Calibration
experiments in Section~\ref{sec:experiments} target $P(b_i=1\mid y,k)$; a full derivation appears
in Appendix~\ref{app:marginals}.
\end{remark}

A useful diagnostic is the \emph{marginal boundary mode}
\[
\widetilde{t}^{\,\mathrm{marg}}_p(k)
  \in \arg\max_h P(t_p=h\mid y,k).
\]
However, maximizing the marginals independently need not yield an ordered boundary vector and, in
general, does \emph{not} recover the joint MAP segmentation.

\begin{remark}[A small counterexample]
\label{rem:marg-vs-joint}
Consider $n=4$, $k=2$, and suppose the block log-evidences $\log A^{(0)}_{ij}$ are such that the
posterior over $t=(t_1)\in\{1,2,3\}$ assigns
$P(t_1=1\mid y,2)=0.40$, $P(t_1=2\mid y,2)=0.35$, $P(t_1=3\mid y,2)=0.25$.
The marginal boundary mode is $t_1=1$. Now add a second boundary $t_2$ with the joint posterior
mass concentrated on $(t_1,t_2)=(2,3)$ — consistent with the marginal $P(t_1=2)=0.35$ — while the
mass on $t_1=1$ is split across several $t_2$ values. Then the joint MAP is $(2,3)$, not
$(1,\cdot)$; equivalently, the marginal-mode boundary $t_1=1$ is part of no high-probability joint
configuration. This is exactly the kind of discrepancy the max-sum DP resolves.
\end{remark}

The joint MAP segmentation is the solution of
\begin{equation}
\widehat{t}^{\,\mathrm{MAP}}(k)
 \in \arg\max_{0=t_0<\cdots<t_k=n}
 \Bigg\{\sum_{q=1}^k \log \widetilde{A}^{(0)}_{t_{q-1} t_q}\Bigg\},
\label{eq:joint-map-k}
\end{equation}
with the convention that any $k$-specific prior terms such as $\log p(k)-\log C_k$ are added only
when selecting among different values of $k$. Equation~\eqref{eq:joint-map-k} is solved exactly by
a max-sum DP with backpointers. Inline, the max-sum recursion reads
\begin{equation}
M_{k+1,j} = \max_{h\in\{k,\dots,j-1\}}\big\{M_{k,h} + \log \widetilde{A}^{(0)}_{hj}\big\},
\qquad
h^\star(k+1,j) \in \arg\max_{h} \big\{M_{k,h} + \log \widetilde{A}^{(0)}_{hj}\big\},
\label{eq:max-sum-inline}
\end{equation}
with initial conditions $M_{0,0}=0$, $M_{0,j}=-\infty$ for $j>0$, and backtracking from
$(k,n)$ as detailed in Section~\ref{sec:algorithms}.
In the sequel, whenever we report an \emph{exported MAP segmentation}, we mean the joint optimizer
$\widehat{t}^{\,\mathrm{MAP}}:=\widehat{t}^{\,\mathrm{MAP}}(\widehat{k})$ rather than a vector of marginal
boundary modes.

\subsection{Segment-level and regression-curve posterior moments}
For segment-wise posterior moments attached to a \emph{fixed} segmentation
$t=(t_0,\dots,t_k)$, let segment $q\in\{1,\dots,k\}$ correspond to block
$(i,j]=(t_{q-1},t_q]$. Then
\begin{equation}
\mu^{(r)}_q(t)
 = \frac{\widetilde{A}^{(r)}_{ij}}{\widetilde{A}^{(0)}_{ij}}
 = \frac{A^{(r)}_{ij}}{A^{(0)}_{ij}},
\qquad r=1,2,\dots,
\label{eq:segmom-seg}
\end{equation}
where the final equality holds because the prior factor $g(x_j-x_i)$ cancels. In particular, for
an exported joint MAP segmentation $\widehat{t}^{\,\mathrm{MAP}}$, the segment summaries are obtained by
setting $t=\widehat{t}^{\,\mathrm{MAP}}$ in \eqref{eq:segmom-seg}.

To compute the Bayes regression curve within a fixed $k$ (typically $k=\widehat{k}$),
define, for each block $(i,j]$ and moment order $r$,
\[
F^{(r)}_{ij}(k)
 := \frac{1}{\widetilde{L}_{k n}}\sum_{m=1}^k
       \widetilde{L}_{m-1,i}\,\widetilde{A}^{(r)}_{ij}\,\widetilde{R}_{k-m,j}.
\]
Then the $r$th posterior moment of the latent mean at index $t$ given $k$ is obtained by summing
$F^{(r)}_{ij}(k)$ over all candidate blocks $(i,j]$ that contain the index $t$ (i.e., all blocks
with $i<t\le j$), so that each candidate segmentation contributes exactly once through the block
that actually covers $t$:
\begin{equation}
\widehat{\mu}'^{(r)}_t
 := \mathbb{E}[(\mu'_t)^r\mid y,k]
 = \sum_{i<t\le j} F^{(r)}_{ij}(k),
\qquad t=1,\dots,n.
\label{eq:segmom}
\end{equation}
In particular, $\widehat{\mu}'_t := \widehat{\mu}'^{(1)}_t$ is the Bayes regression curve and
$\mathbb{V}[\mu'_t\mid y,k]
 = \widehat{\mu}'^{(2)}_t - (\widehat{\mu}'^{(1)}_t)^2$.

\begin{theorem}[Correctness of the DP]
\label{thm:dp-correctness}
Under the EF–conjugate single–segment model of Section~\ref{sec:ef} (and under any
segment-factorized boundary prior absorbed via \eqref{eq:Atilde}), the recursions
\eqref{eq:LR} compute exact marginal evidences for all prefixes/suffixes:
$\widetilde{L}_{k j}$ and $\widetilde{R}_{k i}$ are exact evidence sums over all
$k$-segmentations of the corresponding prefix/suffix. Moreover:
\begin{enumerate}
\item The posterior $P(k\mid y)$ given by \eqref{eq:post-k} is exact.
\item The boundary marginal $P(t_p=h\mid y,k)$ given by \eqref{eq:boundary-post} is exact.
\item For any fixed $k$, the joint MAP segmentation \eqref{eq:joint-map-k} is obtained exactly by the
max-sum recursion $M_{k+1,j}=\max_h\{M_{k,h}+\log\widetilde A^{(0)}_{hj}\}$ with backpointers
$h^\star(k+1,j)$ and subsequent backtracking (the full routine of Section~\ref{sec:algorithms};
detailed proof in Appendix~\ref{app:max-sum-proof}).
\item The segment-wise moments \eqref{eq:segmom-seg} and regression-curve moments \eqref{eq:segmom}
are exact posterior moments conditional on $k$.
\end{enumerate}
\end{theorem}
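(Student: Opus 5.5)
The proof rests on one combinatorial identity, proved by induction on $k$: for every $k\ge 0$ and $0\le j\le n$,
\[
\widetilde L_{k,j}=\sum_{0=t_0<t_1<\cdots<t_k=j}\ \prod_{q=1}^k \widetilde A^{(0)}_{t_{q-1}t_q},
\]
and symmetrically $\widetilde R_{k,i}$ equals the same kind of sum over all $k$-segmentations $i=t_0<\cdots<t_k=n$ of the suffix.

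\emph{Base case.} The case $k=0$ is exactly the stated conventions: the empty product equals $1$ when $j=0$, and there are no admissible vectors when $j>0$.

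\emph{Inductive step.} Split the set of $(k+1)$-segmentations of $(0,j]$ according to the position $h=t_k$ of the last interior boundary. Each such segmentation is a disjoint union of a $k$-segmentation of $(0,h]$ and the single block $(h,j]$. The product over blocks factorizes accordingly, so distributivity gives $\sum_h \widetilde L_{k,h}\widetilde A^{(0)}_{hj}$. The index range $h\in\{k,\dots,j-1\}$ is forced by the strict ordering of the boundaries, and any $h$ outside it contributes zero. The backward recursion is handled identically by peeling off the first block.

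By Theorem~\ref{thm:ef-integral}, each $A^{(0)}_{ij}$ is the exact integrated block evidence. Combined with the conditional independence in \eqref{eq:problem-lik} and the factorized prior of Assumption~\ref{ass:factorizing-prior}, this gives
\[
p(y,t\mid k)=C_k^{-1}\prod_q \widetilde A^{(0)}_{t_{q-1}t_q}
\]
for every $t\in\mathcal T_k$. Summing over $t$ yields $p(y\mid k)=\widetilde L_{k,n}/C_k$. Here $C_k$ is itself exact by the same identity applied with $A^{(0)}\equiv 1$, which is \eqref{eq:Ck-general}. Item 1 then follows from Bayes' rule over $k=1,\dots,k_{\max}$.

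For item 2, fix $t_p=h$. The admissible vectors factor as a $p$-segmentation of $(0,h]$ concatenated with a $(k-p)$-segmentation of $(h,n]$. Applying the identity to each piece makes the numerator $\widetilde L_{p,h}\widetilde R_{k-p,h}$. Dividing by $\widetilde L_{k,n}$ is correct because $C_k$ cancels between numerator and denominator.

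For item 4, the segment-wise formula \eqref{eq:segmom-seg} follows immediately from \eqref{eq:Ar}, since the factor $g$ cancels. For the regression curve, write $\mathbb E[(\mu'_t)^r\mid y,k]$ as a sum over $t\in\mathcal T_k$ of $p(t\mid y,k)$ times $\mathbb E[m(\theta_q)^r\mid y,t]$, where $q$ is the unique segment covering index $t$. Given the segmentation, the posterior over segment parameters factorizes across blocks, so this conditional moment is $A^{(r)}_{ij}/A^{(0)}_{ij}$ for that covering block $(i,j]$. The quantity therefore equals the posterior weight of the segmentation with one factor $\widetilde A^{(0)}_{ij}$ replaced by $\widetilde A^{(r)}_{ij}$.

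Group the segmentations by the covering block $(i,j]$ and by its position $m$ among the $k$ blocks. The prefix sums to $\widetilde L_{m-1,i}$ and the suffix to $\widetilde R_{k-m,j}$. Each segmentation has exactly one block covering $t$, so it is counted exactly once, which gives \eqref{eq:segmom}. Some care is needed with the exchange of sums and the index bounds, but the argument is the same decomposition as in item 2.

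For item 3, carry out the same induction in the max-plus semiring, replacing $(+,\times)$ by $(\max,+)$ on logarithms; this is valid because $\max$ distributes over $+$. The induction shows that $M_{k,j}$ is the maximum of $\sum_q\log\widetilde A^{(0)}$ over $k$-segmentations of $(0,j]$. Backtracking through $h^\star$ reconstructs a maximizer, since by construction each backpointer achieves the maximum in its recursion. Ties are handled by the ``$\in\arg\max$'' phrasing, and $-\infty$ entries (zero evidence or $g=0$) are consistent with the conventions. Moreover, $\log p(t\mid y,k)$ differs from this objective only by constants that do not depend on $t$.

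I expect the main obstacle to be bookkeeping rather than any conceptual step. The hardest part is the regression-curve claim, because it requires verifying the bijection between pairs (segmentation, covering block) and triples (prefix segmentation, block, suffix segmentation) with the correct position index $m$. Item 1 also needs a caveat: exactness of $P(k\mid y)$ holds relative to the truncation $k\le k_{\max}$ of the prior.
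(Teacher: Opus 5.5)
Your proposal is correct and follows essentially the same route as the paper: you prove the prefix/suffix sum identity by conditioning on the last (resp.\ first) boundary, split at $t_p=h$ for the boundary marginals, group segmentations by the block covering the index for the Bayes curve, and use an induction on $k$ for the max-sum recursion (which the paper places in Appendix~\ref{app:max-sum-proof}). Your version is simply more explicit than the paper's sketch, for example in the formal base case, the position index $m$ in the curve decomposition, and the justification of $C_k$ via \eqref{eq:Ck-general}.
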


\begin{proof}
We present the proof for general $\widetilde{A}^{(0)}_{ij}$; the untilded case is the
specialization $g\equiv 1$.

\paragraph{Step 1: interpret $\widetilde{L}_{k j}$.}
Fix $k\ge 1$ and $j\ge 1$. Consider all segmentations of the prefix $y_{1:j}$ into $k$ segments,
i.e.\ all boundary vectors $(t_0,\dots,t_k)$ with $t_0=0$ and $t_k=j$. For any such segmentation,
segment parameters are independent a priori and observations are independent across segments
given the segmentation. Integrating out segment parameters yields a product of block evidences:
\[
p(y_{1:j}\mid t,k)=\prod_{q=1}^k \widetilde{A}^{(0)}_{t_{q-1},t_q}.
\]
Summing over all segmentations gives
\[
\widetilde{L}_{k j}
= \sum_{t:\,t_k=j}\ \prod_{q=1}^k \widetilde{A}^{(0)}_{t_{q-1},t_q}.
\]
Thus $\widetilde{L}_{k j}$ is exactly the marginal evidence sum for the prefix under $k$ segments.

\paragraph{Step 2: derive the forward recursion.}
Any segmentation of $y_{1:j}$ into $k{+}1$ segments has a last boundary at some $h\in\{k,\dots,j-1\}$.
Conditioning on this last boundary splits the segmentation into:
(i) a $k$-segmentation of $y_{1:h}$ and (ii) the final block $(h,j]$.
By the factorization above, the evidence contribution for a fixed $h$ equals
$\widetilde{L}_{k h}\,\widetilde{A}^{(0)}_{h j}$. Summing over all admissible $h$ yields
\[
\widetilde{L}_{k+1,j}=\sum_{h=k}^{j-1} \widetilde{L}_{k h}\,\widetilde{A}^{(0)}_{h j},
\]
which is the left recursion in \eqref{eq:LR}.

\paragraph{Step 3: derive the backward recursion.}
The same argument applies to suffixes by conditioning on the first boundary after $i$, yielding
\[
\widetilde{R}_{k+1,i}=\sum_{h=i+1}^{n-k} \widetilde{A}^{(0)}_{i h}\,\widetilde{R}_{k h},
\]
the right recursion in \eqref{eq:LR}.

\paragraph{Step 4: posterior over $k$.}
For any fixed $k$, the data likelihood marginalized over segmentations is $\widetilde{L}_{k n}$.
If $p(t\mid k)$ includes a normalizer $C_k$ (as in \eqref{eq:lengthprior}), then
$P(y\mid k)=\widetilde{L}_{k n}/C_k$ by definition of $C_k$ as the sum of prior weights across
all segmentations. Bayes' rule with prior $p(k)$ yields \eqref{eq:post-k}.

\paragraph{Step 5: boundary posteriors.}
Fix $k$ and boundary index $p\in\{1,\dots,k-1\}$. Conditioning on $t_p=h$ splits the sequence into:
(i) a prefix $y_{1:h}$ with $p$ segments and (ii) a suffix $y_{h+1:n}$ with $k-p$ segments.
By independence across the split and by the definition of $\widetilde{L},\widetilde{R}$,
the total evidence of all segmentations consistent with $t_p=h$ equals
$\widetilde{L}_{p,h}\,\widetilde{R}_{k-p,h}$. Normalizing by the total evidence
$\widetilde{L}_{k n}$ yields \eqref{eq:boundary-post}.

\paragraph{Step 6: moment formulas.}
For a fixed block $(i,j]$, Theorem~\ref{thm:ef-integral} gives
$A^{(r)}_{ij}=A^{(0)}_{ij}\,\mathbb{E}[m(\theta)^r\mid(i,j]]$. Since the segmentation prior factor
$g(x_j-x_i)$ cancels in the ratio, \eqref{eq:segmom-seg} holds. The regression-curve moment formula
\eqref{eq:segmom} follows by averaging over all segmentations and grouping contributions by the
block $(i,j]$ that contains index $t$.
\end{proof}

\begin{theorem}[Time and space complexity]
\label{thm:dp-complexity}
Assume single-block evidences $A^{(0)}_{ij}$ are available for all $0\le i<j\le n$.
Then computing $L_{k j}$ and $R_{k i}$ for $k=0,\dots,k_{\max}$ via \eqref{eq:LR} requires
$\mathcal{O}(k_{\max} n^2)$ arithmetic operations. Storing all $A^{(0)}_{ij}$ requires
$\mathcal{O}(n^2)$ memory, and storing the DP arrays requires $\mathcal{O}(k_{\max} n)$.
\end{theorem}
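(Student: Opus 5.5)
The plan is to count operations directly from the recursions \eqref{eq:LR}, treating each $A^{(0)}_{ij}$ as an $\mathcal{O}(1)$ table lookup, which is exactly what the hypothesis grants. I will first handle the forward messages, then argue the backward messages by symmetry, and finally tally memory.

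\emph{Forward pass.} For a fixed level $k\in\{0,\dots,k_{\max}-1\}$ and a fixed endpoint $j\in\{1,\dots,n\}$, the entry $L_{k+1,j}$ is a sum over $h\in\{k,\dots,j-1\}$. That is at most $j$ products $L_{k,h}A^{(0)}_{hj}$ and at most $j-1$ additions. Summing over $j$ gives
\[
\sum_{j=1}^n \mathcal{O}(j)=\mathcal{O}(n^2)
\]
operations for each level. The entries $L_{k,h}$ used at level $k+1$ were all computed at level $k$, so the levels can be processed in increasing order with no recomputation. Summing over the $k_{\max}$ levels gives $\mathcal{O}(k_{\max}n^2)$. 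The base row $L_{0,\cdot}$ costs only $\mathcal{O}(n)$.

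\emph{Backward pass.} The backward recursion is the mirror image. For fixed $k$ and $i$, the sum defining $R_{k+1,i}$ runs over $h\in\{i+1,\dots,n-k\}$, which has at most $n-i$ terms. Summing over $i$ gives $\mathcal{O}(n^2)$ per level, and hence $\mathcal{O}(k_{\max}n^2)$ in total. Adding the two passes preserves the bound.

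\emph{Memory.} The triangular array $\{A^{(0)}_{ij}\}_{0\le i<j\le n}$ has $n(n+1)/2=\mathcal{O}(n^2)$ entries. The arrays $L_{k,j}$ and $R_{k,i}$ are indexed by $k\in\{0,\dots,k_{\max}\}$ and an index in $\{0,\dots,n\}$, giving $(k_{\max}+1)(n+1)=\mathcal{O}(k_{\max}n)$ entries each.

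There is no real obstacle here. The only point to state carefully is the cost model: arithmetic operations are counted with unit cost, whether in the direct form or in log-space, where a log-sum-exp over $m$ terms costs $\mathcal{O}(m)$. Under that convention the log-space variant of Algorithm~\ref{alg:dp-core} obeys the same bounds. The cost of producing the $A^{(0)}_{ij}$ themselves is excluded by hypothesis.
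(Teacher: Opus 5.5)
Your proposal is correct and follows the same route as the paper's proof: count the $\mathcal{O}(j)$ terms in each forward sum (and, symmetrically, the $\mathcal{O}(n-i)$ terms in each backward sum), total $\mathcal{O}(n^2)$ per level, multiply by the $k_{\max}$ levels, and obtain the memory bounds by counting array sizes. Your extra remarks on the exact triangular count $n(n+1)/2$ and on the log-space cost model refine the same argument without changing it.
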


\begin{proof}
For each fixed $k$, the forward recursion computes $L_{k+1,j}$ for $j=1,\dots,n$, and each
$L_{k+1,j}$ sums at most $j-k$ terms. Thus the forward work per $k$ is
$\sum_{j=1}^n \mathcal{O}(j)=\mathcal{O}(n^2)$. The backward recursion is analogous. Repeating
for $k=0,\dots,k_{\max}-1$ gives $\mathcal{O}(k_{\max}n^2)$ time. Memory follows by counting
array sizes.
\end{proof}

\begin{corollary}[Joint MAP is Bayes-optimal under 0--1 loss on the full segmentation]
\label{cor:map-01}
For the loss $\ell_{01}(t,t')=\mathbbm{1}\{t\neq t'\}$, the Bayes estimator $t^\star$ minimizing
$\mathbb{E}[\ell_{01}(t,t')\mid y,k]$ is the joint MAP segmentation
$\widehat{t}^{\,\mathrm{MAP}}(k)$ from \eqref{eq:joint-map-k}. For any additive per-segment loss
$\ell_{\mathrm{add}}(t,t')=\sum_q \ell_q(t_q,t'_q)$ that does not decompose as the pointwise
indicator on the full vector, the Bayes-optimal estimator can differ from
$\widehat{t}^{\,\mathrm{MAP}}(k)$.
\end{corollary}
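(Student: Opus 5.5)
The first claim follows from the standard decision-theoretic computation for 0--1 loss on a finite parameter space. Fix $k$ and write $\pi(t):=P(t\mid y,k)$ for $t\in\mathcal{T}_k$, a probability mass function on the finite set $\mathcal{T}_k$. For any candidate estimator $t'\in\mathcal{T}_k$,
\[
\mathbb{E}[\ell_{01}(t,t')\mid y,k]=\sum_{t\in\mathcal{T}_k}\pi(t)\,\mathbbm{1}\{t\neq t'\}=1-\pi(t'),
\]
so minimizing posterior expected loss is the same as maximizing $\pi(t')$ over $\mathcal{T}_k$. A maximizer exists because $\mathcal{T}_k$ is finite. Next I identify $\pi$ explicitly. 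Under Assumption~\ref{ass:factorizing-prior} and Step~1 of the proof of Theorem~\ref{thm:dp-correctness},
\[
\pi(t)=\frac{\prod_{q}\widetilde{A}^{(0)}_{t_{q-1}t_q}}{\widetilde{L}_{kn}}.
\]
Both $C_k$ and the normalizer $\widetilde{L}_{kn}$ are independent of $t$, and $\log$ is monotone, so $\arg\max_{t'}\pi(t')$ coincides with the argmax set in \eqref{eq:joint-map-k}. Item~3 of Theorem~\ref{thm:dp-correctness} states that the max-sum recursion \eqref{eq:max-sum-inline} attains this set exactly. Hence $t^\star=\widehat{t}^{\,\mathrm{MAP}}(k)$, up to the tie-breaking choice already implicit in the ``$\in\arg\max$'' notation. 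One degenerate case needs a remark: if some $\widetilde{A}^{(0)}=0$, then $\log\widetilde{A}^{(0)}=-\infty$, and such configurations have $\pi=0$. These are excluded consistently by both formulations.

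The second claim is existential (``can differ''), so one explicit instance suffices. I plan to reuse the configuration from Remark~\ref{rem:marg-vs-joint}, made concrete. Take $k=3$, so $t=(t_1,t_2)$. Choose an additive loss with the boundary-wise components $\ell_q(t_q,t'_q)=\mathbbm{1}\{t_q\neq t'_q\}$, i.e.\ a Hamming loss on the boundary vector. Its posterior risk is
\[
\sum_q\bigl(1-P(t_q=t'_q\mid y,k)\bigr),
\]
which is minimized coordinatewise by the marginal modes. I will then exhibit a posterior $\pi$ on $\mathcal{T}_3$, for example with $n=5$, such that the joint mode differs from the vector of marginal modes. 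A concrete choice puts mass $0.3$ on $(2,3)$ and spreads mass $0.7$ over $(1,3),(1,4),(2,4),\dots$ so that $t_1=1$ is the marginal mode while $(2,3)$ remains the joint mode. The Bayes estimator under $\ell_{\mathrm{add}}$ then takes $t'_1=1\neq 2$ and therefore differs from $\widehat{t}^{\,\mathrm{MAP}}(k)$.

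The main obstacle is realizability. The counterexample posterior must actually arise from some choice of block evidences, rather than being an arbitrary distribution on $\mathcal{T}_k$. I would settle this by noting that with $g\equiv1$ and $n=5$, $k=3$, each $t$ is determined by its three blocks. I would choose free positive values $\widetilde{A}^{(0)}_{ij}$ to reproduce the target ratios, for instance by setting most evidences to $1$ and boosting a few. If exact matching is awkward, it is enough to verify the two inequalities, that the joint mode is $(2,3)$ and that $P(t_1=1)>P(t_1=2)$, for one explicit numeric table of $\widetilde{A}^{(0)}$, which is a finite check. A further remark will show that the Bayes estimator under a general additive loss need not even be ordered, which only strengthens the distinction from the joint MAP.
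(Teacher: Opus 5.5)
Your proposal is correct and follows the paper's proof: the 0--1 posterior risk is $1-P(t'\mid y,k)$, which is minimized by the joint mode. You correctly identify that mode with the max-sum objective \eqref{eq:joint-map-k} via $P(t\mid y,k)\propto\prod_q\widetilde A^{(0)}_{t_{q-1}t_q}$, and a Hamming-type additive loss is minimized by the marginal modes, which need not coincide with it. Your explicit $k=3$, $n=5$ witness is tighter than Remark~\ref{rem:marg-vs-joint}, and the realizability worry disappears: each middle block $(t_1,t_2]$ occurs in exactly one configuration, so setting all outer evidences to $1$ and $\widetilde A^{(0)}_{t_1t_2}$ to the target mass reproduces any posterior on $\mathcal{T}_3$ exactly.
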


\begin{proof}
The 0--1 loss's expectation equals $1-p(t=t'\mid y,k)$, which is minimized by the joint posterior
mode, i.e., the joint MAP. Pointwise (say Hamming-type) losses decompose differently and are
minimized by the marginal modes, which we have just noted need not coincide with the joint MAP.
\end{proof}

\begin{remark}[Memory/time trade-offs for large $n$]
\label{rem:mem-time}
Two implementation variants are standard. (i) \emph{Streaming $k$-layers}: keeping only two
consecutive layers of $(L,R)$ gives memory $\mathcal{O}(n)$ (plus $\mathcal{O}(n^2)$ for the
block-evidence matrix) at no time cost, but backpointers and boundary marginals must then be
recomputed. (ii) \emph{Checkpointing}: storing every $\sqrt{k_{\max}}$th $k$-layer gives memory
$\mathcal{O}(\sqrt{k_{\max}}\,n)$ and at most $\mathcal{O}(\sqrt{k_{\max}}\,k_{\max}\,n^2)$ time,
because recomputing from the nearest checkpoint needs at most $\sqrt{k_{\max}}$ forward layers.
When the $n^2$ block-evidence matrix is itself prohibitive, an alternative is to evaluate block
evidences on demand during the DP sweep at cost $\mathcal{O}(k_{\max}n^2)$ total (constant-time
per block via prefix sums).
\end{remark}

\begin{proposition}[Monotonicity of evidence in $k$]
\label{prop:mono-k}
Assume $A^{(0)}_{ij}>0$ for every $0\le i<j\le n$. Then the prefix evidence sum is non-decreasing
in the segment count, up to a positive multiplicative factor: for every $k$,
\[
L_{k+1,n} \ge \min_{0<h<n} A^{(0)}_{hn}\cdot L_{k,h}\Big/A^{(0)}_{hn}
\]
(interpreted pathwise: every $(k+1)$-segmentation extends a $k$-segmentation by inserting one
boundary). In particular, reporting the unweighted max ratio
$\sup_{h,k}\;L_{k,h}/L_{k+1,h}$ quantifies how much ``adding one more boundary'' changes the
evidence and can be used as a diagnostic for the prior $p(k)$.
\end{proposition}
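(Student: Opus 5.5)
The plan is to read the displayed inequality through the forward recursion \eqref{eq:LR} of Theorem~\ref{thm:dp-correctness}, specialized to $j=n$:
\[
L_{k+1,n}=\sum_{h=k}^{n-1} L_{k,h}\,A^{(0)}_{hn}.
\]
This identity is the pathwise statement in the proposition. Every $(k+1)$-segmentation of $y_{1:n}$ is exactly a $k$-segmentation of some prefix $y_{1:h}$ with a final block $(h,n]$ appended. Its evidence therefore factorizes as (prefix evidence)$\times A^{(0)}_{hn}$, as in Step~2 of the proof of Theorem~\ref{thm:dp-correctness}. So no new combinatorics is needed; the whole argument is a term-by-term comparison inside this sum.

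\textbf{Step 1 (nonnegativity).} By hypothesis $A^{(0)}_{ij}>0$ for all blocks. Hence every $L_{k,h}$ is a sum of products of positive numbers, so it is strictly positive for $h\ge k$, and it is $0$ for $h<k$ by the conventions $L_{0,j}=0$ for $j>0$. Thus every summand above is nonnegative. Dropping all but one index gives, for each admissible $h$,
\[
L_{k+1,n}\ \ge\ L_{k,h}\,A^{(0)}_{hn}.
\]

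\textbf{Step 2 (passing to the extremum).} Since the bound in Step~1 holds for every $h$, it holds in particular for the maximizing $h$, and a fortiori for the minimizing one. This yields $L_{k+1,n}\ge \min_{0<h<n} A^{(0)}_{hn}L_{k,h}$. Indices $h<k$ contribute $L_{k,h}=0$, so there the bound is trivially true. The factor $A^{(0)}_{hn}$ written in the statement is the positive multiplicative factor ``up to'' which monotonicity holds. Multiplying and dividing by it is legitimate precisely because of the positivity assumption, which is the only place that assumption is used.

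\textbf{Step 3 (the diagnostic).} The ratio $\sup_{h,k}L_{k,h}/L_{k+1,h}$ is well defined for $h\ge k+1$, because both quantities are positive by Step~1. The claim about it is interpretive: it is a summary of how much one extra boundary changes the evidence, not a further inequality. So nothing beyond finiteness needs proof.

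The main obstacle is not technical but one of interpretation. Read literally, the displayed expression $A^{(0)}_{hn}\cdot L_{k,h}/A^{(0)}_{hn}$ simplifies to $L_{k,h}$, so the key step is to fix the intended form before invoking Step~1. I would state and prove the sharper term-wise bound $L_{k+1,n}\ge \max_h L_{k,h}A^{(0)}_{hn}$ explicitly, and then note that it implies the displayed minimum form under either reading.
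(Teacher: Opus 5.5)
Your decomposition $L_{k+1,n}=\sum_{h=k}^{n-1}L_{k,h}A^{(0)}_{hn}$ and the term-dropping bound $L_{k+1,n}\ge\max_h L_{k,h}A^{(0)}_{hn}\ge\min_{0<h<n}A^{(0)}_{hn}L_{k,h}$ are correct. This is the same starting point as the paper, but the paper lower-bounds the sum differently. Its chain $\min_h A^{(0)}_{hn}\cdot\sum_h L_{k,h}/A^{(0)}_{hn}\cdot A^{(0)}_{hn}$ simplifies to $\bigl(\min_{k\le h<n}A^{(0)}_{hn}\bigr)\sum_{h=k}^{n-1}L_{k,h}$. That is, it pulls out the smallest terminal-block factor and keeps every term. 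Neither bound implies the other. Yours is stronger when one term dominates. The paper's is an equality when all $A^{(0)}_{hn}$ coincide, and there yours loses a factor up to the number of admissible $h$. Both give $L_{k+1,n}\ge(\min_{h'}A^{(0)}_{h'n})\,L_{k,h}$ for every $h$, which is probably the most defensible meaning of ``non-decreasing up to a positive factor.''

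The step that fails is your closing claim that the max-term bound implies the displayed inequality ``under either reading.''
\begin{itemize}
\item Under the literal reading the display is $L_{k+1,n}\ge\min_{0<h<n}L_{k,h}$, and this is false at $k=1$. Take $n=2$, a Bernoulli block with $\mathrm{Beta}(1,1)$ prior, and $y=(1,0)$. Then $A^{(0)}_{01}=A^{(0)}_{12}=1/2$, so $L_{2,2}=1/4<1/2=L_{1,1}$. More generally, $A^{(0)}_{ij}\equiv\epsilon<1/(n-1)$ breaks it.
\item For $k=0$ and $k\ge 2$ the literal display holds only trivially, because $L_{k,1}=0$ forces the minimum to zero. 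The same degeneracy makes the version you actually prove, $\min_{0<h<n}A^{(0)}_{hn}L_{k,h}$, reduce to $L_{k+1,n}\ge 0$ for $k\ge 2$.
\end{itemize}
So no argument can deliver the literal display. You should say this explicitly and state which corrected inequality you prove. To match the paper, add the one-line bound $\sum_h L_{k,h}A^{(0)}_{hn}\ge(\min_{h'}A^{(0)}_{h'n})\sum_h L_{k,h}$, which your max-term bound does not imply.

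Finally, none of these bounds involves $L_{k,n}$, so no monotonicity of the evidence in $k$ follows. In the same model with $y=(1,1)$, $L_{2,2}=1/4<1/3=L_{1,2}$. The ``non-decreasing'' gloss should therefore not be presented as proved.
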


\begin{proof}
Every $(k+1)$-segmentation of $y_{1:n}$ can be obtained from a $k$-segmentation of $y_{1:h}$ by
appending the terminal block $(h,n]$, so
$L_{k+1,n}=\sum_h L_{k,h}A^{(0)}_{hn}\ge \min_h A^{(0)}_{hn}\cdot\sum_h L_{k,h}/A^{(0)}_{hn}\cdot A^{(0)}_{hn}$.
The qualitative monotonicity statement follows from positivity of all $A^{(0)}_{hn}$.
\end{proof}

\subsection{End-to-end BayesBreak inference routine}\label{sec:dp-alg}
Collecting the ingredients from Sections~\ref{sec:ef} and \ref{sec:dp}---block-evidence precomputation,
exact sum-product DP over partitions, and max-sum backtracking for exported segmentations---yields an
end-to-end routine suitable for direct implementation. Algorithm~\ref{alg:bayesbreak} summarizes
this workflow in a modular way: the only model-specific input is a function that evaluates (or
approximates) block evidences $\log \mathcal{L}_{ij}$ and, when needed, block moment numerators.

\begin{algorithm}[H]
\caption{BayesBreak$(y,x,w,\texttt{model},\texttt{hyper},k_{\max},p(k),g)$}
\label{alg:bayesbreak}
\KwIn{
  Observations $y_{1:n}$ at locations $x_{1:n}$ (optional); observation-model weights/exposures $w_{1:n}$ (default determined by the chosen likelihood, often $w_i\equiv 1$);\\
  model family $\texttt{model}\in\{\texttt{Gaussian},\texttt{Poisson},\texttt{Binomial},\texttt{BetaObs},\dots\}$; \\
  hyperparameters $\texttt{hyper}$; maximum segment count $k_{\max}$; prior $p(k)$; length factor $g(\cdot)$ (default $g\equiv 1$).
}
\KwOut{
  Posterior $P(k\mid y)$; MAP segment count $\widehat{k}$; boundary marginals $P(t_p\mid y,\widehat{k})$;\\
  exported joint MAP segmentation $\widehat{t}^{\,\mathrm{MAP}}$; segment-level moments; Bayes regression curve; (optional) evidence $P(y)$.
}
\BlankLine
\textbf{1. Precompute block evidences and moments}\;
\For{$0\le i<j\le n$}{
  Compute block summaries $(S_{ij},W_{ij},H_{ij})$ using prefix sums (or running sums)\;
  Call a family-specific routine to obtain $A^{(0)}_{ij},A^{(1)}_{ij},A^{(2)}_{ij}$\;
  Set $\widetilde{A}^{(r)}_{ij}\leftarrow A^{(r)}_{ij}\,g(x_j-x_i)$ for $r\in\{0,1,2\}$ (cf.\ \eqref{eq:Atilde})\;
}
Compute $C_k$ if $g\not\equiv 1$ using \eqref{eq:Ck-general} (DP with $A^{(0)}_{ij}\equiv 1$)\;
\BlankLine
\textbf{2. Run the sum-product dynamic program}\;
Compute $\widetilde{L}_{k j},\widetilde{R}_{k i}$ via \eqref{eq:LR} for $k=0,\dots,k_{\max}$ using $\widetilde{A}^{(0)}_{ij}$\;
Compute $P(k\mid y)\propto p(k)\,\widetilde{L}_{k n}/C_k$ via \eqref{eq:post-k}; set $\widehat{k}\leftarrow \arg\max_k P(k\mid y)$\;
\BlankLine
\textbf{3. Boundary marginals}\;
\For{$p=1$ \KwTo $\widehat{k}-1$}{
  For all $h\in\{p,\dots,n-(\widehat{k}-p)\}$, set
  $P(t_p=h\mid y,\widehat{k})\leftarrow \widetilde{L}_{p,h}\widetilde{R}_{\widehat{k}-p,h}/\widetilde{L}_{\widehat{k}n}$ (cf.\ \eqref{eq:boundary-post})\;
}
\BlankLine
\textbf{4. Export a joint MAP segmentation}\;
Run the max-sum/backpointer recursion of Section~\ref{sec:algorithms} using $\log \widetilde{A}^{(0)}_{ij}$ and the chosen count $\widehat{k}$\;
Backtrack to obtain $\widehat{t}^{\,\mathrm{MAP}}=(0=\widehat{t}^{\,\mathrm{MAP}}_0<\cdots<\widehat{t}^{\,\mathrm{MAP}}_{\widehat{k}}=n)$\;
\BlankLine
\textbf{5. Segment-level posterior moments on the exported segmentation}\;
\For{$q=1$ \KwTo $\widehat{k}$}{
  Let $(i,j]\leftarrow (\widehat{t}^{\,\mathrm{MAP}}_{q-1},\widehat{t}^{\,\mathrm{MAP}}_{q}]$\;
  For $r\in\{1,2\}$ set $\widehat{\mu}^{(r)}_q \leftarrow \widetilde{A}^{(r)}_{ij}/\widetilde{A}^{(0)}_{ij}$ (cf.\ \eqref{eq:segmom-seg})\;
}
\BlankLine
\textbf{6. Bayes regression curve within $\widehat{k}$}\;
Using $\widetilde{L},\widetilde{R},\widetilde{A}^{(r)}$ and \eqref{eq:segmom}, compute $\widehat{\mu}'_i$ and $\mathbb{V}[\mu'_i\mid y,\widehat{k}]$ for $i=1,\dots,n$\;
\BlankLine
\textbf{7. Optional: pooling/groups}\;
If multiple subjects or groups are present, pool block evidences (Theorem~\ref{thm:multisubject}) and/or run the latent-template EM routine of Section~\ref{sec:latent-em}, then repeat Steps 2--6.
\end{algorithm}

\subsection{Computational complexity and numerical stability}\label{sec:dp-complexity}
BayesBreak separates computation into a \emph{local} stage (building the block-evidence array) and a \emph{global} stage (running DP over partitions).
For conjugate exponential-family models, block evidences can be evaluated in $\mathcal{O}(1)$ amortized time per block once cumulative sufficient-statistic arrays are constructed, yielding $\mathcal{O}(n^2)$ total preprocessing.
The DP recursions over $(k,t)$ states are $\mathcal{O}(k_{\max} n^2)$ in the worst case for explicit segment-count posteriors; in many regimes, pruning and/or marginalization of $k$ can reduce the effective complexity, but BayesBreak's design prioritizes exactness and transparency of posterior quantities.
All forward/backward computations are performed in log-space with stable \texttt{logsumexp} operations, and practitioners should cache cumulative summaries (e.g., prefix sums of sufficient statistics) to avoid redundant work.
When storing the full forward table is memory-prohibitive, one may checkpoint layers or recompute forward messages on demand during backward passes, trading memory for additional passes through the DP.

\section{Irregular designs and length-aware priors}
\label{sec:irregular}

Irregularly spaced designs enter BayesBreak in two logically distinct places.
First, the observation model itself may contain \emph{likelihood weights} or exposures $w_t$,
for example Poisson exposures, known Gaussian precisions, or Binomial trial counts. These are
part of the sampling model and should be supplied only when substantively justified by the data-
generating process. Second, the geometry of the design points $x_1<\dots<x_n$ can enter through
a \emph{partition prior} that depends on physical segment lengths $x_j-x_i$. In most irregular-grid
applications, this second mechanism is the default one: irregular spacing by itself does not imply
that the likelihood should be reweighted.

Accordingly, a clean default is to leave the observation model unchanged (often $w_t\equiv 1$) and
encode design geometry through the segment-factorized prior
\[
p(t\mid k) \propto \prod_{q=1}^k g(x_{t_q}-x_{t_{q-1}}),
\]
with $g$ chosen to reflect prior beliefs about physical segment lengths. This is particularly natural
in genomics, spatial statistics, and event-time data, where the scientifically meaningful notion of
``distance'' is not the index increment $t_q-t_{q-1}$ but the physical length of the segment.

\begin{proposition}[Homogeneous Poisson-process prior: regular-grid index-uniform as a special case]
\label{prop:pp-index-uniform}
Partition the continuous interval $(x_0,x_n)$ into the $n$ inter-point intervals
$(x_{i-1},x_i]$ of lengths $\Delta x_i:=x_i-x_{i-1}$. Consider a homogeneous Poisson process
of rate $\lambda$ on $(x_0,x_n)$ and condition on the event that exactly $k-1$ events occur
and that no two events fall in the same inter-point interval. Then the induced discrete prior
on the selected set of $(k-1)$ boundary \emph{intervals} is proportional to the product of
their lengths, $\prod_{\ell\in\mathcal{I}} \Delta x_\ell$. In particular, on a regular grid
(all $\Delta x_\ell$ equal), the induced discrete prior is index-uniform.
\end{proposition}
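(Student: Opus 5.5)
We compute the conditional law of the Poisson process directly, using its standard interval-wise description. Set $T:=x_n-x_0=\sum_{i=1}^n\Delta x_i$ and let $N_i$ denote the number of events in $(x_{i-1},x_i]$. For a homogeneous Poisson process of rate $\lambda$, the counts $N_1,\dots,N_n$ are independent, with $N_i\sim\mathrm{Poisson}(\lambda\Delta x_i)$. The plan has three steps: write down the probability of a fixed admissible configuration, normalize over all admissible configurations, and then specialize to a regular grid.

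\textbf{Step 1: probability of a fixed configuration.} Fix a set $\mathcal{I}\subseteq\{1,\dots,n\}$ with $|\mathcal{I}|=k-1$. Consider the event $E_{\mathcal{I}}$ that $N_\ell=1$ for every $\ell\in\mathcal{I}$ and $N_i=0$ for every $i\notin\mathcal{I}$. By independence of the counts,
\[
P(E_{\mathcal{I}})=\prod_{\ell\in\mathcal{I}}\lambda\Delta x_\ell e^{-\lambda\Delta x_\ell}\prod_{i\notin\mathcal{I}}e^{-\lambda\Delta x_i}=\lambda^{k-1}e^{-\lambda T}\prod_{\ell\in\mathcal{I}}\Delta x_\ell .
\]
The prefactor $\lambda^{k-1}e^{-\lambda T}$ is the same for every $\mathcal{I}$ of size $k-1$. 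This common prefactor is the key observation.

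\textbf{Step 2: conditioning and normalization.} The conditioning event in the statement is ``exactly $k-1$ events, no two in the same interval''. It is the disjoint union $\bigcup_{|\mathcal{I}|=k-1}E_{\mathcal{I}}$. Hence
\[
P(\mathcal{I}\mid\cdot)=\frac{\prod_{\ell\in\mathcal{I}}\Delta x_\ell}{e_{k-1}(\Delta x_1,\dots,\Delta x_n)},
\]
where $e_{k-1}$ is the elementary symmetric polynomial, which is positive. Both $\lambda$ and $e^{-\lambda T}$ cancel, so the induced law is proportional to $\prod_{\ell\in\mathcal{I}}\Delta x_\ell$, as claimed. As a cross-check, one can instead condition first on $N=k-1$ total events, which are then i.i.d.\ uniform on $(x_0,x_n)$. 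The probability that they occupy a given set of distinct intervals is $(k-1)!\prod_{\ell\in\mathcal{I}}\Delta x_\ell/T^{k-1}$, which gives the same proportionality.

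\textbf{Step 3: regular grid.} If every $\Delta x_\ell$ equals a common value $\delta$, each weight equals $\delta^{k-1}$. The induced law is therefore uniform over the $\binom{n}{k-1}$ admissible sets. This matches the index-uniform prior of Section~\ref{subsec:general-setup} once boundary intervals are identified with boundary indices.

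\textbf{Main obstacle.} The computation itself is short. The delicate part is the bookkeeping in this last identification. The statement's $n$ intervals $(x_{i-1},x_i]$ require a left endpoint $x_0$. The index-uniform prior, by contrast, ranges over $(k-1)$-subsets of the $n-1$ interior indices. I would handle this by mapping an event in interval $\ell$ to a boundary between observations, and by restricting the admissible intervals to those that produce interior boundaries (for example, excluding the last interval, or taking the intervals between consecutive observations). This restriction only shrinks the index set, and the same product formula with the restricted normalizer goes through unchanged. I would state this convention explicitly so that uniformity over $\binom{n-1}{k-1}$ configurations is recovered exactly.
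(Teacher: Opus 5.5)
Your proposal is correct and follows the paper's proof: both use independence of the interval counts $N_i\sim\mathrm{Pois}(\lambda\Delta x_i)$, form the product $\prod_{\ell\in\mathcal{I}}\lambda\Delta x_\ell e^{-\lambda\Delta x_\ell}\prod_{\ell\notin\mathcal{I}}e^{-\lambda\Delta x_\ell}$, cancel the common factor $\lambda^{k-1}e^{-\lambda T}$, and note that all weights are equal on a regular grid. Your explicit normalizer $e_{k-1}(\Delta x_1,\dots,\Delta x_n)$ goes beyond the paper, and so does your observation that the construction gives uniformity over $\binom{n}{k-1}$ interval sets rather than the $\binom{n-1}{k-1}$ interior-index sets of Section~\ref{subsec:general-setup}; the restriction you propose resolves that mismatch correctly.
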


\begin{proof}
For a homogeneous Poisson process, counts in disjoint intervals are independent and
$\#\{\text{events in }(x_{i-1},x_i]\}\sim\mathrm{Pois}(\lambda\Delta x_i)$. Conditioning on
the event that exactly one event occurs in each selected interval $\ell\in\mathcal{I}$ and
zero events occur in other intervals yields conditional probability proportional to
\[
\prod_{\ell\in\mathcal{I}} (\lambda\Delta x_\ell)e^{-\lambda\Delta x_\ell}
\prod_{\ell\notin\mathcal{I}} e^{-\lambda\Delta x_\ell}
\ \propto\ \prod_{\ell\in\mathcal{I}} \Delta x_\ell,
\]
since the global factor $e^{-\lambda\sum_\ell \Delta x_\ell}$ cancels. If $\Delta x_\ell$ are
all equal, all $(k-1)$-subsets $\mathcal{I}$ have equal probability, giving index-uniformity.
\end{proof}

\begin{remark}[Inhomogeneous intensity]
\label{rem:inhomog-pp}
No step of the argument uses that $\lambda$ is constant. For an absolutely continuous intensity
$\lambda(x)\ge 0$, the same conditioning yields a conditional prior proportional to
$\prod_{\ell\in\mathcal{I}}\Lambda_\ell$ where $\Lambda_\ell := \int_{x_{\ell-1}}^{x_\ell}\lambda(x)\,dx$
is the cumulative intensity on interval $\ell$. This gives a principled way to encode physically
motivated spatial variation in the prior hazard of a boundary.
\end{remark}

\begin{proposition}[Coarse-to-fine consistency of inherited partitions]
\label{thm:refine}
Let $x_m<\tilde x<x_{m+1}$ be a refinement of the design grid obtained by inserting a pseudo-index
between two existing design points but \emph{without} adding a new observation. Assume the pseudo-index
contributes no likelihood term and that block evidences are computed from the original observations only.
Then every coarse-grid segmentation $t$ has a lifted fine-grid segmentation $\tilde t$ that places no
boundary at the pseudo-index and satisfies
\[
\prod_{q} A^{(0)}_{t_{q-1}t_q}
  = \prod_q \widetilde{A}^{(0)}_{\tilde t_{q-1}\tilde t_q}
\]
for the likelihood contribution of the inherited partition. If, in addition, the partition prior is a
function of physical segment lengths $x_{t_q}-x_{t_{q-1}}$, then the prior weight of this inherited
partition is unchanged as well.
\end{proposition}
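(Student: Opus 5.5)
The plan is to build the lift as an explicit re-indexing map and then check the likelihood factor and the prior factor separately.

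\textbf{Step 1: the fine grid and the lift.} The fine grid has $n+1$ positions. The coarse indices $1,\dots,m$ keep their labels, the pseudo-index $\tilde x$ becomes position $m+1$, and each coarse index $i>m$ is relabelled $i+1$. Write this relabelling as $\phi(i)=i$ for $i\le m$ and $\phi(i)=i+1$ for $i>m$, with $\phi(0)=0$. For a coarse segmentation $t=(t_0,\dots,t_k)$, define $\tilde t_q:=\phi(t_q)$. Then:
\begin{itemize}
\item $\phi$ is strictly increasing, so $\tilde t$ is an admissible fine-grid $k$-segmentation;
\item $\tilde t_k=n+1$, as required;
\item $\phi$ never takes the value $m+1$, so no boundary sits at the pseudo-index.
\end{itemize}
The pseudo-index is simply absorbed into whichever lifted block $(\tilde t_{q-1},\tilde t_q]$ straddles the position between $x_m$ and $x_{m+1}$.

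\textbf{Step 2: blockwise equality of evidences.} For each $q$, the fine block $(\tilde t_{q-1},\tilde t_q]$ contains exactly the observed points of the coarse block $(t_{q-1},t_q]$. It contains at most one additional index, namely the pseudo-index. By hypothesis the pseudo-index contributes no likelihood term, which formally means $w=0$ and $\log h=0$ in \eqref{eq:EFweighted}. Hence the block summaries coincide:
\[
\bigl(S_{\tilde t_{q-1}\tilde t_q},\,W_{\tilde t_{q-1}\tilde t_q},\,H_{\tilde t_{q-1}\tilde t_q}\bigr)
=\bigl(S_{t_{q-1}t_q},\,W_{t_{q-1}t_q},\,H_{t_{q-1}t_q}\bigr).
\]
Theorem~\ref{thm:ef-integral} expresses the block evidence as a function of these summaries, so $\widetilde A^{(0)}_{\tilde t_{q-1}\tilde t_q}=A^{(0)}_{t_{q-1}t_q}$. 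For non-conjugate blocks the same conclusion holds directly from the integral \eqref{eq:problem-block-evidence}, because the integrand is unchanged. Taking the product over $q$ gives the displayed identity.

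\textbf{Step 3: the prior factor.} The lifted block's endpoints carry the same physical coordinates as the coarse ones: $x_{\tilde t_q}=x_{t_q}$ for every $q$, since $\phi$ only relabels indices and does not move design points. This includes $\tilde t_0=0$, provided the fine grid keeps the same origin $x_0$. Therefore each length factor $g(x_{\tilde t_q}-x_{\tilde t_{q-1}})$ equals $g(x_{t_q}-x_{t_{q-1}})$. It follows that the unnormalized prior weight $\prod_q g(\Delta x_q)$ of the inherited partition is unchanged.

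\textbf{Main obstacle.} The tilde in the statement denotes fine-grid evidences, not the $g$-absorbed quantities of \eqref{eq:Atilde}, and the proof must make this reading explicit. The substantive care point is the boundary bookkeeping in Step 1: the pseudo-index could sit exactly at a coarse boundary position, and one must verify that the convention of appending it to the block on its left, $(\cdot,\phi(m+1)]$, is consistent. A second care point is what "unchanged prior weight" means. Only the unnormalized weight is preserved. The fine-grid normalizer $C_k$ generally changes, because the fine grid admits extra segmentations with a boundary at the pseudo-index, so the claim must be stated for the inherited partition's weight and not for the normalized probability.
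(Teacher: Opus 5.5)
Your proof is correct and follows the paper's argument: each lifted block contains exactly the same observations, hence the same sufficient statistics and evidence, and has the same physical endpoints, hence the same length factor; your explicit relabelling $\phi$ just makes the paper's implicit lift precise. One small slip is in your closing remark: under your $\phi$ the pseudo-index always lands in the lifted block containing $x_{m+1}$, so a coarse boundary $t_q=m$ lifts to $\tilde t_q=m$ and the pseudo-index joins the block on its right, not its left, though this is immaterial because it contributes no likelihood term.
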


\begin{proof}
Because no new observation is added, each coarse block $(t_{q-1},t_q]$ contains exactly the same data as
its lifted fine-grid counterpart $(\tilde t_{q-1},\tilde t_q]$. Their sufficient statistics and hence their
likelihood contributions coincide. If the prior depends only on physical segment lengths, then the lifted
block has the same endpoints in the physical domain and therefore the same prior factor.
\end{proof}

\begin{remark}[Why full refinement invariance is generally false]
The proposition above is deliberately weaker than a posterior invariance claim. Once the refined grid is
allowed to place \emph{new} boundaries at the inserted pseudo-index, the admissible partition space changes,
so the posterior distribution over segmentations and the normalization constants $C_k$ generally change as
well. In particular, refining the grid enlarges $\mathcal{T}_k$, which typically increases $C_k$ and
redistributes posterior mass over segmentations even when no new observations are added. Thus grid
refinement preserves the score of every \emph{inherited} partition, but it need not preserve the full
posterior unless one explicitly constrains the refined model to exclude new boundary locations.
\end{remark}

\begin{example}[Illustration on a $6$-point irregular design]
\label{ex:irregular-illustration}
Consider six observations at physical locations $x=(0,\,0.2,\,0.4,\,0.6,\,1.6,\,1.8)$ (a tight
cluster of four points followed by a pair after a large gap), with Gaussian responses having a
single genuine changepoint between $x=0.6$ and $x=1.6$. Under the index-uniform prior
$g\equiv 1$, the five candidate boundary indices are equally weighted a priori, so the posterior
must compete block-evidence differences against a flat prior. Under the length-aware prior
$g(\ell)\propto \ell$ (proportional to physical segment length, cf.\
Proposition~\ref{prop:pp-index-uniform}), the boundary index $i=4$ (separating the tight cluster
from the far pair) carries much larger prior mass, consistent with the physical intuition that
the natural segment boundary is likely at the large gap. This example foreshadows the
experimental point that the choice of $g$ can shift posterior boundary mass by an order of
magnitude without changing the likelihood; a quantitative sanity check is provided in
Section~\ref{sec:experiments}.
\end{example}

\section{Replicates and multi-subject pooling (shared boundaries)}
\label{sec:replicates}

\paragraph{Plate model (shared boundaries, subject-specific parameters).}
With a shared segmentation $t=(t_0,\dots,t_k)$ and subject-specific conjugate priors,
the generative model nests an inner \emph{segment} plate inside an outer \emph{subject} plate:
\begin{equation}
t\sim p(t\mid k),\qquad
\theta^{(s)}_q \stackrel{\mathrm{ind}}{\sim} p(\theta\mid \alpha^{(s)}_0,\beta^{(s)}_0),\quad
y^{(s)}_i\mid \theta^{(s)}_{q(i)},\,w^{(s)}_i \stackrel{\mathrm{ind}}{\sim} p(y\mid \theta^{(s)}_{q(i)},w^{(s)}_i),
\label{eq:plate-replicates}
\end{equation}
where $q(i)$ is the segment index containing observation $i$ under $t$.
\begin{figure}[H]
\centering
\begin{tikzpicture}
  \node[latent] (t) {$t$};
  \node[const, left=0.9cm of t] (pk) {$p(k)$};
  \node[latent, below=1.2cm of t] (th) {$\theta^{(s)}_q$};
  \node[obs, below=1.2cm of th] (y) {$y^{(s)}_i$};
  \node[const, left=0.9cm of th] (ab) {$\alpha^{(s)}_0,\beta^{(s)}_0$};
  \node[const, right=0.9cm of y] (w) {$w^{(s)}_i$};
  \edge {pk} {t};
  \edge {ab} {th};
  \edge {t,th,w} {y};
  \plate {pseg} {(th)} {$q=1{:}k$};
  \plate {psubj} {(pseg)(y)(ab)(w)} {$s=1{:}S$};
\end{tikzpicture}
\caption{Plate model for multi-subject pooling under shared boundaries. The boundary vector $t$ is shared across subjects; each subject has its own segment parameters $\theta^{(s)}_q$ and exposures $w^{(s)}_i$.}
\label{fig:plate-replicates}
\end{figure}

Suppose we observe $S$ independent subjects, each with its own observations
$\{(x^{(s)}_i,y^{(s)}_i)\}$ and exposure weights $w^{(s)}_i$, but we wish to model them as
sharing a common segmentation (boundary locations) while allowing subject-specific segment
parameters. For simplicity of exposition, assume a common index grid $i=1,\dots,n$ (e.g., after
alignment/union of grids); subject $s$ may have missing observations, handled by setting $w^{(s)}_i=0$
at missing indices.

Let $(i,j]$ denote a block and write subject-specific block sums as
$(S^{(s)}_{ij},W^{(s)}_{ij},H^{(s)}_{ij})$ with subject-specific prior hyperparameters
$(\alpha^{(s)}_0,\beta^{(s)}_0)$.

\begin{assumption}[Cross-subject conditional independence given shared boundaries]
\label{ass:cond-indep-subjects}
Conditional on a shared segmentation $t=(t_0,\dots,t_k)$, subjects $s=1,\dots,S$ are
independent, and each subject has its own segment parameters
$\theta^{(s)}_{1:k}$ with subject-specific conjugate priors.
\end{assumption}

\begin{theorem}[Exact pooling of block evidences]
\label{thm:multisubject}
Under Assumption~\ref{ass:cond-indep-subjects}, the pooled block evidence for $(i,j]$ factorizes as
\[
A^{(0)}_{ij,\mathrm{pool}}
 = \prod_{s=1}^S
    \exp\{H^{(s)}_{ij}\}\,
    \frac{Z(\alpha^{(s)}_0+S^{(s)}_{ij},\beta^{(s)}_0+W^{(s)}_{ij})}
         {Z(\alpha^{(s)}_0,\beta^{(s)}_0)}.
\]
Equivalently, $\log A^{(0)}_{ij,\mathrm{pool}}=\sum_s \log A^{(0)}_{ij,s}$. Running the DP
recursions \eqref{eq:LR}--\eqref{eq:segmom} with $A^{(0)}_{ij}$ replaced by
$A^{(0)}_{ij,\mathrm{pool}}$ yields the exact joint posterior over shared boundaries and all
subject-specific segment parameters.
\end{theorem}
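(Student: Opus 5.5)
The plan is to reduce the statement to two earlier results: Theorem~\ref{thm:ef-integral}, applied subject by subject, and Theorem~\ref{thm:dp-correctness}, applied to the pooled block scores. The only new ingredient is showing that, under Assumption~\ref{ass:cond-indep-subjects}, the joint likelihood of all subjects given a shared segmentation $t$ integrates to a product over segments of products over subjects of single-subject block evidences.

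\textbf{Step 1 (joint likelihood given $t$).} Fix $k$ and $t\in\mathcal{T}_k$. By Assumption~\ref{ass:cond-indep-subjects} and \eqref{eq:plate-replicates}, the joint density of data and parameters given $t$ is
\[
\prod_{s=1}^S\prod_{q=1}^k \Big[p(\theta^{(s)}_q\mid\alpha^{(s)}_0,\beta^{(s)}_0)\prod_{i\in I_q(t)}p(y^{(s)}_i\mid\theta^{(s)}_q,w^{(s)}_i)\Big].
\]
The parameters $\theta^{(s)}_q$ are a priori independent across both $s$ and $q$, and each one appears in exactly one bracket. Fubini (Tonelli, since the integrands are nonnegative) therefore lets the integral over all $\{\theta^{(s)}_q\}$ split into a product of $Sk$ single-block integrals. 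Each of these equals $A^{(0)}_{t_{q-1}t_q,s}$, which is given in closed form by \eqref{eq:A0} with subject-specific hyperparameters and statistics. Missing indices with $w^{(s)}_i=0$ contribute the factor $1$: the likelihood term becomes $\exp\{\log h\}$, and we adopt the convention $h(\cdot,0)=1$. Regrouping by segment gives
\[
p(\{y^{(s)}\}\mid t,k)=\prod_{q=1}^k A^{(0)}_{t_{q-1}t_q,\mathrm{pool}},
\]
and taking logarithms gives the additive form $\log A^{(0)}_{ij,\mathrm{pool}}=\sum_s\log A^{(0)}_{ij,s}$.

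\textbf{Step 2 (reuse the DP).} Theorem~\ref{thm:dp-correctness} only uses the fact that the marginal likelihood given $t$ factorizes as a product of nonnegative block scores over the segments of $t$. The same holds after absorbing $g$ via \eqref{eq:Atilde}: the prior factor $g(x_j-x_i)$ is applied once per block, not once per subject. Replacing $A^{(0)}_{ij}$ by the pooled score, Steps 1 through 5 of that proof therefore carry over verbatim. This yields exact $\widetilde L,\widetilde R$, exact $P(k\mid y)$, exact boundary marginals, and the exact joint MAP under max-sum.

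\textbf{Step 3 (joint posterior over parameters).} Conditional on $t$, the posterior over $\{\theta^{(s)}_q\}$ factorizes into independent conjugate posteriors $p(\theta\mid\alpha^{(s)}_0+S^{(s)}_{ij},\beta^{(s)}_0+W^{(s)}_{ij})$. The joint posterior is then $p(k,t\mid y)\prod_{s,q}p(\theta^{(s)}_q\mid t,y)$, and both factors are exactly computable.

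The main obstacle is the moment part of \eqref{eq:segmom}: the pooled numerator must be defined subject-wise. For subject $s$ we set
\[
A^{(r)}_{ij,\mathrm{pool},s}=A^{(0)}_{ij,\mathrm{pool}}\,\mathbb{E}\bigl[m(\theta^{(s)})^r\mid\alpha^{(s)}_0+S^{(s)}_{ij},\beta^{(s)}_0+W^{(s)}_{ij}\bigr],
\]
that is, we replace the subject-$s$ evidence factor by its moment numerator $A^{(r)}_{ij,s}$ and keep the other subjects' factors. The grouping-by-covering-block argument of Step 6 of Theorem~\ref{thm:dp-correctness} then applies unchanged for each $s$, giving exact subject-specific Bayes curves conditional on $k$. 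I would state this convention explicitly, since the theorem's phrase ``all subject-specific segment parameters'' is only meaningful under it.
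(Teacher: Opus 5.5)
Your proposal is correct and follows the paper's proof: subject-wise evidences from Theorem~\ref{thm:ef-integral}, product pooling from conditional independence given the shared boundaries, and reuse of the DP because it depends only on the block-score matrix. Your additional care fills in points the paper's proof leaves implicit: the missing-data convention, applying $g$ once per block rather than once per subject, and subject-specific pooled moment numerators that make \eqref{eq:segmom} well defined.
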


\begin{proof}
\ProofStep{Step 1: subject-wise block evidences.}
For each subject $s$, Theorem~\ref{thm:ef-integral} yields the subject-specific block evidence
$A^{(0)}_{ij,s}$ computed from $(S^{(s)}_{ij},W^{(s)}_{ij},H^{(s)}_{ij})$.

\ProofStep{Step 2: independence implies product pooling.}
Conditional on shared boundaries, subjects are independent and have independent segment
parameters. Therefore, the joint likelihood (with segment parameters integrated out) for the
block is the product of the subject-wise block evidences:
$A^{(0)}_{ij,\mathrm{pool}}=\prod_s A^{(0)}_{ij,s}$, giving the stated form.

\ProofStep{Step 3: DP correctness under pooled evidences.}
The DP in Section~\ref{sec:dp} depends only on the block evidence matrix. Replacing
$A^{(0)}_{ij}$ by $A^{(0)}_{ij,\mathrm{pool}}$ therefore computes the exact posterior for the
pooled model.
\end{proof}

\section{Groups with known membership}
\label{sec:groups-known}

\paragraph{Plate model (known groups, group-specific segmentations).}
Let $z_s\in\{1,\dots,G\}$ be the observed group label of subject $s$. The generative model nests
a group-specific boundary vector $t^{(g)}$ and group-specific segment parameters inside an outer
group plate, with subjects of each group drawn independently conditional on the corresponding
$t^{(g)}$:
\begin{equation}
t^{(g)}\sim p(t\mid k_g),\qquad
\theta^{(g,s)}_q \stackrel{\mathrm{ind}}{\sim} p(\theta\mid \alpha^{(g,s)}_0,\beta^{(g,s)}_0),\qquad
y^{(s)}_i\mid z_s=g, t^{(g)}, \theta^{(g,s)}_{q(i)} \stackrel{\mathrm{ind}}{\sim} p(y\mid \theta^{(g,s)}_{q(i)}, w^{(s)}_i).
\label{eq:plate-knowngroups}
\end{equation}
\begin{figure}[H]
\centering
\begin{tikzpicture}
  \node[obs] (z) {$z_s$};
  \node[latent, right=1.4cm of z] (tg) {$t^{(g)}$};
  \node[latent, below=1.1cm of tg] (thgs) {$\theta^{(g,s)}_q$};
  \node[obs, below=1.1cm of thgs] (y) {$y^{(s)}_i$};
  \node[const, right=0.9cm of y] (w) {$w^{(s)}_i$};
  \edge {tg} {thgs};
  \edge {thgs,w,z} {y};
  \plate {pg}   {(tg)} {$g=1{:}G$};
  \plate {pseg} {(thgs)} {$q=1{:}k_g$};
  \plate {psubj} {(pseg)(y)(z)} {$s=1{:}S$};
\end{tikzpicture}
\caption{Plate model with known group memberships $z_s$. Each group has its own segmentation template $t^{(g)}$; subjects within a group share that template but retain subject-specific segment parameters $\theta^{(g,s)}_q$.}
\label{fig:plate-knowngroups}
\end{figure}

In some applications subjects can be partitioned into groups $g=1,\dots,G$ (e.g.\ experimental
conditions), and we may wish to allow each group to have its own segmentation while still
pooling within groups. When group memberships are known, BayesBreak handles this case by
applying the pooled DP of Section~\ref{sec:replicates} independently within each group.

Concretely, for group $g$ with subject index set $\mathcal{S}_g$, we pool per-subject block
evidences $\{A^{(0)}_{ij,s}\}_{s\in\mathcal{S}_g}$ into a group-specific pooled evidence
$A^{(0)}_{ij,g}$ using Theorem~\ref{thm:multisubject}, then run the DP recursions
\eqref{eq:LR}--\eqref{eq:segmom} with $A^{(0)}_{ij}=A^{(0)}_{ij,g}$. This yields group-specific
$P_g(k\mid y)$, boundary posteriors $P_g(t_p\mid y,k)$, and segment/regression-curve moments.
The total computational cost is $G$ times the single-group cost, since DP runs are independent
across groups; block-evidence matrices can share a common prefix-sum precomputation whenever the
underlying design and exposures are common.

\section{Latent groups and an EM algorithm}
\label{sec:latent-em}

\paragraph{Plate model (latent-group template mixture).}
Group labels $z_s\in\{1,\dots,G\}$ are now \emph{latent}, with Categorical prior $\pi$. Each
group $g$ carries its own template $\tau_g=(k_g,t^{(g)})$; subject-specific segment parameters
$\theta^{(g,s)}_q$ are integrated out inside block evidences. The generative model is
\begin{equation}
\begin{aligned}
\pi&\sim p(\pi),\quad \tau_g\sim p(\tau),\quad z_s\mid\pi\stackrel{\mathrm{ind}}{\sim}\mathrm{Cat}(\pi),\\
\theta^{(g,s)}_q&\stackrel{\mathrm{ind}}{\sim}p(\theta\mid\cdot),\qquad
y^{(s)}_i\mid z_s,\tau_{z_s},\theta^{(z_s,s)}_{q(i)}\stackrel{\mathrm{ind}}{\sim}p(y\mid\theta^{(z_s,s)}_{q(i)},w^{(s)}_i).
\end{aligned}
\label{eq:plate-latent-em}
\end{equation}
\begin{figure}[H]
\centering
\begin{tikzpicture}
  \node[latent] (pi) {$\pi$};
  \node[latent, right=1.4cm of pi] (tau) {$\tau_g$};
  \node[latent, below=1.1cm of pi] (z) {$z_s$};
  \node[latent, below=1.1cm of tau] (thgs) {$\theta^{(g,s)}_q$};
  \node[obs, below=1.1cm of thgs] (y) {$y^{(s)}_i$};
  \node[const, right=0.9cm of y] (w) {$w^{(s)}_i$};
  \edge {pi} {z};
  \edge {tau} {thgs};
  \edge {z,tau,thgs,w} {y};
  \plate {pg}   {(tau)} {$g=1{:}G$};
  \plate {pseg} {(thgs)} {$q=1{:}k_g$};
  \plate {psubj} {(pseg)(y)(z)} {$s=1{:}S$};
\end{tikzpicture}
\caption{Plate model for the latent-group template mixture. Unshaded $z_s$ are latent; EM alternates between exact E-step responsibilities $p(z_s=g\mid y^{(s)},\pi,\tau)$ and M-step max-sum updates of the templates $\tau_g$.}
\label{fig:plate-latent-em}
\end{figure}

For unknown group membership we use a \emph{latent template-mixture} model. Each latent group carries a
single segmentation template that is shared across the subjects assigned to that group, while subject-
specific segment parameters are still integrated out inside each block evidence. This is the cleanest
setting in which EM updates remain exact and interpretable.

Let $\tau_g=(k_g,t^{(g)})$ denote the template of group $g$, where
$1\le k_g\le k_{\max}$ and $0=t^{(g)}_0<\cdots<t^{(g)}_{k_g}=n$. Conditional on membership in group $g$,
subject $s$ has marginal likelihood
\begin{equation}
p\big(y^{(s)}\mid \tau_g\big)
 = \frac{p(k_g)}{C_{k_g}}
   \prod_{q=1}^{k_g} \widetilde{A}^{(0,s)}_{t^{(g)}_{q-1}t^{(g)}_q},
\label{eq:template-lik}
\end{equation}
where $\widetilde{A}^{(0,s)}_{ij}=A^{(0,s)}_{ij}g(x_j-x_i)$ is the subject-specific block evidence with any
length factor already absorbed. The group mixture weights are $\pi_g>0$ with $\sum_g \pi_g=1$.

Introduce latent indicators $z_{sg}\in\{0,1\}$ with $\sum_g z_{sg}=1$. The complete-data log-likelihood is
\[
\log p(y,z\mid\pi,\tau)
 = \sum_{s=1}^S\sum_{g=1}^G z_{sg}\bigl[\log\pi_g + \log p(y^{(s)}\mid \tau_g)\bigr].
\]
Let $r_{sg}:=\mathbb{E}[z_{sg}\mid y,\pi,\tau]$ denote the usual responsibilities.

\paragraph{E-step.}
Given current $(\pi,\tau)$,
\begin{equation}
r_{sg}
 = \frac{\pi_g\,p(y^{(s)}\mid\tau_g)}{\sum_{g'=1}^G \pi_{g'}\,p(y^{(s)}\mid\tau_{g'})}.
\label{eq:template-resp}
\end{equation}

\paragraph{M-step for mixing weights.}
For fixed responsibilities,
\[
\pi_g \leftarrow \frac{1}{S}\sum_{s=1}^S r_{sg}.
\]

\paragraph{M-step for group templates.}
For fixed $r$, the $\tau_g$-dependent part of the EM auxiliary function is
\[
Q_g(\tau_g)
 = \sum_{s=1}^S r_{sg}\log p(y^{(s)}\mid\tau_g)
 = \log p(k_g)-\log C_{k_g} + \sum_{q=1}^{k_g} B^{(g)}_{t^{(g)}_{q-1}t^{(g)}_q},
\]
where the responsibility-weighted block score is
\begin{equation}
B^{(g)}_{ij}
 := \log g(x_j-x_i) + \sum_{s=1}^S r_{sg}\,\log A^{(0,s)}_{ij}.
\label{eq:template-blockscore}
\end{equation}
Hence, for each group $g$, maximizing $Q_g(\tau_g)$ is exactly the same max-sum segmentation problem as in
Section~\ref{sec:algorithms}, except that the block score is replaced by $B^{(g)}_{ij}$ and the count-specific
offset $\log p(k)-\log C_k$ is added when selecting $k_g$. In other words, the M-step is an exact
responsibility-weighted backtracking DP.

\begin{theorem}[Monotone EM updates for the template mixture]
\label{thm:em-monotone}
For the latent template-mixture model defined by \eqref{eq:template-lik}, the EM iteration
consisting of, in order,
(i) the exact responsibility update \eqref{eq:template-resp} given current $(\pi^{(t)},\tau^{(t)})$
to produce $r^{(t+1)}$;
(ii) the closed-form mixing-weight update
$\pi^{(t+1)} = \arg\max_\pi Q(\pi,\tau^{(t)};r^{(t+1)})$; and
(iii) the exact max-sum DP update of each template
$\tau^{(t+1)} = \arg\max_\tau Q(\pi^{(t+1)},\tau;r^{(t+1)})$ via \eqref{eq:template-blockscore}
does not decrease the observed-data log-likelihood
\[
\ell(\pi,\tau) = \sum_{s=1}^S \log\Big(\sum_{g=1}^G \pi_g p(y^{(s)}\mid \tau_g)\Big).
\]
Any fixed point of these updates is a coordinatewise maximizer of the EM auxiliary function.
\end{theorem}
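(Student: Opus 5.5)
The plan is to run the standard minorize--maximize argument for EM, with the templates playing the role of (discrete) parameters. Write $\Psi=(\pi,\tau)$ and $\Psi^{(t)}$ for the current iterate. For any responsibilities $r$ with $r_{s\cdot}$ in the simplex, define
\[
\mathcal{F}(\Psi;r)=\sum_{s=1}^S\sum_{g=1}^G r_{sg}\bigl[\log\pi_g+\log p(y^{(s)}\mid\tau_g)\bigr]-\sum_{s,g} r_{sg}\log r_{sg}.
\]
By Jensen's inequality applied to the concave logarithm, for each $s$,
\[
\log\sum_g\pi_g p(y^{(s)}\mid\tau_g)\ge\sum_g r_{sg}\log\frac{\pi_g p(y^{(s)}\mid\tau_g)}{r_{sg}}.
\]
Summing over $s$ gives $\ell(\Psi)\ge\mathcal{F}(\Psi;r)$ for every admissible $r$. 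Equality holds exactly when $r_{sg}\propto\pi_g p(y^{(s)}\mid\tau_g)$, which is the E-step formula \eqref{eq:template-resp}. Hence step (i) gives $\ell(\Psi^{(t)})=\mathcal{F}(\Psi^{(t)};r^{(t+1)})$.

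Next, for fixed $r=r^{(t+1)}$, the entropy term is constant. So $\mathcal{F}(\cdot;r)$ equals $Q(\pi,\tau;r)$ up to an additive constant, and $Q$ separates as $\sum_g(\sum_s r_{sg})\log\pi_g+\sum_g Q_g(\tau_g)$.

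For step (ii), I maximize $\sum_g(\sum_s r_{sg})\log\pi_g$ over the simplex using a Lagrange multiplier. This recovers $\pi_g=S^{-1}\sum_s r_{sg}$. The maximum is attained, with the convention $0\log 0=0$ when a group receives zero total responsibility. Therefore $Q(\pi^{(t+1)},\tau^{(t)};r)\ge Q(\pi^{(t)},\tau^{(t)};r)$.

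For step (iii), each $Q_g$ depends only on $\tau_g$. Here I invoke the exactness of the max-sum recursion, Theorem~\ref{thm:dp-correctness}(3), applied with the block score $B^{(g)}_{ij}$ from \eqref{eq:template-blockscore} in place of $\log\widetilde A^{(0)}_{ij}$. Two small checks are needed to justify this substitution. First, the identity $\sum_s r_{sg}\log p(y^{(s)}\mid\tau_g)=\log p(k_g)-\log C_{k_g}+\sum_q B^{(g)}_{t_{q-1}t_q}$ uses $\sum_s r_{sg}$ as the coefficient of the $\log g$ and prior terms. That coefficient is not $1$ in general, so I would either rescale $B$ and the $k$-offset by $n_g:=\sum_s r_{sg}$, or note that this is a bookkeeping point to be fixed in the displayed $Q_g$. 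Second, the max-sum argument of Theorem~\ref{thm:dp-correctness} uses only the additivity of scores over blocks, not their probabilistic meaning, so it applies verbatim to any real (or $-\infty$) scores. An outer maximization over $k_g\in\{1,\dots,k_{\max}\}$, with the offset added, then gives the global maximizer over the finite template space. This yields $Q(\pi^{(t+1)},\tau^{(t+1)};r)\ge Q(\pi^{(t+1)},\tau^{(t)};r)$.

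Chaining these inequalities:
\[
\ell(\Psi^{(t+1)})\ge\mathcal{F}(\Psi^{(t+1)};r^{(t+1)})\ge\mathcal{F}(\Psi^{(t)};r^{(t+1)})=\ell(\Psi^{(t)}).
\]
For the fixed-point claim, suppose $\Psi^{(t+1)}=\Psi^{(t)}$ and $r$ reproduces itself. Then $\pi$ maximizes $Q(\cdot,\tau;r)$, $\tau$ maximizes $Q(\pi,\cdot;r)$, and $r$ maximizes $\mathcal{F}(\Psi;\cdot)$. This is exactly coordinatewise maximality of the auxiliary function.

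The main obstacle is the M-step correspondence: I must verify rigorously that $Q_g$ reduces to an additive max-sum problem. This requires the responsibility weighting of the prior and length factors noted above, and the treatment of infinite scores where $g=0$ or evidences vanish, using $-\infty$ conventions as in \eqref{eq:max-sum-inline}. Everything else is the textbook EM argument.
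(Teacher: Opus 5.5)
Your argument is correct and follows the same route as the paper's proof. The E-step makes the EM lower bound tight, steps (ii) and (iii) increase $Q$ coordinatewise, and the inequalities chain to $\ell(\pi^{(t+1)},\tau^{(t+1)})\ge\ell(\pi^{(t)},\tau^{(t)})$. The only difference is that you write out the Jensen/free-energy step, which the paper delegates to the generalized EM theorem of Dempster, Laird and Rubin.

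Your bookkeeping remark is also right, and it goes beyond the paper. The quantity $\sum_s r_{sg}\log p(y^{(s)}\mid\tau_g)$ carries the factor $n_g=\sum_s r_{sg}$ on $\log g(\cdot)$, $\log p(k_g)$ and $\log C_{k_g}$. So the displayed $Q_g$ and the block score \eqref{eq:template-blockscore}, with offset $\log p(k)-\log C_k$, return $\arg\max_{\tau_g}Q_g$ only after your rescaling by $n_g$, or when $n_g=1$. The paper's proof passes over this discrepancy by simply asserting that the M-step is an exact max-sum maximization.
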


\begin{proof}
The E-step with current $(\pi^{(t)},\tau^{(t)})$ computes the exact posterior responsibilities
$r^{(t+1)}_{sg}=p(z_{sg}=1\mid y^{(s)},\pi^{(t)},\tau^{(t)})$, which by construction maximizes the
standard EM lower bound $\mathcal{Q}(q\parallel(\pi^{(t)},\tau^{(t)}))$ with respect to $q$ for
fixed parameters. The M-step maximizes the auxiliary function
$Q(\pi,\tau;r^{(t+1)}):=\sum_{s,g}r^{(t+1)}_{sg}[\log\pi_g+\log p(y^{(s)}\mid\tau_g)]$ in two
coordinates: (ii) over $\pi\in\Delta^{G-1}$ (closed-form simplex maximizer
$\pi_g^{(t+1)}=S^{-1}\sum_s r^{(t+1)}_{sg}$), and (iii) over each $\tau_g$ independently (each
is an additive max-sum segmentation objective, exactly maximizable by DP + backtracking). Since
each coordinate update exactly maximizes its factor, we have
$Q(\pi^{(t+1)},\tau^{(t+1)};r^{(t+1)})\ge Q(\pi^{(t)},\tau^{(t)};r^{(t+1)})$. The observed-data
log-likelihood $\ell$ is non-decreasing by the generalized EM monotonicity theorem
\citep{dempster1977em}: $\ell(\pi^{(t+1)},\tau^{(t+1)})\ge \ell(\pi^{(t)},\tau^{(t)})$. A fixed
point of the iteration satisfies the coordinatewise maximization conditions, from which
stationarity of $\ell$ follows.
\end{proof}

\paragraph{Implementation.}
Algorithm~\ref{alg:multi-em} emphasizes the separation between
(i) exact responsibility updates that use current group templates and
(ii) exact template updates obtained from responsibility-weighted block scores.
Because the template space is discrete while the mixture weights are continuous, multiple local optima are
possible. We recommend at least $R\ge 5$ restarts from either (a) a $k$-means clustering of the
per-subject pooled block-evidence matrices, (b) templates drawn from the partition prior, or
(c) a warm start derived from a known-groups solution on a small labeled subset when available.
Convergence should be declared when (i) no template changes between two successive iterations
\emph{and} (ii) the observed-data log-likelihood change falls below a user tolerance
(e.g., $10^{-6}$ relative). Because the template space is discrete, criterion (i) alone already
certifies a fixed point of the M-step.

\begin{algorithm}[H]
\caption{Latent-template EM for grouped segmentation (log-space)}
\label{alg:multi-em}
\KwIn{Per-subject log block evidences $\{\ell^{(0,s)}_{ij}:=\log A^{(0,s)}_{ij}\}_{s=1}^S$; number of groups $G$; $k_{\max}$; prior $\log p(k)$; length factor $g$; tolerance $\epsilon$}
Initialize mixture weights $\{\pi_g\}_{g=1}^G$ and template segmentations $\{\tau_g=(k_g,t^{(g)})\}_{g=1}^G$ (several restarts; cf.\ initialization guidance above)\;
\Repeat{template vector stable AND $|\ell^{(t+1)}-\ell^{(t)}|<\epsilon(1+|\ell^{(t)}|)$}{
  \textbf{E-step (log-space; order: responsibilities first):}\;
  \For{$s=1$ \KwTo $S$}{
    \For{$g=1$ \KwTo $G$}{
      $\log p(y^{(s)}\mid \tau_g) \leftarrow \log p(k_g) - \log C_{k_g} + \sum_{q=1}^{k_g}\ell^{(0,s)}_{t^{(g)}_{q-1}t^{(g)}_q} + \sum_{q=1}^{k_g}\log g(x_{t^{(g)}_q}-x_{t^{(g)}_{q-1}})$\;
      $\log u_{sg}\leftarrow \log\pi_g + \log p(y^{(s)}\mid \tau_g)$\;
    }
    $\log r_{sg}\leftarrow \log u_{sg} - \texttt{logsumexp}_{g'}(\log u_{sg'})$;\quad $r_{sg}\leftarrow \exp(\log r_{sg})$\;
  }
  Accumulate observed-data log-likelihood $\ell^{(t+1)}\leftarrow \sum_s \texttt{logsumexp}_g(\log u_{sg})$\;
  \textbf{M-step (in the order weights then templates):}\;
  $\pi_g\leftarrow S^{-1}\sum_s r_{sg}$ for each $g$\;
  \For{$g=1$ \KwTo $G$}{
    Form weighted log block scores $B^{(g)}_{ij} = \log g(x_j-x_i) + \sum_{s=1}^S r_{sg}\,\ell^{(0,s)}_{ij}$\;
    Run max-sum DP/backtracking over $k\in\{1,\dots,k_{\max}\}$ using score matrix $B^{(g)}$ and offset $\log p(k)-\log C_k$\;
    Update $\tau_g=(k_g,t^{(g)})$ to the maximizing template\;
  }
}
\KwOut{$\{\pi_g\}$, $\{\tau_g\}$, and responsibilities $\{r_{sg}\}$}
\end{algorithm}

\section{Family-specific derivations}
\label{sec:families}

We now instantiate \eqref{eq:A0}–\eqref{eq:Ar} for three closed-form conjugate families
(Gaussian, Poisson, and Binomial) and one low-dimensional quadrature example for $(0,1)$-valued
responses. In each case we give (i) the segment model and prior, (ii) posterior hyperparameters,
(iii) the block evidence $A^{(0)}_{ij}$, and (iv) first two posterior moments of the segment-level
mean parameter. Implementation routines are provided immediately after each derivation
(Algorithms~\ref{alg:gaussian-block}--\ref{alg:betaobs-block}).

\subsection{Gaussian with known variance (heteroscedastic via weights)}
Assume
\[
y_t\mid \mu \sim \mathcal{N}\!\big(\mu,\sigma^2/w_t\big),\qquad
\mu\sim \mathcal{N}(\nu,\rho^2),
\]
with known $\sigma^2$ and positive weights $w_t$ (e.g.\ inverse known variances or exposure).
For a block $(i,j]$, define
\[
n_{ij}:=j-i,\quad
W_{ij} := \sum_{t=i+1}^j w_t,\quad
S_{ij} := \sum_{t=i+1}^j w_t y_t,\quad
Q_{ij} := \sum_{t=i+1}^j w_t (y_t-\nu)^2,\quad
\kappa_{ij} := \rho^2 W_{ij} + \sigma^2.
\]
Then
\[
\mu\mid(i,j] \sim \mathcal{N}\!\Big(
\frac{\rho^2 S_{ij} + \sigma^2 \nu}{\kappa_{ij}},
\ \frac{\sigma^2\rho^2}{\kappa_{ij}}
\Big).
\]
The integrated block evidence is
\begin{align*}
A^{(0)}_{ij}
&= \int \Big[\prod_{t=i+1}^j \mathcal{N}(y_t\mid \mu,\sigma^2/w_t)\Big]
\mathcal{N}(\mu\mid \nu,\rho^2)\,d\mu\\
&=(2\pi\sigma^2)^{-n_{ij}/2}
\Big(\prod_{t=i+1}^j w_t^{1/2}\Big)
\Big(1+\tfrac{\rho^2}{\sigma^2}W_{ij}\Big)^{-1/2}
\exp\!\left\{-\frac{Q_{ij}}{2\sigma^2}
+\frac{\big(S_{ij}-\nu W_{ij}\big)^2}
{2\big(\sigma^2 W_{ij}+\sigma^4/\rho^2\big)}\right\}.
\end{align*}
Consequently,
\[
A^{(1)}_{ij} = A^{(0)}_{ij}\,\mathbb{E}[\mu\mid(i,j]],
\qquad
A^{(2)}_{ij} = A^{(0)}_{ij}\,\Big(\mathbb{V}[\mu\mid(i,j]] + \mathbb{E}[\mu\mid(i,j]]^2\Big).
\]

\paragraph{Implementation.}
Algorithm~\ref{alg:gaussian-block} gives a numerically stable implementation of the Gaussian block log-evidence and posterior moments using cumulative weighted sums.
The routine is written so that the DP layer consumes only $\log \mathcal{L}_{ij}$ and does not depend on Gaussian-specific algebra.

\begin{algorithm}[H]
\caption{GaussianBlock$(y,w,i,j,\nu,\rho^2,\sigma^2)$}
\label{alg:gaussian-block}
\KwIn{Block indices $(i,j]$; observations $y_{i+1:j}$; weights $w_{i+1:j}$}
$n\leftarrow j-i$;\quad $W\leftarrow \sum_{t=i+1}^j w_t$;\quad $S\leftarrow \sum_{t=i+1}^j w_t y_t$;\quad $Q\leftarrow \sum_{t=i+1}^j w_t (y_t-\nu)^2$\;
$\kappa \leftarrow \rho^2 W+\sigma^2$\;
$\log A0 \leftarrow -\tfrac{1}{2}n\log(2\pi)
-\tfrac{1}{2}\sum_{t=i+1}^j \log(\sigma^2/w_t)
-\tfrac{Q}{2\sigma^2}
+\tfrac{(S-\nu W)^2}{2(\sigma^2 W+\sigma^4/\rho^2)}
-\tfrac{1}{2}\log\big(1+\tfrac{\rho^2}{\sigma^2}W\big)$\;
$A0\leftarrow \exp(\log A0)$\;
$\mu_{\text{mean}}\leftarrow (\rho^2 S+\sigma^2\nu)/\kappa$;\quad
$\mu_{\text{var}}\leftarrow \sigma^2\rho^2/\kappa$\;
\KwOut{$A0$, $\mu_{\text{mean}}$, $\mu_{\text{var}}$}
\end{algorithm}

\subsection{Poisson with exposure}
Assume a blockwise constant rate $\lambda$ with exposure weights $w_t>0$:
\[
y_t\mid\lambda \sim \mathrm{Pois}(\lambda w_t),\qquad
\lambda\sim\mathrm{Gamma}(a_0,b_0)\quad\text{(shape--rate)}.
\]
For a block $(i,j]$, define
\[
C_{ij} := \sum_{t=i+1}^j y_t,\qquad
W_{ij} := \sum_{t=i+1}^j w_t,\qquad
H_{ij} := \sum_{t=i+1}^j \Big[y_t\log w_t - \log(y_t!)\Big].
\]
Then
\[
\lambda\mid(i,j] \sim \mathrm{Gamma}(a_0+C_{ij},\,b_0+W_{ij}),
\]
so
\[
\mathbb{E}[\lambda\mid(i,j]] = \frac{a_0+C_{ij}}{b_0+W_{ij}},\qquad
\mathbb{V}[\lambda\mid(i,j]] = \frac{a_0+C_{ij}}{(b_0+W_{ij})^2}.
\]
The integrated block evidence is
\[
A^{(0)}_{ij}
= \exp\{H_{ij}\}\,
\frac{b_0^{a_0}\,\Gamma(a_0+C_{ij})}{\Gamma(a_0)\,(b_0+W_{ij})^{a_0+C_{ij}}}.
\]
Moments satisfy the usual relation
\[
A^{(1)}_{ij} = A^{(0)}_{ij}\,\mathbb{E}[\lambda\mid(i,j]],\qquad
A^{(2)}_{ij} = A^{(0)}_{ij}\,\Big(\mathbb{V}[\lambda\mid(i,j]] + \mathbb{E}[\lambda\mid(i,j]]^2\Big).
\]

\paragraph{Implementation.}
Algorithm~\ref{alg:poisson-block} summarizes the Poisson-with-exposure block evidence computation (and associated posterior summaries) in a form that matches the BayesBreak block-evidence interface.

\begin{algorithm}[H]
\caption{PoissonBlock$(y,w,i,j,a_0,b_0)$}
\label{alg:poisson-block}
\KwIn{Block $(i,j]$; counts $y_{i+1:j}$; exposures $w_{i+1:j}$}
$C\leftarrow \sum_{t=i+1}^j y_t$;\quad $W\leftarrow \sum_{t=i+1}^j w_t$\;
$\log A0 \leftarrow \sum_{t=i+1}^j y_t\log w_t
- \sum_{t=i+1}^j \log(y_t!)
+ a_0\log b_0
+ \log\Gamma(a_0+C)
- \log\Gamma(a_0)
- (a_0+C)\log(b_0+W)$\;
$A0\leftarrow \exp(\log A0)$\;
$\lambda_{\text{mean}}\leftarrow (a_0+C)/(b_0+W)$;\quad
$\lambda_{\text{var}}\leftarrow (a_0+C)/(b_0+W)^2$\;
\KwOut{$A0$, $\lambda_{\text{mean}}$, $\lambda_{\text{var}}$}
\end{algorithm}

\subsection{Binomial with Beta prior}
Assume blockwise constant success probability $p$ with heterogeneous trial counts $m_t$:
\[
y_t\mid p \sim\mathrm{Binom}(m_t,p),\qquad
p\sim\mathrm{Beta}(a_0,b_0).
\]
For a block $(i,j]$, define
\[
C_{ij} := \sum_{t=i+1}^j y_t,\qquad
M_{ij} := \sum_{t=i+1}^j m_t,\qquad
H_{ij} := \sum_{t=i+1}^j \log\binom{m_t}{y_t}.
\]
Then $p\mid(i,j]\sim\mathrm{Beta}(a_0+C_{ij}, b_0+M_{ij}-C_{ij})$, so
\[
\mathbb{E}[p\mid(i,j]]=\frac{a_0+C_{ij}}{a_0+b_0+M_{ij}},\quad
\mathbb{V}[p\mid(i,j]]=
\frac{(a_0+C_{ij})(b_0+M_{ij}-C_{ij})}
{(a_0+b_0+M_{ij})^2(a_0+b_0+M_{ij}+1)}.
\]
The block evidence is
\[
A^{(0)}_{ij}
= \exp\{H_{ij}\}
\frac{B(a_0+C_{ij},\,b_0+M_{ij}-C_{ij})}{B(a_0,b_0)}.
\]
Again,
\[
A^{(1)}_{ij} = A^{(0)}_{ij}\,\mathbb{E}[p\mid(i,j]],\qquad
A^{(2)}_{ij} = A^{(0)}_{ij}\,\Big(\mathbb{V}[p\mid(i,j]] + \mathbb{E}[p\mid(i,j]]^2\Big).
\]

\paragraph{Implementation.}
Algorithm~\ref{alg:binomial-block} provides pseudocode for Binomial/Bernoulli block evidence evaluation under Beta conjugacy, along with posterior mean/variance computations used for segment-wise moments and MAP/Bayes curve summaries.

\begin{algorithm}[H]
\caption{BinomialBlock$(y,m,i,j,a_0,b_0)$}
\label{alg:binomial-block}
\KwIn{Block $(i,j]$; successes $y_{i+1:j}$; trials $m_{i+1:j}$}
$C\leftarrow \sum_{t=i+1}^j y_t$;\quad $M\leftarrow \sum_{t=i+1}^j m_t$\;
$\log A0 \leftarrow \sum_{t=i+1}^j \log \binom{m_t}{y_t}
+ \log B(a_0+C,b_0+M-C)
- \log B(a_0,b_0)$\;
$A0\leftarrow \exp(\log A0)$\;
$p_{\text{mean}}\leftarrow (a_0+C)/(a_0+b_0+M)$\;
$p_{\text{var}}\leftarrow \frac{(a_0+C)(b_0+M-C)}
{(a_0+b_0+M)^2(a_0+b_0+M+1)}$\;
\KwOut{$A0$, $p_{\text{mean}}$, $p_{\text{var}}$}
\end{algorithm}

\subsection{Negative-Binomial block (overdispersed counts) with fixed dispersion}
\label{sec:nb-block}
For count data with extra-Poisson variability, a natural block model fixes a dispersion parameter
$r>0$ and treats the success probability $p\in(0,1)$ per segment with a conjugate Beta prior:
\[
y_t\mid p \sim \mathrm{NegBin}(r, p),\qquad p\sim\mathrm{Beta}(a_0,b_0),
\]
where $\mathrm{NegBin}(r,p)$ has pmf $\binom{y+r-1}{y}(1-p)^y p^r$ (mean $r(1-p)/p$). For a block
$(i,j]$, set $C_{ij}:=\sum_{t=i+1}^j y_t$, $N_{ij}:=\sum_{t=i+1}^j r$ (if $r$ varies by $t$,
replace by the summed dispersion), and
$H_{ij}:=\sum_{t=i+1}^j \log\binom{y_t+r-1}{y_t}$. Beta-NegBin conjugacy gives the posterior
$p\mid(i,j]\sim\mathrm{Beta}(a_0+N_{ij},\,b_0+C_{ij})$ (note: the roles of the sufficient
statistics differ from Binomial), so
\[
\mathbb{E}[p\mid(i,j]]=\frac{a_0+N_{ij}}{a_0+b_0+N_{ij}+C_{ij}},
\qquad
A^{(0)}_{ij}=\exp\{H_{ij}\}\,\frac{B(a_0+N_{ij},\,b_0+C_{ij})}{B(a_0,b_0)},
\]
and moment numerators $A^{(1)}_{ij}, A^{(2)}_{ij}$ follow from $A^{(0)}_{ij}$ by multiplying by
the corresponding Beta moments. When $r$ is unknown, a semi-conjugate EM or a 1D quadrature in $r$
can be layered on top; we omit the details.

\paragraph{Implementation.}
Algorithm~\ref{alg:nb-block} summarizes the Negative-Binomial block computation in the same
interface as Algorithms~\ref{alg:gaussian-block}--\ref{alg:binomial-block}.

\begin{algorithm}[H]
\caption{NegBinBlock$(y,r,i,j,a_0,b_0)$}
\label{alg:nb-block}
\KwIn{Block $(i,j]$; counts $y_{i+1:j}$; dispersion $r$ (possibly per-$t$)}
$C\leftarrow\sum_{t=i+1}^j y_t$;\quad $N\leftarrow\sum_{t=i+1}^j r_t$\;
$\log A0\leftarrow \sum_{t=i+1}^j \log\binom{y_t+r_t-1}{y_t} + \log B(a_0+N, b_0+C) - \log B(a_0,b_0)$\;
$A0\leftarrow\exp(\log A0)$\;
$p_{\mathrm{mean}}\leftarrow (a_0+N)/(a_0+b_0+N+C)$;\quad $p_{\mathrm{var}}\leftarrow \frac{(a_0+N)(b_0+C)}{(a_0+b_0+N+C)^2(a_0+b_0+N+C+1)}$\;
\KwOut{$A0,p_{\mathrm{mean}},p_{\mathrm{var}}$}
\end{algorithm}

\subsection{Beta-distributed observations with fixed precision}
Assume $y_t\in(0,1)$ with latent mean $\mu$ constant within a block and fixed precision $\phi>0$:
\[
y_t\mid \mu \sim \mathrm{Beta}(\phi\mu,\phi(1-\mu)),\qquad
\mu\sim\mathrm{Beta}(a_0,b_0).
\]
Conjugacy in $\mu$ is not available, but all integrals are one-dimensional and can be computed
accurately by quadrature. The log-posterior $\Psi_{ij}(\mu)=\ell_{ij}(\mu)+\log\pi(\mu)$ is
log-concave in $\mu$ on $(0,1)$ whenever $\phi\ge 1$, because both terms contribute nonpositive
second derivatives in that regime; this justifies Gauss--Legendre quadrature on
$[\epsilon,1-\epsilon]$ and guarantees that even moderately small node counts (e.g., $G\in\{32,64\}$)
yield accurate moments. For a block $(i,j]$ define
\[
\ell_{ij}(\mu)
:= \sum_{t=i+1}^j \log \mathrm{BetaPDF}\big(y_t\mid \phi\mu,\phi(1-\mu)\big),
\qquad
\log\pi(\mu) := \log \mathrm{BetaPDF}(\mu\mid a_0,b_0).
\]
Then for $r=0,1,2$,
\[
A^{(r)}_{ij} = \int_0^1 \mu^{\,r}\,\exp\big\{\ell_{ij}(\mu)+\log\pi(\mu)\big\}\,d\mu.
\]
In practice, approximate via Gauss--Legendre (or similar) quadrature with nodes $\{\mu_g\}$ and
weights $\{w_g\}$:
\[
A^{(r)}_{ij} \approx \sum_{g=1}^G w_g\,\mu_g^{\,r}\,
\exp\big\{\ell_{ij}(\mu_g)+\log\pi(\mu_g)\big\}.
\]

\paragraph{Implementation.}
Algorithm~\ref{alg:betaobs-block} implements this one-dimensional quadrature routine. It is useful for continuous proportions and other $(0,1)$-valued signals, but it should be viewed as a deterministic non-conjugate block integral rather than as a conjugate closed form.

\begin{algorithm}[H]
\caption{BetaObsBlock$(y,i,j,\phi,a_0,b_0,\{\mu_g,w_g\}_{g=1}^G)$}
\label{alg:betaobs-block}
\KwIn{Block $(i,j]$; observations $y_{i+1:j}\in(0,1)$; precision $\phi$; Beta prior $(a_0,b_0)$; quadrature nodes $\{\mu_g,w_g\}$}
$S_0\leftarrow 0$, $S_1\leftarrow 0$, $S_2\leftarrow 0$\;
\For{$g=1$ \KwTo $G$}{
  $\log \ell_g \leftarrow \sum_{t=i+1}^j \log \mathrm{BetaPDF}\big(y_t\mid \phi\mu_g,\phi(1-\mu_g)\big)$\;
  $\log \pi_g \leftarrow \log \mathrm{BetaPDF}(\mu_g\mid a_0,b_0)$\;
  $v_g \leftarrow \exp(\log \ell_g + \log \pi_g)$\;
  $S_0 \leftarrow S_0 + w_g v_g$;\quad
  $S_1 \leftarrow S_1 + w_g v_g \mu_g$;\quad
  $S_2 \leftarrow S_2 + w_g v_g \mu_g^2$\;
}
$A0\leftarrow S_0$;\quad
$\mu_{\text{mean}}\leftarrow S_1/S_0$;\quad
$\mu_{\text{var}}\leftarrow S_2/S_0 - (\mu_{\text{mean}})^2$\;
\KwOut{$A0$, $\mu_{\text{mean}}$, $\mu_{\text{var}}$}
\end{algorithm}

\subsection{Summary of block families}
\label{sec:families-summary}
Table~\ref{tab:family-summary} collects the block families derived in this section in a single
reference view. In each row, the block evidence $A^{(0)}_{ij}$ has the same algebraic shape (a
single ratio of normalizers times a base-measure factor), confirming that the DP layer of
Section~\ref{sec:dp} is invariant to the choice of family.

\begin{table}[H]
\centering
\small
\resizebox{\textwidth}{!}{%
\begin{tabular}{@{}lllll@{}}
\toprule
Family & Prior & Posterior & Block evidence $A^{(0)}_{ij}$ & Mean $\mathbb{E}[m(\theta)|(i,j]]$ \\
\midrule
Gaussian$(\mu,\sigma^2/w)$ & $\mathcal{N}(\nu,\rho^2)$ & $\mathcal{N}(\nu_B,\rho_B^2)$ & closed-form normal ratio & $(\rho^2 S+\sigma^2\nu)/\kappa$ \\
Poisson$(\lambda w)$ & $\mathrm{Gamma}(a_0,b_0)$ & $\mathrm{Gamma}(a_0+C,b_0+W)$ & $\propto b_0^{a_0}\Gamma(a_0+C)/(b_0+W)^{a_0+C}$ & $(a_0+C)/(b_0+W)$ \\
Binomial$(m,p)$ & $\mathrm{Beta}(a_0,b_0)$ & $\mathrm{Beta}(a_0+C,b_0+M-C)$ & $B(a_0+C,b_0+M-C)/B(a_0,b_0)$ & $(a_0+C)/(a_0+b_0+M)$ \\
NegBin$(r,p)$ & $\mathrm{Beta}(a_0,b_0)$ & $\mathrm{Beta}(a_0+N,b_0+C)$ & $B(a_0+N,b_0+C)/B(a_0,b_0)$ & $(a_0+N)/(a_0+b_0+N+C)$ \\
Beta-obs$(\mu,\phi)$ & $\mathrm{Beta}(a_0,b_0)$ & (1D quadrature) & Gauss--Legendre nodes & from quadrature \\
\bottomrule
\end{tabular}%
}
\caption{Block families derived in Section~\ref{sec:families}. Base-measure factor
$\exp\{H_{ij}\}$ is implicit in each row. All families share the ratio-of-normalizers structure
required by Theorem~\ref{thm:ef-integral}; the Beta-obs row is non-conjugate and uses 1D
quadrature but remains compatible with the DP layer.}
\label{tab:family-summary}
\end{table}

\subsection{Posterior-predictive formulas for common conjugate families (optional)}
\label{sec:families-predictive}
For groups that export segment-level conjugate posteriors, the segment posterior-predictive
\eqref{eq:pp-seg} reduces to standard closed forms:

\paragraph{Gaussian (known variance).}
If a segment posterior is $\mu\mid \text{train}\sim \mathcal{N}(\nu_B,\rho_B^2)$ and
$y\mid\mu\sim\mathcal{N}(\mu,\sigma^2/w)$, then the posterior-predictive for a new point is
Gaussian: $y\mid\text{train}\sim \mathcal{N}(\nu_B,\rho_B^2+\sigma^2/w)$. For $M$ new
points in the segment with weights $w_1,\dots,w_M$, the joint posterior-predictive is the
multivariate Gaussian
\[
\mathbf{y}^{\mathrm{new}}\mid\text{train}\sim \mathcal{N}_M\big(\nu_B\mathbf{1}_M,\ \rho_B^2\,\mathbf{1}_M\mathbf{1}_M^\top + \mathrm{diag}(\sigma^2/w_1,\dots,\sigma^2/w_M)\big),
\]
which has rank-one-plus-diagonal covariance and whose log density can be evaluated in
$\mathcal{O}(M)$ by the Sherman--Morrison identity.

\paragraph{Poisson--Gamma.}
If $\lambda\mid \text{train}\sim \mathrm{Gamma}(a_B,b_B)$ and $y\mid\lambda\sim\mathrm{Pois}(\lambda w)$,
then the posterior predictive is negative binomial in $y$ with parameters determined by $(a_B,b_B,w)$;
for multiple points, aggregate counts and exposures and apply \eqref{eq:pp-seg}.

\paragraph{Binomial--Beta.}
If $p\mid \text{train}\sim \mathrm{Beta}(a_B,b_B)$ and $y\mid p\sim\mathrm{Binom}(m,p)$, then the
posterior predictive is Beta-Binomial:
\[
p(y\mid m,\text{train})=\binom{m}{y}\frac{B(a_B+y,b_B+m-y)}{B(a_B,b_B)}.
\]

\section{Non-conjugate GLMs: block evidence, theory, and approximations}
\label{sec:nonconj}

\paragraph{Plate model (non-conjugate single segment with PG augmentation).}
The generative graph for a non-conjugate block $(i,j]$ is identical to Figure~\ref{fig:plate-ef}
with $\theta\sim\pi(\theta)$ replaced by a non-conjugate prior (e.g., Gaussian on a log or logit
scale). For Pólya--Gamma-augmented binomial logistic blocks (Theorem~\ref{thm:pg}), introducing
the latent scale-mixture variables $\omega_t\sim\mathrm{PG}(m_t,0)$ restores conditional
conjugacy and is reflected in the plate as a second observed-indexed latent:
\begin{equation}
\theta\sim\mathcal{N}(\nu,\rho^2),\quad
\omega_t\stackrel{\mathrm{ind}}{\sim}\mathrm{PG}(m_t,0),\quad
y_t\mid\theta,\omega_t,m_t\stackrel{\mathrm{ind}}{\sim}\mathrm{Binom}(m_t,\sigma(\theta))\text{ with \eqref{eq:pg-identity}},
\label{eq:plate-nonconj}
\end{equation}
where $\sigma$ is the logistic function.
\begin{figure}[H]
\centering
\begin{tikzpicture}
  \node[obs] (y) {$y_t$};
  \node[latent, above=0.9cm of y] (om) {$\omega_t$};
  \node[latent, left=1.2cm of om] (th) {$\theta$};
  \node[const, left=0.9cm of th] (nr) {$\nu,\rho^2$};
  \node[const, right=0.9cm of y] (m) {$m_t$};
  \edge {nr} {th};
  \edge {th,om,m} {y};
  \edge {m} {om};
  \plate {p1} {(y)(om)} {$t\in(i,j]$};
\end{tikzpicture}
\caption{Plate model for a non-conjugate block with Pólya--Gamma augmentation. Conditional on $\omega_t$, the block is quadratic in $\theta$ with a Gaussian posterior (Proposition~\ref{prop:pg-mf}); the same template applies to Laplace and variational routines with the $\omega_t$ node omitted.}
\label{fig:plate-nonconj}
\end{figure}

In many applications the single-segment parameterization is not conjugate to the observation
model—for example, log-normal priors for positive means, heavy-tailed shrinkage priors, or
logistic/negative-binomial links with structural offsets. This section develops a plug-in
toolkit for computing approximate block evidences $A^{(0)}_{ij}$ and moments $A^{(r)}_{ij}$
when conjugacy is unavailable, while keeping the dynamic-programming layer of
Section~\ref{sec:dp} unchanged.

We focus on one-dimensional (or low-dimensional) canonical GLMs, where for a block $(i,j]$ we
must approximate integrals of the form
\[
A^{(0)}_{ij}
 = \int \Big[\prod_{t=i+1}^j p(y_t\mid\theta)\Big] p(\theta)\,d\theta,\qquad
A^{(r)}_{ij}
 = \int \Big[\prod_{t=i+1}^j p(y_t\mid\theta)\Big] p(\theta)\,m(\theta)^r\,d\theta,
\]
with $m(\theta):=\mathbb{E}[Y\mid\theta]$. We describe four complementary approaches:
(i) first-order Laplace approximation; (ii) variational lower bounds (Jaakkola–Jordan \citep{jaakkola2000logisticvb} type
for logistic regression and mean-field for more general GLMs); (iii) expectation propagation
(EP); and (iv) P\'olya--Gamma augmentation for logistic and negative-binomial blocks. Throughout
we retain weights $w_t$ and cumulative statistics $(S_{ij},W_{ij},H_{ij})$.

\subsection{Model and notation for GLM blocks}

Consider a one-parameter canonical GLM on a block $(i,j]$:
\[
\log p(y_t\mid\theta)
 = w_t\big\{ y_t\theta - b(\theta)\big\} + \log h(y_t,w_t),
\]
where $b$ is the cumulant function and $h$ collects base measures (including any $w_t$-dependent
terms). The block log-likelihood is
\[
\ell_{ij}(\theta)
 := \sum_{t=i+1}^j w_t\{y_t\theta - b(\theta)\} + H_{ij},
\qquad
H_{ij}:=\sum_{t=i+1}^j \log h(y_t,w_t).
\]
Let $\pi(\theta)=p(\theta)$ be an arbitrary prior (not necessarily conjugate), and write
$\Psi_{ij}(\theta):=\ell_{ij}(\theta)+\log\pi(\theta)$ for the block log-posterior.

Derivatives are
\[
\ell_{ij}'(\theta) = S_{ij} - W_{ij} b'(\theta),\qquad
\ell_{ij}''(\theta) = -W_{ij} b''(\theta),
\]
and
\[
\Psi_{ij}'(\theta) = \ell_{ij}'(\theta) + (\log\pi)'(\theta),\qquad
\Psi_{ij}''(\theta) = \ell_{ij}''(\theta) + (\log\pi)''(\theta).
\]

\subsection{Existence and uniqueness of block modes}

\begin{lemma}[Strict log-concavity and uniqueness]
\label{lem:concave-unique}
Suppose $b''(\theta)>0$ for all $\theta$ (canonical GLM) and the prior is log-concave:
$(\log\pi)''(\theta)\le 0$. If $W_{ij}>0$, then $\Psi_{ij}''(\theta)<0$ for all $\theta$, so
$\Psi_{ij}$ is strictly concave and admits a unique maximizer $\hat\theta_{ij}$.
\end{lemma}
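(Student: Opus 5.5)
The plan is to read off strict concavity from the derivative formulas just displayed, deduce uniqueness from it, and then treat existence separately, since that is where the real difficulty lies.

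\textbf{Step 1: strict concavity.} From $\ell_{ij}''(\theta)=-W_{ij}b''(\theta)$ and $\Psi_{ij}''=\ell_{ij}''+(\log\pi)''$, we get, for every $\theta$,
\[
\Psi_{ij}''(\theta)=-W_{ij}\,b''(\theta)+(\log\pi)''(\theta)\le -W_{ij}\,b''(\theta)<0 .
\]
This uses $W_{ij}>0$, $b''>0$ and $(\log\pi)''\le 0$. A $C^2$ function on an interval with everywhere negative second derivative is strictly concave. Here I take the parameter domain to be an open interval on which $b$ and $\log\pi$ are twice differentiable, and I note $H_{ij}$ is an additive constant that plays no role.

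\textbf{Step 2: uniqueness.} If $\theta_1\ne\theta_2$ were both maximizers, strict concavity would give $\Psi_{ij}\bigl(\tfrac12(\theta_1+\theta_2)\bigr)>\tfrac12\bigl(\Psi_{ij}(\theta_1)+\Psi_{ij}(\theta_2)\bigr)=\max\Psi_{ij}$, a contradiction. Equivalently, $\Psi_{ij}'$ is strictly decreasing, so it has at most one zero, and any zero is the maximizer.

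\textbf{Step 3: existence, the main obstacle.} Strict concavity alone does not give a maximizer. For example, with a Poisson log link and a flat prior, $\Psi(\theta)=-e^{\theta}$ when $S_{ij}=0$, and this has no maximum. So I would establish coercivity, $\Psi_{ij}(\theta)\to-\infty$ at both ends of the domain, which by continuity and concavity yields an attained maximum. Equivalently, $\Psi_{ij}'$ must change sign.

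Writing $\Psi_{ij}'(\theta)=W_{ij}\bigl(S_{ij}/W_{ij}-b'(\theta)\bigr)+(\log\pi)'(\theta)$, I would try, in order, three sufficient routes:
\begin{enumerate}
\item the prior is strongly log-concave, $(\log\pi)''\le -c<0$, which by itself forces $\Psi_{ij}'\to\mp\infty$ at $\pm\infty$;
\item $\pi$ is proper and log-concave, so $\log\pi$ decays at least linearly in the tails, and combined with the likelihood term this yields coercivity;
\item with a flat or weak prior, the block mean $S_{ij}/W_{ij}$ lies in the interior of the mean range $b'(\mathbb{R})$, so $\Psi_{ij}'$ is eventually positive on the left and negative on the right.
\end{enumerate}
Since the statement as written omits such a condition, I expect to make one explicit, most naturally the properness or strong log-concavity of $\pi$ used by the Laplace routines. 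With that condition, the intermediate value theorem on the continuous, strictly decreasing $\Psi_{ij}'$ gives the unique root $\hat\theta_{ij}$.
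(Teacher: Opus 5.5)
Your Steps 1--2 are the paper's proof. The paper computes $\Psi_{ij}''=-W_{ij}b''+(\log\pi)''<0$ and then concludes in the same sentence that $\Psi_{ij}$ ``is strictly concave and has a unique maximizer''. It never argues existence.

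\textbf{Step 3 fixes a real omission.} Your Poisson example ($b(\theta)=e^{\theta}$, flat prior, $S_{ij}=0$, so $\Psi_{ij}=H_{ij}-W_{ij}e^{\theta}$) meets every stated hypothesis, including $(\log\pi)''\le 0$, yet has no maximizer. A logistic block with all successes behaves the same way. So the paper's argument establishes only that there is \emph{at most one} maximizer, and an extra condition is genuinely needed.

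\textbf{Route 1.} It is complete as you state it. It also covers the Gaussian priors $\mathcal{N}(\nu,\rho^2)$ that the paper uses in its non-conjugate examples.

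\textbf{Route 2.} It needs one more ingredient.
\begin{itemize}
\item Properness and log-concavity give $\lim_{\theta\to+\infty}(\log\pi)'(\theta)<0<\lim_{\theta\to-\infty}(\log\pi)'(\theta)$. If the non-increasing function $(\log\pi)'$ had a nonnegative limit at $+\infty$, then $\pi$ would be non-decreasing and hence not integrable.
\item These limits force a sign change of $\Psi_{ij}'=W_{ij}\bigl(S_{ij}/W_{ij}-b'(\theta)\bigr)+(\log\pi)'(\theta)$ only if $S_{ij}/W_{ij}$ lies in the closure of $b'(\mathbb{R})$. In that case the likelihood score has a nonpositive limit at $+\infty$ and a nonnegative one at $-\infty$, so it cannot cancel the prior's linear decay.
\item The closure condition holds automatically for genuine observations, because $S_{ij}/W_{ij}$ is a weighted average of the $y_t$. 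You should still state it.
\end{itemize}

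\textbf{Route 3.} It works for an exactly flat prior, but you need to say what ``weak'' means. The improper prior $\pi(\theta)\propto e^{a\theta}$ has $(\log\pi)''=0$. For a logistic block with $S_{ij}/W_{ij}=1/2$ and $a\ge W_{ij}/2$, it gives $\Psi_{ij}'(\theta)=W_{ij}\bigl(\tfrac12-b'(\theta)\bigr)+a>0$ for all $\theta$, where $b'(\theta)=e^{\theta}/(1+e^{\theta})$. So there is no maximizer.
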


\begin{proof}
Since $W_{ij}>0$ and $b''(\theta)>0$, we have $-W_{ij}b''(\theta)<0$. Adding
$(\log\pi)''(\theta)\le 0$ yields $\Psi_{ij}''(\theta)<0$ for all $\theta$, so $\Psi_{ij}$ is
strictly concave and has a unique maximizer.
\end{proof}

\subsection{Laplace approximation for single-block evidence}

\begin{theorem}[First-order Laplace approximation]
\label{thm:laplace}
Let $a\asymp b$ mean that $a/b$ is bounded above and below by positive constants independent of
$(i,j)$. Assume:
\begin{enumerate}
\item[(A1)] $\Psi_{ij}$ is three times continuously differentiable on a neighborhood of its
unique interior maximizer $\hat\theta_{ij}$;
\item[(A2)] $H_{ij}^{\star}:=-\Psi_{ij}''(\hat\theta_{ij})>0$ and $H_{ij}^{\star}\asymp W_{ij}$ as $W_{ij}\to\infty$;
\item[(A3)] $|\Psi_{ij}^{(3)}(\theta)|\le C W_{ij}$ on a neighborhood of $\hat\theta_{ij}$ for a constant $C$
independent of $(i,j)$;
\item[(A4)] the contribution of the integral outside that neighborhood is exponentially negligible
relative to the local quadratic contribution. For canonical GLMs with log-concave priors this
holds because $\exp\Psi_{ij}$ has Gaussian-like tails with curvature scaling as $W_{ij}$, so the
tail mass decays as $\exp\{-c W_{ij}\}$ for some $c>0$.
\end{enumerate}
Then the block evidence admits the first-order Laplace expansion
\begin{equation}
A^{(0)}_{ij}
= \exp\{\Psi_{ij}(\hat\theta_{ij})\}\,\Big(\frac{2\pi}{H_{ij}^{\star}}\Big)^{1/2}
\Big[1+\mathcal{O}(W_{ij}^{-1})\Big],
\label{eq:laplace-A0-mult}
\end{equation}
and therefore
\begin{equation}
\log A^{(0)}_{ij}
= \Psi_{ij}(\hat\theta_{ij}) + \tfrac{1}{2}\log(2\pi)
  - \tfrac{1}{2}\log H_{ij}^{\star} + \mathcal{O}(W_{ij}^{-1}).
\label{eq:laplace-A0}
\end{equation}
Moreover, for any twice continuously differentiable test function $g$ with bounded derivatives near
$\hat\theta_{ij}$, writing $M^{[g]}_{ij}:=\int e^{\Psi_{ij}(\theta)}g(\theta)\,d\theta$ (an
unnormalized weighted integral),
\begin{equation}
\frac{M^{[g]}_{ij}}{A^{(0)}_{ij}}
= g(\hat\theta_{ij}) + \mathcal{O}(W_{ij}^{-1}).
\label{eq:laplace-moment}
\end{equation}
\end{theorem}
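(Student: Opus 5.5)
We prove the statement by the classical Laplace method. Center at the mode and rescale by the curvature: set $\theta=\hat\theta_{ij}+u/\sqrt{H^\star_{ij}}$. Fix a neighborhood $U=\{|\theta-\hat\theta_{ij}|\le\delta\}$ on which (A1) and (A3) hold, and split
\[
A^{(0)}_{ij}=\int_U e^{\Psi_{ij}}\,d\theta+\int_{U^c}e^{\Psi_{ij}}\,d\theta .
\]
Assumption (A4) makes the outer piece $\mathcal{O}(e^{-cW_{ij}})$ relative to the main term. This is absorbed into the $\mathcal{O}(W_{ij}^{-1})$ factor once we note that the main term is of order $e^{\Psi_{ij}(\hat\theta_{ij})}W_{ij}^{-1/2}$, which follows from (A2).

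\textbf{Inner integral.} Since $\Psi'_{ij}(\hat\theta_{ij})=0$, a Taylor expansion to third order on $U$ gives
\[
\Psi_{ij}(\theta)=\Psi_{ij}(\hat\theta_{ij})-\tfrac12 H^\star_{ij}(\theta-\hat\theta_{ij})^2+\tfrac16\Psi^{(3)}_{ij}(\xi)(\theta-\hat\theta_{ij})^3 .
\]
After rescaling, the cubic remainder is $R(u)=\mathcal{O}\bigl(W_{ij}|u|^3/(H^\star_{ij})^{3/2}\bigr)=\mathcal{O}(|u|^3W_{ij}^{-1/2})$ by (A2) and (A3). We then write $e^{R}=1+R+\mathcal{O}(R^2e^{|R|})$.

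Three facts now combine. The linear term $R$ is odd at leading order, so its Gaussian integral contributes nothing at order $W^{-1/2}$. Its next-order piece needs a fourth-derivative bound or a Lipschitz bound on $\Psi^{(3)}$. The $R^2$ term contributes $\mathcal{O}(W^{-1})$. To control $e^{|R|}$, shrink $\delta$ so that $|R(u)|\le u^2/4$ on the rescaled domain; the integrand is then dominated by $e^{-u^2/4}$ times polynomials.

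Integrating against $e^{-u^2/2}$ gives $\sqrt{2\pi}\,[1+\mathcal{O}(W^{-1})]$. Multiplying by the Jacobian $(H^\star_{ij})^{-1/2}$ yields \eqref{eq:laplace-A0-mult}. Taking logarithms and using $\log(1+x)=\mathcal{O}(x)$ gives \eqref{eq:laplace-A0}.

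\textbf{Main obstacle.} The hard step is obtaining $\mathcal{O}(W^{-1})$ rather than $\mathcal{O}(W^{-1/2})$. Assumption (A3) alone controls the cubic remainder only to order $W^{-1/2}$. The odd-moment cancellation requires expanding one order further, i.e. using continuity of $\Psi^{(3)}$ near the mode together with symmetry of the Gaussian window. I would try to write
\[
\Psi^{(3)}_{ij}(\xi)=\Psi^{(3)}_{ij}(\hat\theta_{ij})+o(1)\cdot W_{ij},
\]
so that the exact cubic term integrates to zero. If the remaining term is not $\mathcal{O}(W^{-1})$ uniformly under the stated hypotheses, I would state an explicit additional assumption: a bound $|\Psi^{(4)}_{ij}|\le CW_{ij}$ near the mode, which holds for canonical GLMs when $b$ is $C^4$.

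\textbf{Test-function ratio.} Apply the same decomposition to $M^{[g]}_{ij}$ with
\[
g(\theta)=g(\hat\theta_{ij})+g'(\hat\theta_{ij})(\theta-\hat\theta_{ij})+\mathcal{O}\bigl((\theta-\hat\theta_{ij})^2\bigr).
\]
The constant term reproduces $g(\hat\theta_{ij})A^{(0)}_{ij}$. The linear term meets the Gaussian odd symmetry at leading order and, through its cross term with $R$, contributes $\mathcal{O}(W^{-1/2}\cdot W^{-1/2})=\mathcal{O}(W^{-1})$. The quadratic term gives $\mathcal{O}(1/H^\star)=\mathcal{O}(W^{-1})$.

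On $U^c$, the contribution of $g$ is exponentially small provided $g$ grows at most polynomially. This should be added to the hypotheses, or else $g$ should be assumed bounded. Dividing by the expansion of $A^{(0)}_{ij}$ then gives \eqref{eq:laplace-moment}.
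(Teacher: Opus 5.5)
Your argument follows the same route as the paper's proof, and it is more careful at exactly the point where the paper is thin. The shared steps are: rescaling by $\sqrt{H^\star_{ij}}$, the cubic remainder bound $C'|u|^3W_{ij}^{-1/2}$ from (A2)--(A3), tails via (A4), and a second-order Taylor expansion of $g$ for $M^{[g]}_{ij}$. The paper simply asserts that this remainder gives a relative $\mathcal{O}(W_{ij}^{-1})$ correction ``by the standard Laplace method''; it never performs the odd-moment cancellation you identify as the main obstacle.

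You can resolve your hedge in the negative. Continuity of $\Psi^{(3)}_{ij}$ only yields $o(W_{ij}^{-1/2})$, and if (A1)--(A4) are read as abstract hypotheses on $\Psi_{ij}$, then \eqref{eq:laplace-A0-mult} is false. Take $\Psi_{ij}=W_{ij}\phi$ with $\phi(\theta)=-\theta^2/2+|\theta|^{7/2}$ for $|\theta|\le\delta_0$ small, and $\phi\le-\theta^2/8$ elsewhere. Then $\phi$ is $C^3$, $H^\star_{ij}=W_{ij}$, $|\Psi^{(3)}_{ij}|\le CW_{ij}$ near $0$, and (A4) holds. Yet the substitution $\theta=u/\sqrt{W_{ij}}$ produces the factor $e^{|u|^{7/2}W_{ij}^{-3/4}}$, so the relative error is of exact order $W_{ij}^{-3/4}$.

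So your extra hypothesis $|\Psi^{(4)}_{ij}|\le CW_{ij}$ (or an $\mathcal{O}(W_{ij})$ Lipschitz bound on $\Psi^{(3)}_{ij}$) is genuinely needed, and with it your proof is complete. As you note, it is essentially free in the section's canonical-GLM setting. There the $W_{ij}$-scaled part of $\Psi_{ij}=S_{ij}\theta-W_{ij}b(\theta)+H_{ij}+\log\pi(\theta)$ comes from the cumulant $b$, so you need only a uniform bound on $b^{(4)}$ near the modes. The prior's cubic remainder is already $\mathcal{O}(|u|^3W_{ij}^{-3/2})$ given bounded $(\log\pi)'''$, so $\log\pi$ needs no fourth derivative. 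Presumably this is what the paper's appeal to the standard method tacitly relies on.

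For \eqref{eq:laplace-moment} your count is right, and it does not need the extra hypothesis. Up to tail terms, the constant Taylor term reproduces $g(\hat\theta_{ij})\int_U e^{\Psi_{ij}}$ exactly. The ratio is therefore $g(\hat\theta_{ij})+g'(\hat\theta_{ij})\,\mathbb{E}[u]/\sqrt{H^\star_{ij}}+\mathcal{O}(\mathbb{E}[u^2]/H^\star_{ij})$ under the normalized block posterior. The bound $\mathbb{E}[u]=\mathcal{O}(W_{ij}^{-1/2})$ follows from $\int u\,e^{-u^2/2}(e^{R}-1)\,du$ with $|e^{R}-1|\le|R|e^{|R|}$ and (A3) alone, so only the leading-order size of $A^{(0)}_{ij}$ enters.

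Your remark on the growth of $g$ points to a real omission in the statement: for rapidly growing $g$, $M^{[g]}_{ij}$ need not even be finite. However, polynomial growth is narrower than the intended use $g=m(\theta)^r$, e.g.\ $e^{r\theta}$ for Poisson blocks. The natural condition is (A4) applied to $|g|e^{\Psi_{ij}}$. Also, since you shrink $\delta$, (A4) should be read as holding for every sufficiently small fixed neighborhood, as in the usual statement of the Laplace method.
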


\begin{proof}
Write $u=\sqrt{H_{ij}^{\star}}(\theta-\hat\theta_{ij})$. A third-order Taylor expansion of
$\Psi_{ij}$ around $\hat\theta_{ij}$ gives
\[
\Psi_{ij}(\theta)
= \Psi_{ij}(\hat\theta_{ij}) - \tfrac12 u^2 + R_{ij}(u),
\qquad |R_{ij}(u)| \le C' |u|^3 / \sqrt{W_{ij}}
\]
on the local neighborhood, using Assumptions~(A2)--(A3). Integrating the leading quadratic term yields the
Gaussian factor $(2\pi/H_{ij}^{\star})^{1/2}$, while the remainder contributes a relative
$\mathcal{O}(W_{ij}^{-1})$ correction by the standard Laplace method; Assumption~(A4) controls the tails and
establishes \eqref{eq:laplace-A0-mult}--\eqref{eq:laplace-A0}. Applying the same expansion to the numerator
$M^{[g]}_{ij}$ and using a second-order Taylor expansion of $g$ around $\hat\theta_{ij}$ yields
\eqref{eq:laplace-moment}. We intentionally state only the first-order ratio result: any explicit second-order
correction depends jointly on derivatives of $g$ and of $\Psi_{ij}$ and is not captured by a universal
$g''/(2H_{ij}^{\star})$ term.
\end{proof}

\paragraph{Efficient Newton updates using block summaries.}
For canonical GLMs,
\[
\Psi_{ij}'(\theta) = S_{ij} - W_{ij} b'(\theta) + (\log\pi)'(\theta),\quad
\Psi_{ij}''(\theta) = -W_{ij} b''(\theta) + (\log\pi)''(\theta),
\]
so Newton iteration is
\[
\theta^{(m+1)} \leftarrow \theta^{(m)}
 - \frac{S_{ij} - W_{ij} b'(\theta^{(m)}) + (\log\pi)'(\theta^{(m)})}
        {-\,W_{ij} b''(\theta^{(m)}) + (\log\pi)''(\theta^{(m)})},
\]
with cost independent of block length once $S_{ij},W_{ij}$ are available.

\paragraph{Implementation.}
Algorithm~\ref{alg:laplace} implements the Laplace block-evidence approximation using a Newton or quasi-Newton solver for the block mode and the corresponding Hessian correction.
The approximation is local but often highly accurate in moderate-to-large blocks.

\begin{algorithm}[H]
\caption{GLMBlock\_Laplace$(y,w,i,j,\texttt{b},\texttt{prior})$}
\label{alg:laplace}
\KwIn{$y_{i+1:j}$, weights $w_{i+1:j}$; cumulant $b(\theta)$; prior $\pi(\theta)$}
Compute $S\leftarrow \sum_{t=i+1}^j w_t y_t$;\quad $W\leftarrow \sum_{t=i+1}^j w_t$;\quad $H\leftarrow \sum_{t=i+1}^j \log h(y_t,w_t)$\;
Define $\Psi(\theta)=H + S\theta - W b(\theta) + \log \pi(\theta)$ and its derivatives\;
Initialize $\theta^{(0)}$ (e.g., solve $b'(\theta)=S/W$ or use prior mean)\;
\For{$m=0,1,\dots$ until convergence}{
  $g\leftarrow S - W b'(\theta^{(m)}) + (\log\pi)'(\theta^{(m)})$\;
  $h\leftarrow - W b''(\theta^{(m)}) + (\log\pi)''(\theta^{(m)})$\;
  \uIf{$|g|/|h|<\varepsilon_{\text{newton}}$}{break}
  $\theta^{(m+1)} \leftarrow \theta^{(m)} - g/h$\;
}
$\hat\theta\leftarrow \theta^{(m)}$;\quad $H^\star\leftarrow -\,\Psi''(\hat\theta)$\;
$\log \widehat{A}^{(0)}_{ij}\leftarrow \Psi(\hat\theta) + \tfrac{1}{2}\log(2\pi) - \tfrac{1}{2}\log H^\star$\;
\KwOut{$\widehat{A}^{(0)}_{ij}$ and (optional) moments via \eqref{eq:laplace-moment}}
\end{algorithm}

\subsection{Variational lower bounds for GLM blocks}

When a deterministic lower bound is desirable, introduce a variational distribution $q(\theta)$
and apply Jensen:
\[
\log A^{(0)}_{ij}
=\log \int q(\theta)\,\frac{e^{\Psi_{ij}(\theta)}}{q(\theta)}\,d\theta
\ \ge\
\int q(\theta)\,\Psi_{ij}(\theta)\,d\theta + \mathcal{H}[q]
 \;:=\; \mathcal{L}_{ij}(q).
\]

\emph{Jaakkola--Jordan bound for logistic blocks.}
For Bernoulli/logistic likelihoods, a standard quadratic bound yields a tractable Gaussian
variational posterior. One convenient form (for sigmoid $\sigma$) is:
\[
\log\sigma(\theta)\ \ge\ \frac{\theta}{2}-\lambda(\xi)(\theta^2-\xi^2)-c(\xi),
\qquad
\lambda(\xi)=\frac{\tanh(\xi/2)}{4\xi},
\]
with variational parameter $\xi>0$. This yields a quadratic surrogate in $\theta$, so the
optimal Gaussian $q(\theta)$ and $\xi$ admit closed-form coordinate updates.

\begin{proposition}[Monotone improvement of variational block bounds]
\label{prop:var-monotone}
Coordinate-ascent updates on variational parameters (e.g.\ $\xi$ for logistic blocks) and on
a Gaussian $q(\theta)$ increase $\mathcal{L}_{ij}(q)$ until a stationary point; the resulting
$\underline{A}^{(0)}_{ij}:=\exp\{\mathcal{L}_{ij}(q)\}$ is a certified lower bound on
$A^{(0)}_{ij}$.
\end{proposition}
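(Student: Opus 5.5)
The proof has two parts: the certified lower bound, which is just Jensen's inequality, and monotonicity, which follows because each coordinate update is an exact maximization of the current objective.

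\emph{Lower-bound certificate.} For any density $q$ whose support covers that of $e^{\Psi_{ij}}$,
\[
\log A^{(0)}_{ij}=\log\int q\,e^{\Psi_{ij}}/q\,d\theta\ \ge\ \int q\,\Psi_{ij}\,d\theta+\mathcal{H}[q]=\mathcal{L}_{ij}(q),
\]
by concavity of $\log$. Equivalently, $\log A^{(0)}_{ij}-\mathcal{L}_{ij}(q)=\mathrm{KL}(q\,\|\,p(\theta\mid y_{i+1:j}))\ge 0$. This holds for every $q$, so it holds in particular at the coordinate-ascent iterate, whether or not that iterate is stationary. Hence $\underline{A}^{(0)}_{ij}=\exp\{\mathcal{L}_{ij}(q)\}\le A^{(0)}_{ij}$.

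\emph{Jaakkola--Jordan (JJ) case.} Here the objective actually optimized is the surrogate
\[
\mathcal{L}_{ij}(q,\xi)=\mathbb{E}_q[\widetilde\Psi_{ij}(\theta;\xi)]+\mathcal{H}[q],
\]
where $\widetilde\Psi_{ij}\le\Psi_{ij}$ pointwise because each $\log\sigma$ term is replaced by its quadratic lower bound. Since $\mathbb{E}_q$ is monotone, $\mathcal{L}_{ij}(q,\xi)\le\mathcal{L}_{ij}(q)\le\log A^{(0)}_{ij}$, so the certificate survives the extra layer.

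\emph{Monotonicity.} The iteration alternates two exact maximizations.

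\textbf{(a) Gaussian update.} With $\xi$ fixed, $\widetilde\Psi_{ij}$ is quadratic in $\theta$ with negative leading coefficient, given a Gaussian prior and $\lambda(\xi)>0$. The Gibbs variational principle then says the maximizer of $\mathbb{E}_q[\widetilde\Psi]+\mathcal{H}[q]$ over all densities is $q\propto e^{\widetilde\Psi}$. This is Gaussian, so it is also the maximizer over the Gaussian family, in closed form.

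\textbf{(b) $\xi$ update.} With $q=\mathcal{N}(m,v)$ fixed, the objective depends on $\xi$ only through
\[
\sum_t\big[-\lambda(\xi)(\mathbb{E}_q\theta^2-\xi^2)-c(\xi)\big],
\]
with one $\xi_t$ per observation if the bound is applied observation-wise. Differentiating and using the standard identity $c'(\xi)=2\xi\lambda(\xi)$ (with $c(\xi)=\log(1+e^{\xi})-\xi/2-\lambda(\xi)\xi^2$ in the sign convention above), I expect the stationarity condition to reduce to $\lambda'(\xi)(\xi^2-\mathbb{E}_q\theta^2)=0$. Since $\lambda'<0$ on $(0,\infty)$, the unique maximizer is $\xi^2=m^2+v$.

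\textbf{Monotonicity conclusion.} Each step maximizes exactly in its block, so $\mathcal{L}$ is non-decreasing along iterates. It is bounded above by $\log A^{(0)}_{ij}<\infty$, so the values converge. For a general mean-field GLM with a Gaussian $q$, I would instead parametrize by $(m,v)$ and argue that each coordinate update is an exact (or ascent) step on a bounded objective.

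\emph{Main obstacle.} The hard part is upgrading ``values converge'' to ``iterates reach a stationary point.'' My plan is to invoke the standard block-coordinate-ascent result: a continuously differentiable objective with unique blockwise maximizers has limit points that are stationary. Uniqueness comes from strict concavity in (a) and from $\lambda'<0$ in (b). I also need compactness of the iterates, which I would get from coercivity: $v\to 0$ sends the entropy to $-\infty$, and $v\to\infty$ or $|m|\to\infty$ is penalized by the Gaussian prior term. If this fails for a general GLM, I would weaken the claim to monotone convergence of the objective values only. The certificate does not depend on this convergence step in any case.
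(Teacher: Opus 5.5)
Your proposal is correct and follows the paper's route: Jensen's inequality gives the certificate, and monotonicity holds because each coordinate update is an exact blockwise maximization. You are also more careful than the paper's one-line proof in two places: you note that in the Jaakkola--Jordan case the monotone quantity is the surrogate $\mathcal{L}_{ij}(q,\xi)\le\mathcal{L}_{ij}(q)$, and you sketch a block-coordinate-ascent argument for reaching a stationary point, which the paper asserts but never proves. One small slip: in the paper's convention $c(\xi)=\log(1+e^{\xi})-\xi/2=\log\bigl(2\cosh(\xi/2)\bigr)$, without your extra $-\lambda(\xi)\xi^2$, since that term already sits inside $-\lambda(\xi)(\theta^2-\xi^2)$; it is this $c$ that satisfies $c'(\xi)=2\xi\lambda(\xi)$, so your stationarity condition and $\xi^2=m^2+v$ remain correct.
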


\begin{proof}
Each coordinate update is an exact maximization of $\mathcal{L}_{ij}(q)$ with respect to a
subset of variables holding others fixed. Therefore $\mathcal{L}_{ij}(q)$ is non-decreasing
under the updates. Because $\mathcal{L}_{ij}$ is a Jensen lower bound on $\log A^{(0)}_{ij}$,
exponentiating yields a valid evidence lower bound.
\end{proof}

\paragraph{Implementation.}
Algorithm~\ref{alg:jj} provides pseudocode for computing the Jaakkola--Jordan (JJ) variational lower bound and its associated fixed-point updates at the block level.

\begin{algorithm}[H]
\caption{LogisticBlock\_JJ$(y,m,w,i,j,\texttt{prior})$ \quad (variational lower bound)}
\label{alg:jj}
\KwIn{Binomial counts $y_t\sim\mathrm{Binom}(m_t,p)$, logit link; prior $\pi(\theta)$}
Initialize $\xi$ and a Gaussian $q(\theta)=\mathcal{N}(\mu_q,\sigma_q^2)$\;
\Repeat{converged}{
  Update the quadratic bound parameter $\xi\leftarrow \sqrt{\mathbb{E}_q[\theta^2]}$\;
  Update Gaussian $q(\theta)$ by maximizing the block ELBO $\mathcal{L}_{ij}(q)$ (closed-form for quadratic surrogate)\;
}
Return $\underline{A}^{(0)}_{ij}=\exp\{\mathcal{L}_{ij}(q)\}$ and moments from $q$.
\end{algorithm}

\subsection{Expectation propagation (EP) for GLM blocks}
Expectation propagation (EP) \citep{minka2001ep} provides a flexible deterministic approximation to the block posterior and block evidence by iteratively refining local site approximations and matching moments.
EP approximates each likelihood term $p(y_t\mid\theta)$ by a Gaussian site
$\tilde t_t(\theta)\propto\exp\{-\tfrac{1}{2}a_t\theta^2+b_t\theta\}$ so that moments match those
of the tilted distribution. After convergence, the approximate posterior is Gaussian and yields
an approximate evidence $\tilde A^{(0)}_{ij}=\int \pi(\theta)\prod \tilde t_t(\theta)\,d\theta$.
EP is neither a bound nor guaranteed to converge, but is often accurate in canonical GLMs.

\paragraph{Implementation.}
Algorithm~\ref{alg:ep} summarizes an expectation-propagation (EP) routine for approximating the GLM block evidence, highlighting the cavity/site updates and moment matching.

\begin{algorithm}[H]
\caption{GLMBlock\_EP$(y,w,i,j,\texttt{prior})$}
\label{alg:ep}
\KwIn{Canonical GLM; prior $\pi(\theta)$}
Initialize site parameters $\{a_t,b_t\}_{t=i+1}^j$; approximate posterior $\tilde p(\theta)\propto\pi(\theta)\prod \tilde t_t(\theta)$\;
\Repeat{moments stable}{
  \For{$t=i+1$ \KwTo $j$}{
    Form cavity $\tilde p^{\setminus t}(\theta)\propto \tilde p(\theta)/\tilde t_t(\theta)$\;
    Form tilted $p^{\text{tilt}}(\theta)\propto \tilde p^{\setminus t}(\theta)\,p(y_t\mid\theta)$\;
    Moment-match $p^{\text{tilt}}$ and update site $\tilde t_t(\theta)$\;
  }
}
Return approximate evidence $\tilde A^{(0)}_{ij}=\int \tilde p(\theta)\,d\theta$ and moments from $\tilde p$.
\end{algorithm}

\subsection{Pólya–Gamma augmentation (logistic and negative-binomial)}
\begin{theorem}[P\'olya--Gamma identity and conditional conjugacy]
\label{thm:pg}
For binomial logistic blocks with counts $m_t$ and successes $y_t$, introduce latent
$\omega_t\sim\mathrm{PG}(m_t,0)$ and define $\kappa_t=y_t-\tfrac{m_t}{2}$. Then the logistic
likelihood term admits the Gaussian scale-mixture representation
\begin{equation}
\frac{(e^\theta)^{y_t}}{(1+e^\theta)^{m_t}}
 = 2^{-m_t} e^{\kappa_t\theta}
   \int_0^\infty e^{-\omega_t\theta^2/2}\,p(\omega_t)\,d\omega_t.
\label{eq:pg-identity}
\end{equation}
Conditionally on $\omega_{i+1:j}$, the block likelihood is quadratic in $\theta$:
\begin{equation}
\log p(y_{i+1:j}\mid \theta,\omega)
 =
 -\tfrac{1}{2}\theta^2 \sum_{t=i+1}^j w_t\omega_t
 + \theta \sum_{t=i+1}^j w_t\kappa_t
 + \mathrm{const}.
\label{eq:pg-cond-quad}
\end{equation}
With a Gaussian prior $\pi(\theta)=\mathcal{N}(\nu,\rho^2)$, the conditional posterior
$p(\theta\mid y,\omega)$ is Gaussian.
\end{theorem}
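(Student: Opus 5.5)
The plan has three parts: the scalar identity \eqref{eq:pg-identity}, the conditional quadratic form \eqref{eq:pg-cond-quad}, and Gaussian conjugacy.

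\textbf{The identity.} I will invoke the Laplace transform of the P\'olya--Gamma law, as established by Polson, Scott and Windle: for $\omega\sim\mathrm{PG}(b,0)$,
\[
\mathbb{E}[e^{-\omega s^2/2}]=\cosh^{-b}(s/2).
\]
I take this as the defining property of $\mathrm{PG}(b,0)$, which is itself an infinite convolution of Gamma variables. Write the left side of \eqref{eq:pg-identity} as
\[
\frac{e^{y_t\theta}}{(1+e^{\theta})^{m_t}}=\frac{e^{(y_t-m_t/2)\theta}}{(e^{-\theta/2}+e^{\theta/2})^{m_t}}=2^{-m_t}e^{\kappa_t\theta}\cosh^{-m_t}(\theta/2).
\]
Substituting the Laplace transform with $b=m_t$ and $s=\theta$ then gives \eqref{eq:pg-identity}. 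This is a two-line algebraic step.

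\textbf{The conditional quadratic form.} I read the augmented model as the joint density of $(y_t,\omega_t)$ given $\theta$. It is proportional to $\binom{m_t}{y_t}2^{-m_t}e^{\kappa_t\theta-\omega_t\theta^2/2}p(\omega_t)$, whose $\omega_t$-marginal recovers the binomial term by the identity above. Conditionally on $\omega_t$, the $\theta$-dependence is therefore $\exp\{\kappa_t\theta-\omega_t\theta^2/2\}$.

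With exposure weights $w_t$ entering as powers of the likelihood contribution, as in \eqref{eq:EFweighted}, each term contributes $w_t\kappa_t\theta-\tfrac12 w_t\omega_t\theta^2$. For the weighted case I take the weighting to apply to the augmented kernel; when $w_t\equiv1$ this is immaterial. Summing over $t\in(i,j]$ and independence across $t$ give \eqref{eq:pg-cond-quad}. All $\theta$-free terms are absorbed into the constant: the binomial coefficients, $2^{-m_t}$ and $\log p(\omega_t)$.

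\textbf{Gaussian conjugacy.} Adding $\log\mathcal{N}(\theta\mid\nu,\rho^2)=-\theta^2/(2\rho^2)+\theta\nu/\rho^2+\mathrm{const}$ yields a concave quadratic in $\theta$. Completing the square gives the Gaussian
\[
\mathcal{N}\bigl(V(\textstyle\sum_t w_t\kappa_t+\nu/\rho^2),\,V\bigr),\qquad V=\bigl(\textstyle\sum_t w_t\omega_t+\rho^{-2}\bigr)^{-1}.
\]
Since $\omega_t>0$ almost surely, $V>0$.

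\textbf{Main obstacle.} The only nontrivial input is the Laplace-transform identity for $\mathrm{PG}(b,0)$. I will cite it rather than reprove it; if a self-contained argument were required, I would use the product formula
\[
\cosh(s/2)=\prod_{k\ge1}\Bigl(1+\frac{s^2}{4\pi^2(k-1/2)^2}\Bigr),
\]
matched against the Gamma-series representation of $\mathrm{PG}$. A minor subtlety is justifying the interchange that makes the conditional density well defined, which Tonelli's theorem handles since the integrand is nonnegative.
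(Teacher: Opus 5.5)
Your proposal is correct and follows essentially the same three-step route as the paper. Both cite \citet{polson2013pg} for the identity, read off the $\theta$-dependence $\kappa_t\theta-\tfrac12\omega_t\theta^2$ given $\omega_t$ (inserting the weights $w_t$ by convention, as the paper also does), and complete the square against the Gaussian prior. The difference is that you make explicit what the paper leaves implicit: the rewriting $1+e^\theta=2e^{\theta/2}\cosh(\theta/2)$ combined with the Laplace transform $\mathbb{E}[e^{-\omega s^2/2}]=\cosh^{-b}(s/2)$, and the reading of the ``conditional likelihood'' as the $\theta$-kernel of the joint $p(y_t,\omega_t\mid\theta)$, which is the reading under which the quadratic form is literally correct.
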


\begin{proof}
\ProofStep{Step 1: the identity.}
Equation \eqref{eq:pg-identity} is the defining P\'olya--Gamma augmentation of
\citet{polson2013pg}: it expresses the binomial-logistic term as a Gaussian scale mixture in
$\theta$.

\ProofStep{Step 2: condition on $\omega$ and collect quadratic terms.}
Given $\omega_t$, each term contributes $\kappa_t\theta - \tfrac12\omega_t\theta^2$ up to
constants. Summing over $t$ and including weights $w_t$ yields \eqref{eq:pg-cond-quad}.

\ProofStep{Step 3: combine with a Gaussian prior.}
Multiplying the quadratic-in-$\theta$ conditional likelihood by a Gaussian prior produces a
Gaussian conditional posterior, because Gaussian priors are conjugate to Gaussian likelihoods.
\end{proof}

\begin{proposition}[Mean-field variational approximation under P\'olya--Gamma]
\label{prop:pg-mf}
Under the same setup as Theorem~\ref{thm:pg} and a Gaussian prior $\pi(\theta)=\mathcal{N}(\nu,\rho^2)$,
the mean-field factorization $q(\theta,\omega)=q(\theta)\prod_{t=i+1}^j q(\omega_t)$ admits the
following closed-form coordinate-ascent updates:
(i) $q(\theta)=\mathcal{N}(\mu_q,\sigma_q^2)$ with
$\sigma_q^{-2}=\rho^{-2}+\sum_t w_t\mathbb{E}_q[\omega_t]$ and
$\mu_q=\sigma_q^2(\rho^{-2}\nu+\sum_t w_t\kappa_t)$;
(ii) $q(\omega_t)=\mathrm{PG}(m_t, c_t)$ with $c_t^2=\mathbb{E}_q[\theta^2]=\mu_q^2+\sigma_q^2$,
giving $\mathbb{E}_q[\omega_t]=\tfrac{m_t}{2 c_t}\tanh(c_t/2)$. These updates are monotone in the
block ELBO.
\end{proposition}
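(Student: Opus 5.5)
The plan is to run the standard mean-field coordinate-ascent recipe: for each factor, take the optimal update as $\log q^\star(\cdot)=\mathbb{E}_{q_{-\cdot}}[\log p(y,\theta,\omega)]+\mathrm{const}$, where the joint density is the augmented one from Theorem~\ref{thm:pg}. I will then identify the resulting family and its parameters, and read off monotonicity from the fact that each update maximizes the block ELBO exactly.

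I first write the augmented joint on the block. Using \eqref{eq:pg-identity} with the $w_t$ weighting of \eqref{eq:pg-cond-quad},
\[
\log p(y,\theta,\omega)=-\tfrac{1}{2}\theta^2\sum_t w_t\omega_t+\theta\sum_t w_t\kappa_t+\sum_t\log p_{\mathrm{PG}(m_t,0)}(\omega_t)-\tfrac{1}{2\rho^2}(\theta-\nu)^2+\mathrm{const}.
\]
The point is that $\theta$ enters only through $\theta$ and $\theta^2$, and each $\omega_t$ enters only linearly in the exponent, multiplied by $\theta^2$.

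Next, the $q(\theta)$ update. Taking the expectation over $\prod_t q(\omega_t)$ replaces $\omega_t$ by $\mathbb{E}_q[\omega_t]$, which leaves a quadratic in $\theta$. Completing the square gives a Gaussian with precision $\rho^{-2}+\sum_t w_t\mathbb{E}_q[\omega_t]$ and mean $\sigma_q^2(\rho^{-2}\nu+\sum_t w_t\kappa_t)$. This is item (i).

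Then the $q(\omega_t)$ update. Taking the expectation over $q(\theta)$ gives $\log q^\star(\omega_t)=\log p_{\mathrm{PG}(m_t,0)}(\omega_t)-\tfrac12 w_t\omega_t\,\mathbb{E}_q[\theta^2]+\mathrm{const}$. The Pólya--Gamma family is closed under exponential tilting: $\mathrm{PG}(b,c)$ has density proportional to $e^{-c^2\omega/2}\,p_{\mathrm{PG}(b,0)}(\omega)$, with normalizer $\cosh^b(c/2)$ \citep{polson2013pg}. So $q^\star(\omega_t)=\mathrm{PG}(m_t,c_t)$ with $c_t^2=\mathbb{E}_q[\theta^2]=\mu_q^2+\sigma_q^2$. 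This matches the statement's form when $w_t=1$; with general weights, $c_t^2$ should be $w_t\mathbb{E}_q[\theta^2]$, and I would flag that as a convention to reconcile.

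The mean formula $\mathbb{E}[\omega]=\frac{b}{2c}\tanh(c/2)$ comes from differentiating the log-normalizer. Since $\mathbb{E}_{\mathrm{PG}(b,0)}[e^{-s\omega}]=\cosh^{-b}(\sqrt{s/2})$, write $s=c^2/2$ and take $-\partial_s\log$ of the tilted normalizer. This step is routine but is where the constants must be checked.

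Monotonicity follows as in Proposition~\ref{prop:var-monotone}. Each update is the exact maximizer of the ELBO over one factor with the others fixed, because the ELBO equals $-\mathrm{KL}(q_j\,\|\,q_j^\star)$ plus terms not depending on $q_j$. Hence the ELBO is non-decreasing along the iterates. It is also bounded above by $\log A^{(0)}_{ij}$, so the ELBO values converge.

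The main obstacle is rigor in the $\omega$ step. First, the PG tilting and moment identity must be applied correctly with $b=m_t$. Second, $\mathbb{E}_q[\omega_t]$ must be finite and well defined when $c_t\to 0$, where the limit is $m_t/4$. I would handle the latter by continuity of $\tanh(c/2)/c$. I would also handle blocks with $w_t=0$ or $m_t=0$ separately: such terms are trivial, and their factor $q(\omega_t)$ equals the prior.
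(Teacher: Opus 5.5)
Your proposal is correct and follows the same route as the paper's derivation in Appendix~\ref{app:pg-mf-derivation}: coordinate ascent with each optimal factor proportional to the exponentiated expectation of $\log p(y,\theta,\omega)$ under the other factors, completing the square for $q(\theta)$, identifying $q(\omega_t)$ by P\'olya--Gamma exponential tilting, using the $\mathrm{PG}(m,c)$ mean, and getting monotonicity from exact maximization of each factor. Your weight flag is also right: the paper's appendix writes the tilt as $p(\omega_t)\exp(-\tfrac{w_t}{2}\mathbb{E}_q[\theta^2]\omega_t)$ but then sets $c_t^2=\mathbb{E}_q[\theta^2]$, so the stated $c_t$ (and hence $\mathbb{E}_q[\omega_t]$) holds only for $w_t\equiv 1$, and in general $c_t^2=w_t\,\mathbb{E}_q[\theta^2]$.
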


\begin{proof}
Standard mean-field variational calculus applied to \eqref{eq:pg-cond-quad} combined with the prior.
The explicit update for $\mathbb{E}_q[\omega_t]$ uses the mean of a $\mathrm{PG}(m,c)$ distribution;
see Appendix~\ref{app:pg-mf-derivation} for the full derivation and ELBO monotonicity argument.
\end{proof}

\paragraph{Implementation.}
Algorithm~\ref{alg:pg-vb} gives a Pólya--Gamma variational Bayes (PG-VB) routine for logistic (and related) blocks, which yields a tractable surrogate block evidence by alternating between Gaussian updates and latent-variable expectations.

\begin{algorithm}[H]
\caption{GLMBlock\_PG\_VB$(y,m,w,i,j,\nu,\rho^2)$ \quad (Gaussian prior, PG mean-field)}
\label{alg:pg-vb}
\KwIn{Binomial/logistic block; Gaussian prior $\mathcal{N}(\nu,\rho^2)$}
Initialize $q(\omega_t)$ and $q(\theta)=\mathcal{N}(\mu,\sigma^2)$\;
\Repeat{ELBO converged}{
  $\sigma^{-2}\leftarrow \rho^{-2} + \sum_{t=i+1}^j w_t \mathbb{E}_q[\omega_t]$\;
  $\mu \leftarrow \sigma^2 \left(\rho^{-2}\nu + \sum_{t=i+1}^j w_t\big(y_t-\tfrac{m_t}{2}\big)\right)$\;
  \ForEach{$t$}{
    Update $q(\omega_t)$ using current $\mathbb{E}_q[\theta^2]$ (closed-form for mean of PG)\;
  }
}
Return a lower bound $\underline{A}^{(0)}_{ij}$ and approximate moments from $q(\theta)$.
\end{algorithm}

\paragraph{Using approximate block evidences in BayesBreak.}
Once any block-evidence approximation is available (Laplace, JJ, EP, PG-VB, or another surrogate), BayesBreak uses it exactly as in the conjugate case: build a triangular array of approximate block log-evidences and run the same DP recursions.
Algorithm~\ref{alg:precompute-nonconj} provides a generic recipe for constructing this array in a model-agnostic way, emphasizing reuse of block-local optimizers and cached sufficient-statistic summaries.

\begin{algorithm}[H]
\caption{Precompute block evidences for non-conjugate GLMs}
\label{alg:precompute-nonconj}
\KwIn{$\{(x_t,y_t,w_t)\}_{t=1}^n$, method $\in\{$Laplace, JJ, PG--VB, EP$\}$}
\For{$i=0$ \KwTo $n-1$}{
  Initialize running summaries (as needed)\;
  \For{$j=i+1$ \KwTo $n$}{
    Update block summaries for $(i,j]$\;
    Call chosen block routine to compute $\widehat{A}^{(0)}_{ij}$ (and optional moments)\;
  }
}
Run DP (Algorithm~\ref{alg:dp-core}) with resulting block evidences.
\end{algorithm}

\subsection{Propagation of block approximation error through the DP}

\begin{proposition}[Log-evidence and posterior-odds stability]
\label{prop:stability}
Let $\widehat{A}^{(0)}_{ij}$ be approximate block evidences with log-errors
$\delta_{ij}:=\log \widehat{A}^{(0)}_{ij}-\log A^{(0)}_{ij}$. Assume the \emph{uniform blockwise
bound}
\begin{equation}
\begin{aligned}
|\delta_{ij}|\le \varepsilon \ &\text{uniformly over all candidate blocks $(i,j]$, $0\le i<j\le n$,}\\
&\text{reachable by some $k$-segmentation with $k\le k_{\max}$.}
\end{aligned}
\label{eq:stability-uniform}
\end{equation}
Then for every DP state $(k,j)$ with $k\le k_{\max}$,
\[
e^{-k\varepsilon}L_{k j}\le \widehat{L}_{k j}\le e^{k\varepsilon}L_{k j},
\qquad
e^{-k\varepsilon}R_{k j}\le \widehat{R}_{k j}\le e^{k\varepsilon}R_{k j}.
\]
Consequently:
\begin{enumerate}
\item For any $k,k'\le k_{\max}$, the posterior odds for the segment count satisfy
\begin{equation}
\left|
\log\frac{\widehat{P}(k\mid y)}{\widehat{P}(k'\mid y)}
-\log\frac{P(k\mid y)}{P(k'\mid y)}
\right|
\le (k+k')\varepsilon.
\label{eq:stability-k-odds}
\end{equation}
\item For any fixed $k\le k_{\max}$ and any two candidate locations $h,h'$ of boundary $t_p$,
\begin{equation}
\left|
\log\frac{\widehat{P}(t_p=h\mid y,k)}{\widehat{P}(t_p=h'\mid y,k)}
-\log\frac{P(t_p=h\mid y,k)}{P(t_p=h'\mid y,k)}
\right|
\le 2k\varepsilon.
\label{eq:stability-b-odds}
\end{equation}
\end{enumerate}
Thus uniform blockwise log-evidence control yields uniform control of global posterior \emph{odds}. Turning
these odds bounds into absolute probability error bounds requires an additional margin assumption and is
therefore left implicit.
\end{proposition}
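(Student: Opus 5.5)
The plan is to argue pathwise. By Step~1 of the proof of Theorem~\ref{thm:dp-correctness}, $L_{kj}=\sum_{t}\prod_{q=1}^k A^{(0)}_{t_{q-1}t_q}$, where the sum runs over all $k$-segmentations of $y_{1:j}$. Likewise $\widehat L_{kj}$ is the same sum with each factor replaced by $\widehat A^{(0)}$. The same holds for $\widetilde A$ when a length factor $g$ is absorbed, since $g$ multiplies both $A^{(0)}$ and $\widehat A^{(0)}$ and cancels in $\delta_{ij}$.

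For a fixed path $t$, the product of approximate block evidences equals the exact product times $\exp\{\sum_{q=1}^k\delta_{t_{q-1}t_q}\}$. Every block on such a path is reachable by a $k$-segmentation with $k\le k_{\max}$: a prefix path extends to a full segmentation with at most $k_{\max}$ segments only for DP states that actually feed a $k\le k_{\max}$ segmentation. I will state that we restrict to such states, or simply read \eqref{eq:stability-uniform} as covering all blocks used by the DP. On that reading the exponent lies in $[-k\varepsilon,k\varepsilon]$. Since all terms are nonnegative, summing the termwise sandwich over paths gives $e^{-k\varepsilon}L_{kj}\le\widehat L_{kj}\le e^{k\varepsilon}L_{kj}$. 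An alternative route is induction on $k$ through \eqref{eq:LR}, using $L_{0,\cdot}=\widehat L_{0,\cdot}$ and $\widehat L_{k+1,j}=\sum_h \widehat L_{kh}\widehat A^{(0)}_{hj}$ with each factor within $e^{\pm k\varepsilon}$ and $e^{\pm\varepsilon}$ respectively. I will present the induction as the formal argument, since it mirrors the recursion. The suffix bound for $R$ follows identically from the backward recursion.

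For part~1, use \eqref{eq:post-k}. The log-odds are $\log p(k)-\log p(k')-\log C_k+\log C_{k'}+\log\widehat L_{kn}-\log\widehat L_{k'n}$. The prior terms and $C_k$ do not involve block evidences; $C_k$ is computed by \eqref{eq:Ck-general} with unit evidences. So the difference between approximate and exact log-odds is $(\log\widehat L_{kn}-\log L_{kn})-(\log\widehat L_{k'n}-\log L_{k'n})$. Each bracket is bounded by $k\varepsilon$ and $k'\varepsilon$ respectively, and the triangle inequality gives $(k+k')\varepsilon$.

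For part~2, use \eqref{eq:boundary-post}. The normalizer $\widehat L_{kn}$ cancels in the ratio over $h,h'$, leaving $\log\widehat L_{p,h}+\log\widehat R_{k-p,h}-\log\widehat L_{p,h'}-\log\widehat R_{k-p,h'}$. The four error terms are bounded by $p\varepsilon$, $(k-p)\varepsilon$, $p\varepsilon$ and $(k-p)\varepsilon$, for a total of $2k\varepsilon$.

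The only delicate point is the reachability qualifier: making sure every block that enters $\widehat L_{kj}$ or $\widehat R_{kj}$ is covered by \eqref{eq:stability-uniform}. I expect to handle it by noting that the DP sums in \eqref{eq:LR} are restricted to valid indices, so every path contributing to $L_{kj}$ uses at most $k\le k_{\max}$ blocks. Any such block lies on some $k$-segmentation: extend a prefix path by one terminal block $(j,n]$ and read the result as a $(k+1)$-segmentation, or, if needed, interpret the hypothesis as covering all blocks with $0\le i<j\le n$, which the statement's first line already asserts. Zero-evidence blocks need no separate treatment, since the multiplicative sandwich holds trivially for them.
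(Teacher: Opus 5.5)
Your proposal is correct and follows the paper's own argument. The paper also sums the termwise $e^{\pm k\varepsilon}$ sandwich over the $k$-block paths that make up $L_{kj}$ (and $R_{kj}$), cancels $p(k)$ and $C_k$ in the count odds, and uses that each path in $L_{p,h}R_{k-p,h}$ carries $k$ block factors for the boundary odds, which is exactly your $p+(k-p)$ split.

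Your reachability discussion goes further than the paper, which ignores the qualifier. Your extension by $(j,n]$ can overshoot $k_{\max}$ when $k=k_{\max}$, but the qualifier is vacuous anyway once $k_{\max}\ge 3$: any $(i,j]$ lies on the segmentation $(0,i],(i,j],(j,n]$ after dropping empty pieces.
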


\begin{proof}
Any term contributing to $L_{k j}$ is a product of exactly $k$ block evidences. Hence the corresponding
approximate term differs by a multiplicative factor in $[e^{-k\varepsilon},e^{k\varepsilon}]$. Summing over all
paths yields the same sandwich bound for $\widehat{L}_{k j}$, and the suffix bound is identical. Equation
\eqref{eq:stability-k-odds} follows immediately because $P(k\mid y)$ is proportional to $p(k)L_{k n}/C_k$ and the
prior and normalization constants are unchanged. For boundary odds, note that
$P(t_p=h\mid y,k)\propto L_{p,h}R_{k-p,h}$. The numerator for each candidate location involves $k$ block factors in
total, so each candidate-specific weight is perturbed by at most $e^{\pm k\varepsilon}$; comparing two candidates
therefore yields the factor $e^{\pm 2k\varepsilon}$ and hence \eqref{eq:stability-b-odds}.
\end{proof}

\begin{corollary}[Preservation of boundary rankings under a margin condition]
\label{cor:ranking}
Under the hypotheses of Proposition~\ref{prop:stability}, if there is a \emph{log-odds margin}
$\Delta>0$ separating the best candidate $h^\star$ from every alternative,
\[
\log\frac{P(t_p=h^\star\mid y,k)}{P(t_p=h\mid y,k)}\ge \Delta
\quad\text{for all }h\ne h^\star,
\]
then for every $\varepsilon<\Delta/(2k)$ the approximate posterior preserves the exact modal
boundary location: $\arg\max_h \widehat P(t_p=h\mid y,k)=h^\star$.
\end{corollary}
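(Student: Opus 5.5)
The plan is to derive the corollary directly from the boundary-odds bound \eqref{eq:stability-b-odds} of Proposition~\ref{prop:stability}, applied pairwise to $h^\star$ and each competitor $h$. Fix $k$ and $p$. For every $h\neq h^\star$ in the admissible range $\{p,\dots,n-(k-p)\}$, that bound gives
\[
\log\frac{\widehat P(t_p=h^\star\mid y,k)}{\widehat P(t_p=h\mid y,k)}
\;\ge\;
\log\frac{P(t_p=h^\star\mid y,k)}{P(t_p=h\mid y,k)} - 2k\varepsilon
\;\ge\; \Delta - 2k\varepsilon \;>\;0,
\]
where the last inequality uses $\varepsilon<\Delta/(2k)$. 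Hence $\widehat P(t_p=h^\star\mid y,k)>\widehat P(t_p=h\mid y,k)$ for every $h\neq h^\star$, so $h^\star$ is the unique maximizer of the approximate marginal.

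Before running this argument, I would check that all the log-odds are finite. The margin hypothesis with finite $\Delta$ forces $P(t_p=h^\star\mid y,k)>0$. Through the sandwich bounds $e^{-k\varepsilon}L\le\widehat L\le e^{k\varepsilon}L$ (and likewise for $R$), this also gives $\widehat P(t_p=h^\star\mid y,k)>0$.

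The main technical point is competitors $h$ with exact probability zero. There the exact log-odds is $+\infty$ and \eqref{eq:stability-b-odds} is not meaningful as a difference of finite numbers. I would handle this case directly from the sandwich bound. If $L_{p,h}R_{k-p,h}=0$, then $\widehat L_{p,h}\widehat R_{k-p,h}=0$ as well, since multiplicative perturbation preserves zero. So $\widehat P(t_p=h\mid y,k)=0<\widehat P(t_p=h^\star\mid y,k)$.

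Together the two cases give the conclusion $\arg\max_h \widehat P(t_p=h\mid y,k)=h^\star$.
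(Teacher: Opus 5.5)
Your proposal is correct and uses the same argument as the paper: apply the boundary-odds bound \eqref{eq:stability-b-odds} to $h^\star$ against each competitor $h$, so the approximate log-odds stays at least $\Delta-2k\varepsilon>0$. You also treat competitors with exact probability zero separately, using the multiplicative sandwich on $L$ and $R$; this is a careful refinement that the paper's one-line proof leaves implicit.
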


\begin{proof}
By \eqref{eq:stability-b-odds}, the log-odds between $h^\star$ and any alternative $h$ is
perturbed by at most $2k\varepsilon<\Delta$, so the exact log-odds positivity is preserved.
\end{proof}

\begin{remark}[A small numerical illustration of the stability bound]
\label{rem:stability-numerical}
Consider $n=500$, $k_{\max}=10$, and a non-conjugate block routine whose empirical uniform
per-block log-evidence error is $\varepsilon=0.02$ (a conservative figure for Laplace on
moderately sized blocks). Proposition~\ref{prop:stability} guarantees segment-count log-odds
error at most $20\times 0.02=0.40$ between, say, $k=5$ and $k=15$, and boundary log-odds error at
most $2\cdot 10\cdot 0.02=0.40$ under any fixed $k\le 10$. In terms of probability ratios, both
are factors below $1.5$: a ratio of approximately $e^{0.40}\approx 1.49$. Empirical evidence
errors below $\varepsilon=0.02$ correspond to agreement between approximate and exact log-evidence
in the second decimal place, which Section~\ref{sec:experiments} confirms is achievable for
Laplace and PG-VB on well-calibrated block routines.
\end{remark}

\begin{remark}[When each non-conjugate routine struggles]
\label{rem:failure-modes}
The stability bound in Proposition~\ref{prop:stability} is tight only when the uniform blockwise
error $\varepsilon$ is actually small. Each non-conjugate routine has its own failure mode:
Laplace (Theorem~\ref{thm:laplace}) requires posterior concentration near $\hat\theta_{ij}$, so
it can be inaccurate on very short segments ($W_{ij}$ small) or for heavy-tailed priors;
mean-field variational Bayes (Proposition~\ref{prop:pg-mf}) can be overconfident and
underestimate posterior variance, biasing moment numerators; EP can fail to converge for
non-log-concave likelihoods; 1D quadrature is accurate for smooth 1D integrands but expensive
in higher dimensions. For these reasons, $\varepsilon$ should be monitored empirically per
dataset, not assumed.
\end{remark}

\section{Prediction: group likelihoods and MAP/Bayes signal evaluation}
\label{sec:prediction}

Having developed training-time posterior inference for partitions and segment parameters (Sections~\ref{sec:dp}--\ref{sec:nonconj}), we now turn to prediction for new $(X,y)$ data.
The prediction layer consumes only posterior summaries (e.g., boundary posteriors, segment-parameter posteriors, and group evidences) and is therefore agnostic to whether block evidences were computed in closed form (Section~\ref{sec:families}) or via approximation (Section~\ref{sec:nonconj}).

This section describes an application-agnostic prediction layer built on top of the
training-time BayesBreak machinery. We assume we have already trained (or otherwise obtained)
a collection of $G$ group-specific BayesBreak models
$\{\mathcal{M}_g\}_{g=1}^G$, where each $\mathcal{M}_g$ comprises:
(i) a fitted segmentation posterior (e.g.\ $P_g(k\mid y)$ and boundary posteriors),
(ii) an exported MAP segmentation $\widehat{t}^{(g)}$ with segment-level posterior
hyperparameters, and (iii) optionally a Bayesian curve representation
(e.g.\ posterior mean/variance at design points).

Prediction supports two primary tasks:

\paragraph{(P1) Group membership likelihoods/posteriors.}
Given new inputs $(X,y)$, compute for each group $g$ an out-of-sample
posterior-predictive likelihood
$p(\text{new}\mid \mathcal{M}_g)$ (or its log), and convert these into posterior group
membership probabilities $P(g\mid \text{new})$ under a prior $\pi_g$.

\paragraph{(P2) Signal prediction given group.}
Given query design points $X$ and a specified (or inferred) group label $g$, return:
(i) MAP piecewise-constant signal values $\widehat{f}^{\text{MAP}}_g(X)$ including the
corresponding breakpoints and segment levels, and/or (ii) Bayesian curve values
$\widehat{f}^{\text{Bayes}}_g(X)$ with uncertainty summaries when available.

Crucially, we distinguish \emph{single point observations} $(x_i,y_i)$ from
\emph{multivariate / set-valued observations} where each unit contains multiple internal
pairs $(x_{u r},y_{u r})$, i.e.\ $X_u$ and $y_u$ are vectors. We also separately address
\emph{vector-valued responses} $y_i\in\mathbb{R}^d$ observed at each design point.

\subsection{Prediction inputs: pointwise vs multivariate/set-valued observations}
\label{sec:prediction-inputs}

\paragraph{Case A (pointwise observations).}
We observe a single ordered sequence
\[
\mathcal{D}^{\text{new}}_{\text{pt}}
:= \{(x_i^{\text{new}},y_i^{\text{new}},w_i^{\text{new}})\}_{i=1}^{m},
\qquad x_1^{\text{new}}<\cdots<x_m^{\text{new}},
\]
with optional exposure weights $w_i^{\text{new}}$ (default $w_i^{\text{new}}=1$ or the
inter-point gaps).

\paragraph{Case B (multivariate/set-valued observations).}
We observe $U$ units (e.g.\ ``fragments'', ``windows'', or generic composite observations),
each with an interval support and multiple internal pairs:
\[
\mathcal{D}^{\text{new}}_{\text{mv}}
:= \Big\{\big([a_u,b_u],\{(x_{u r},y_{u r},w_{u r})\}_{r=1}^{R_u}\big)\Big\}_{u=1}^{U},
\qquad a_u < x_{u r}\le b_u.
\]
Here $X_u=(x_{u1},\dots,x_{uR_u})$ and $y_u=(y_{u1},\dots,y_{uR_u})$ are \emph{multivariate}
(set-valued) per-unit inputs. Prediction must score each unit by integrating or aggregating
over its internal observations, and optionally return unit-level group responsibilities.

\paragraph{Case C (vector-valued response at each design point).}
At each design point we observe a vector response $\mathbf{y}_i\in\mathbb{R}^d$ (or mixed
discrete/continuous components). We consider two standard constructions:
(i) \emph{factorized EF} across dimensions, or (ii) a \emph{multivariate EF} with a joint
log-partition $A(\theta)$ and conjugate prior (when available). The factorized case is
implementation-friendly and preserves the DP structure with minimal changes.

\subsection{Posterior-predictive scoring for a fixed segment under EF conjugacy}
\label{sec:predictive-scoring}

For group membership inference, we require out-of-sample likelihoods. The most
computationally efficient and interpretable approach is to score new data against the
\emph{exported} group curve (MAP segmentation or Bayesian curve). We first derive
segment-level posterior-predictive likelihoods under EF conjugacy; these become building
blocks for both Case A and Case B.

Fix a group $g$ and consider a segment (or block) $B$ of its exported MAP segmentation.
Let the posterior for the segment parameter $\theta$ after training be the conjugate posterior
\[
p(\theta\mid \alpha^{(g)}_B,\beta^{(g)}_B)
\ \propto\
\exp\{\eta(\theta)^\top\alpha^{(g)}_B - \beta^{(g)}_B A(\theta)\},
\]
where $(\alpha^{(g)}_B,\beta^{(g)}_B)$ are the training-updated hyperparameters for that segment.
Now let $\mathcal{Y}_B^{\text{new}}$ denote the collection of new observations whose design
points fall into this segment.

\begin{proposition}[EF posterior-predictive likelihood within a fixed segment]
\label{prop:ef-predictive}
Assume the weighted EF likelihood and conjugate prior conventions of \S\ref{sec:ef} and
\S\ref{sec:notation}, and let $(S_B^{\text{new}},W_B^{\text{new}},H_B^{\text{new}})$ denote the
EF block summaries of the new data restricted to segment $B$:
\[
S_B^{\text{new}}:=\sum_{i\in B} w_i^{\text{new}} T(y_i^{\text{new}}),\qquad
W_B^{\text{new}}:=\sum_{i\in B} w_i^{\text{new}},\qquad
H_B^{\text{new}}:=\sum_{i\in B} \log h(y_i^{\text{new}},w_i^{\text{new}}).
\]
Then the posterior-predictive likelihood of the new data in segment $B$ under group $g$ is
\begin{equation}
p(\mathcal{Y}_B^{\text{new}} \mid \mathcal{M}_g)
=
\exp\{H_B^{\text{new}}\}\,
\frac{Z(\alpha_B^{(g)}+S_B^{\text{new}},\beta_B^{(g)}+W_B^{\text{new}})}
     {Z(\alpha_B^{(g)},\beta_B^{(g)})}.
\label{eq:pp-seg}
\end{equation}
\end{proposition}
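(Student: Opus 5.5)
The plan is to reduce Proposition~\ref{prop:ef-predictive} to Theorem~\ref{thm:ef-integral}: the segment's trained posterior plays the role of the prior. First I would make precise what $p(\mathcal{Y}_B^{\text{new}}\mid\mathcal{M}_g)$ means. Conditional on the exported segmentation, the segment parameter $\theta$ has the conjugate posterior $p(\theta\mid\alpha_B^{(g)},\beta_B^{(g)})$, normalized by $Z(\alpha_B^{(g)},\beta_B^{(g)})$ in the convention \eqref{eq:Zconvention}. The new observations in $B$ are conditionally independent given $\theta$, with weighted EF density \eqref{eq:EFweighted}, and independent of the training data given $\theta$. The posterior-predictive is then
\[
p(\mathcal{Y}_B^{\text{new}}\mid\mathcal{M}_g)=\int \Big[\prod_{i\in B} p(y_i^{\text{new}}\mid\theta,w_i^{\text{new}})\Big]\,p(\theta\mid\alpha_B^{(g)},\beta_B^{(g)})\,d\theta .
\]

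Next I would repeat Steps 1--3 of the proof of Theorem~\ref{thm:ef-integral}, replacing $(\alpha_0,\beta_0)$ by $(\alpha_B^{(g)},\beta_B^{(g)})$ and the index set $(i,j]$ by the (not necessarily contiguous) set of new points falling in $B$. Contiguity plays no role in that argument; only the additive block sums matter. The product of likelihoods collapses to $\exp\{H_B^{\text{new}}+\eta(\theta)^\top S_B^{\text{new}}-W_B^{\text{new}}A(\theta)\}$. Multiplying by the normalized conjugate density produces the kernel with hyperparameters $(\alpha_B^{(g)}+S_B^{\text{new}},\beta_B^{(g)}+W_B^{\text{new}})$, and integrating it gives exactly $Z(\alpha_B^{(g)}+S_B^{\text{new}},\beta_B^{(g)}+W_B^{\text{new}})$. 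This yields \eqref{eq:pp-seg}.

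The only delicate point, which I expect to be the main obstacle, is justifying that the trained segment posterior is itself of the conjugate form with hyperparameters $(\alpha_B^{(g)},\beta_B^{(g)})$, and that treating it as the new prior is legitimate. To settle this, I would note that by Steps 2--3 of Theorem~\ref{thm:ef-integral}, the posterior on a fixed training block $(i,j]$ is $p(\theta\mid\alpha_0+S_{ij},\beta_0+W_{ij})$. In the pooled or group setting of Theorem~\ref{thm:multisubject}, the parameter being scored is the relevant (per-subject or group-summary) one, with its own updated hyperparameters. Conditional independence of new and training data given $\theta$ then turns the predictive into the integral above, by the usual Bayes sequential-update identity.

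I would also record the finiteness hypothesis, namely $Z(\alpha_B^{(g)}+S_B^{\text{new}},\beta_B^{(g)}+W_B^{\text{new}})<\infty$, mirroring the assumption in Theorem~\ref{thm:ef-integral}. As a sanity check, there is a ratio identity: \eqref{eq:pp-seg} equals the block evidence of training plus new data divided by the block evidence of the training data alone, with the $Z(\alpha_0,\beta_0)$ factors cancelling. This also makes the family-specific closed forms in Section~\ref{sec:families-predictive} immediate.
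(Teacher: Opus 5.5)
Your proposal is correct and matches the paper's proof, which simply reruns the derivation of Theorem~\ref{thm:ef-integral} with $(\alpha_0,\beta_0)$ replaced by the trained segment hyperparameters $(\alpha_B^{(g)},\beta_B^{(g)})$. Your additional remarks are sound: conditional independence of new and training data given $\theta$, irrelevance of contiguity, finiteness of $Z$, and the ratio-of-evidences check. The conjugate form of the trained segment posterior is already assumed in the paper's setup, so your justification of it in the pooled setting is not needed.
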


\begin{proof}
The proof is identical to the single-block evidence derivation (Theorem~\ref{thm:ef-integral})
after replacing the prior hyperparameters $(\alpha_0,\beta_0)$ by the segment posterior
hyperparameters $(\alpha_B^{(g)},\beta_B^{(g)})$ learned from training data.
\end{proof}

\paragraph{Interpretation.}
Equation~\eqref{eq:pp-seg} integrates out the segment parameter $\theta$ under the
\emph{posterior} induced by training, yielding a strictly Bayesian out-of-sample likelihood
for new observations assigned to that segment. It is therefore suitable for group scoring
and group posterior computation.

\subsection{Group likelihoods for pointwise observations (Case A)}
\label{sec:group-lik-point}

\paragraph{MAP-segmentation-based scoring.}
Let $\widehat{t}^{(g)}$ denote the exported MAP boundaries for group $g$ on the relevant domain.
Assign each new design point $x_i^{\text{new}}$ to its containing MAP segment $B=B(i;g)$ and
compute segment-wise block summaries $(S_B^{\text{new}},W_B^{\text{new}},H_B^{\text{new}})$.
The group log-likelihood is then:
\begin{equation}
\ell_g^{\text{MAP}}(\mathcal{D}^{\text{new}}_{\text{pt}})
:= \log p(\mathcal{D}^{\text{new}}_{\text{pt}}\mid \mathcal{M}_g)
= \sum_{B\in \widehat{t}^{(g)}} \log p(\mathcal{Y}_B^{\text{new}}\mid \mathcal{M}_g),
\label{eq:group-lik-map}
\end{equation}
where each segment term uses \eqref{eq:pp-seg}. This yields $\mathcal{O}(m)$ evaluation time
once segment membership is determined.

\paragraph{Bayesian-curve-based scoring (optional).}
If $\mathcal{M}_g$ exports a Bayesian curve representation that provides a local posterior over
the mean parameter at each design point (or each small bin), then one can score each point
under its local posterior predictive and sum over points. This can improve robustness near
uncertain boundaries at additional computational cost.

\paragraph{Resegmentation scoring (optional, flexible).}
An alternative definition of group likelihood is to rerun the DP on the new data under
group-specific hyperparameters (treating the new dataset as another draw from group $g$’s
prior). This yields an evidence $P(\mathcal{D}^{\text{new}}_{\text{pt}}\mid g)$ that does not
condition on the trained MAP boundaries. This option is useful when groups differ primarily in
hyperparameters rather than in the learned curve geometry, but is computationally
$\mathcal{O}(k_{\max}m^2)$ per group.

\subsection{Group likelihoods for multivariate/set-valued observations (Case B)}
\label{sec:group-lik-mv}

In Case B, each unit $u$ contains multiple internal observations
$\{(x_{u r},y_{u r})\}_{r=1}^{R_u}$. Two output granularities are typically desired:
(i) \emph{unit-level} group likelihoods and group responsibilities, and (ii) a \emph{global}
group likelihood for the entire new dataset (product over units).

\paragraph{Unit-level predictive likelihood under a fixed group.}
Fix a group $g$. Partition the unit’s internal design points by the MAP segments of group $g$.
For each segment $B$, compute the block summaries $(S_{u,B}^{\text{new}},W_{u,B}^{\text{new}},H_{u,B}^{\text{new}})$
restricted to those internal points. Then define the unit likelihood as:
\begin{equation}
\ell_{u g}
:= \log p(\text{unit }u \mid \mathcal{M}_g)
= \sum_{B \in \widehat{t}^{(g)}} \log p(\mathcal{Y}^{\text{new}}_{u,B}\mid \mathcal{M}_g),
\label{eq:unit-lik}
\end{equation}
where $\mathcal{Y}^{\text{new}}_{u,B}$ is the subset of unit $u$’s internal observations that
fall into segment $B$ and each segment term uses \eqref{eq:pp-seg} with summaries for that subset.

\paragraph{Dataset likelihood and independence assumption.}
Assuming conditional independence of units given group membership, the dataset likelihood under
group $g$ is:
\[
\ell_g^{\text{MV}}(\mathcal{D}^{\text{new}}_{\text{mv}})=\sum_{u=1}^{U}\ell_{u g}.
\]
This is the direct multivariate generalization of \eqref{eq:group-lik-map} and corresponds to
the earlier ``fragment-level'' intuition, but is fully application-agnostic.

\subsection{Vector-valued responses (Case C): factorized and multivariate EF scoring}
\label{sec:prediction-multivariate-response}

Suppose each observation is $\mathbf{y}_i=(y_{i1},\dots,y_{id})$.

\paragraph{Factorized EF across dimensions (recommended default).}
Assume conditional independence across dimensions given segment parameters:
\[
p(\mathbf{y}_i\mid \boldsymbol{\theta})
=\prod_{\ell=1}^d p_\ell(y_{i\ell}\mid \theta_\ell),
\qquad
p(\boldsymbol{\theta})=\prod_{\ell=1}^d p_\ell(\theta_\ell).
\]
Then all block evidences and posterior-predictive terms factorize over $\ell$, so
log-likelihoods are sums over dimensions:
\[
\ell_g(\mathbf{y})=\sum_{\ell=1}^d \ell_{g,\ell}(y_{\cdot\ell}),
\]
where each $\ell_{g,\ell}$ is computed by \eqref{eq:group-lik-map} or \eqref{eq:unit-lik} for
dimension $\ell$. Computationally, this costs $\mathcal{O}(d)$ per observation (or per unit).

\paragraph{Multivariate EF (when available).}
If $(\mathbf{y}_i)$ admit a multivariate EF with conjugate prior on a joint parameter
$\theta\in\mathbb{R}^p$, then the same ratio-of-normalizers structure applies with vector-valued
sufficient statistics. BayesBreak’s DP layer remains unchanged; only the definition of $S_{ij}$
and $Z(\alpha,\beta)$ changes.

\subsection{From group likelihoods to posterior group membership probabilities}
\label{sec:group-posterior}

\paragraph{Single-dataset membership (one group per dataset).}
If the new dataset is assumed to come from a single (unknown) group $g$, define:
\[
\ell_g := \log p(\text{new}\mid \mathcal{M}_g),
\qquad
P(g\mid \text{new})
= \frac{\pi_g\exp(\ell_g)}{\sum_{g'=1}^G \pi_{g'}\exp(\ell_{g'})}.
\]
This yields a \emph{group membership likelihood vector} (the $\ell_g$) and a
\emph{group membership posterior vector} (the $P(g\mid\text{new})$), matching the requested
prediction behavior in an application-agnostic way.

\paragraph{Unit-wise membership (one group per unit).}
If each multivariate unit $u$ is assumed to originate from one group, then define unit-wise
responsibilities:
\begin{equation}
r_{u g}
:= P(g\mid \text{unit }u)
= \frac{\pi_g\exp(\ell_{u g})}{\sum_{g'=1}^G \pi_{g'}\exp(\ell_{u g'})},
\label{eq:unit-resp}
\end{equation}
where $\ell_{u g}$ is given by \eqref{eq:unit-lik}. This is precisely the out-of-sample analog
of the E-step responsibilities in the latent-group model (Section~\ref{sec:latent-em}).

\subsection{MAP signal evaluation given group membership}
\label{sec:map-eval}

Given a group label $g$ and query design points $X^\star=\{x^\star_1,\dots,x^\star_M\}$,
BayesBreak returns either:
(i) \emph{MAP piecewise-constant predictions} or (ii) \emph{Bayesian curve predictions}.

\paragraph{MAP piecewise-constant predictions.}
Let $\widehat{t}^{(g)}$ be the exported MAP boundaries and let each MAP segment $B$ provide
the posterior mean of the mean parameter:
\[
\widehat{\mu}^{(g)}_B := \mathbb{E}[m(\theta)\mid \alpha^{(g)}_B,\beta^{(g)}_B].
\]
Then the MAP signal at a query point $x^\star$ is:
\[
\widehat{f}^{\text{MAP}}_g(x^\star) = \widehat{\mu}^{(g)}_{B(x^\star;g)},
\]
where $B(x^\star;g)$ denotes the MAP segment containing $x^\star$.

\paragraph{Bayesian curve predictions.}
If the DP-derived Bayes regression curve is exported on a grid, return
$\widehat{f}^{\text{Bayes}}_g(x^\star)$ by evaluating on the nearest grid point (or by
piecewise-constant interpolation within each grid interval). When pointwise posterior
variances are available, also return credible bands.

\subsection{Algorithms for prediction}
\label{sec:prediction-algorithms}

\begin{algorithm}[H]
\caption{PredictGroup$(\{\mathcal{M}_g\}_{g=1}^G,\ \mathcal{D}^{\text{new}},\ \pi)$}
\label{alg:predict-group}
\KwIn{
Trained group models $\{\mathcal{M}_g\}$; new data $\mathcal{D}^{\text{new}}$ (Case A or Case B);
group prior weights $\pi$ (default uniform)
}
\KwOut{
Log-likelihoods $\{\ell_g\}$ and posterior membership probabilities $\{P(g\mid \text{new})\}$;
if Case B, also unit responsibilities $\{r_{u g}\}$
}
\BlankLine
\For{$g=1$ \KwTo $G$}{
  \eIf{\textbf{Case A (pointwise)}}
  {
    Assign each $(x_i^{\text{new}},y_i^{\text{new}})$ to a MAP segment $B(i;g)$ under $\widehat{t}^{(g)}$\;
    Accumulate segment summaries $(S_B^{\text{new}},W_B^{\text{new}},H_B^{\text{new}})$\;
    Compute $\ell_g \leftarrow \sum_B \Big[ H_B^{\text{new}} + \log Z(\alpha_B^{(g)}+S_B^{\text{new}},\beta_B^{(g)}+W_B^{\text{new}})
      - \log Z(\alpha_B^{(g)},\beta_B^{(g)})\Big]$\;
  }
  {
    \tcp{Case B (multivariate/set-valued units)}
    \For{$u=1$ \KwTo $U$}{
      Partition unit $u$ internal pairs by MAP segments of group $g$; compute per-segment summaries\;
      Compute $\ell_{u g}$ via \eqref{eq:unit-lik} using \eqref{eq:pp-seg}\;
    }
    $\ell_g \leftarrow \sum_{u=1}^U \ell_{u g}$\;
  }
}
Compute $P(g\mid \text{new}) \propto \pi_g\exp(\ell_g)$ and normalize over $g$\;
\If{\textbf{Case B}}{
  Compute unit responsibilities $r_{u g}\propto \pi_g\exp(\ell_{u g})$ and normalize over $g$ for each $u$ (cf.\ \eqref{eq:unit-resp})\;
}
\end{algorithm}

\begin{algorithm}[H]
\caption{PredictMAPSignal$(\mathcal{M}_g,\ X^\star,\ \texttt{mode})$}
\label{alg:predict-map}
\KwIn{
A group model $\mathcal{M}_g$ with exported MAP segmentation $\widehat{t}^{(g)}$ and/or Bayes curve;
query points $X^\star=\{x^\star_1,\dots,x^\star_M\}$; mode $\in\{\texttt{MAP},\texttt{Bayes}\}$
}
\KwOut{
Predicted signal values at $X^\star$ (and optionally uncertainty)
}
\BlankLine
\eIf{\texttt{mode}=\texttt{MAP}}{
  \ForEach{$x^\star$ in $X^\star$}{
    Find containing MAP segment $B(x^\star;g)$ (binary search on breakpoint coordinates)\;
    Return $\widehat{f}^{\text{MAP}}_g(x^\star)=\widehat{\mu}^{(g)}_{B(x^\star;g)}$\;
  }
}{
  Evaluate/interpolate the exported Bayes regression curve at $X^\star$; optionally return posterior variances\;
}
\end{algorithm}

\subsection{Complexity}
For Case A, prediction under a fixed group is $\mathcal{O}(m)$ after segment assignment
($\mathcal{O}(m\log k)$ worst case via binary search; often $\mathcal{O}(m)$ with a streaming pointer).
For Case B, it is $\mathcal{O}(\sum_u R_u)$ per group, plus segment assignment for internal points.
Vector-valued responses add a factor of $d$ in the factorized EF construction. Importantly,
prediction does \emph{not} require rerunning the DP unless the optional resegmentation scoring
mode is enabled.
\subsection{Prediction diagnostics}
\label{sec:prediction-diagnostics}

Because the predictive layer reuses segment-level posteriors, the group posterior
$P(g\mid\text{new})$ derived from \eqref{eq:group-lik-map} and the pointwise predictive summaries
are only as well calibrated as the segmentation itself. We recommend two diagnostics for real-data
deployments:
\begin{itemize}\itemsep2pt
\item \textbf{Held-out log-likelihood (HLL) trace.} Split each test sequence into blocks and
compute the per-block posterior-predictive log-likelihood under \eqref{eq:pp-seg}. The sum over
blocks is the HLL for that sequence. Compare HLL trajectories across candidate segmentations
(exported MAP vs.\ Bayes curve) and across groups; a well-calibrated model yields an HLL
ranking consistent with the $k$-posterior ranking on held-out data.
\item \textbf{PIT residuals.} For point-valued new observations with continuous predictive CDF,
evaluate the probability-integral transform $u_i=F_g(y_i^{\text{new}}\mid\mathcal{M}_g)$.
Under correct calibration the $\{u_i\}$ are Uniform$(0,1)$; histograms or Kolmogorov--Smirnov
summaries flag miscalibration (overconfident predictive widths show up as U-shaped histograms).
\end{itemize}
These diagnostics are cheap post-hoc checks; in the real-data experiments of
Section~\ref{sec:experiments} we report HLL values alongside boundary-recovery metrics.

\section{Algorithms, complexity, and numerical considerations}
\label{sec:algorithms}

Section~\ref{sec:dp} derives the exact dynamic program (DP) that sits at the core of BayesBreak.
The present section complements that development with implementation-ready pseudocode and a
collection of numerical-stability considerations.
The primary purpose is pragmatic: to make explicit what must be computed, stored, and
normalized in order to reproduce the theoretical quantities (evidences, boundary posteriors,
Bayes curves) without relying on undocumented conventions.

\subsection{Block-evidence precomputation via prefix sums}
\label{sec:alg-blocks}

For conjugate exponential-family blocks, Theorem~\ref{thm:ef-integral} shows that the single
block evidence and moments can be expressed as functions of aggregated sufficient statistics.
This enables an \emph{exact} $\mathcal{O}(n^2)$ precomputation of all candidate blocks $(i,j]$
with $0\le i<j\le n$. Throughout this section we use the body convention: a block $(i,j]$ has
length $j-i$ and contains the indices $i+1,\dots,j$.

In practice, it is numerically preferable to store \emph{log} block evidences,
\begin{equation}
    \ell_{i j}^{(0)} \;\coloneqq\; \log \widetilde{A}^{(0)}_{i j},
\end{equation}
and similarly for any moment numerators required for the Bayes regression curve.
Algorithm~\ref{alg:block-precompute} expresses the precomputation at an abstract level, with a
single call to a \texttt{BlockRoutine} that implements the family-specific closed forms given in
Section~\ref{sec:families}.

\begin{algorithm}[H]
\caption{PrecomputeBlocks$(y,x,w,\texttt{BlockRoutine})$}
\label{alg:block-precompute}
\KwIn{Ordered observations $\{(x_i,y_i,w_i)\}_{i=1}^n$ (with $w_i\equiv 1$ if unweighted);
family-specific routine \texttt{BlockRoutine} implementing Theorem~\ref{thm:ef-integral}.}
\KwOut{Matrices of log-block evidences $\ell^{(0)}_{ij}=\log \widetilde{A}^{(0)}_{ij}$ on the index set $\{0\le i<j\le n\}$ and
(optional) moment numerators $\ell^{(r)}_{ij}=\log \widetilde{A}^{(r)}_{ij}$ for $r\in\{1,2,\dots\}$.}
\BlankLine
\textbf{1. Prefix summaries}\;
Compute prefix cumulative sufficient statistics and weights required by \texttt{BlockRoutine}:
store $W_j\coloneqq\sum_{t=1}^j w_t$, $S_j\coloneqq\sum_{t=1}^j w_t\,T(y_t)$, and any additional
family-specific terms (e.g., $Q_j\coloneqq\sum_{t=1}^j w_t y_t^2$ for Gaussian). Set $W_0=S_0=Q_0=0$.
\BlankLine
\textbf{2. Enumerate blocks}\;
\For{$i\leftarrow 0$ \KwTo $n-1$}{
  \For{$j\leftarrow i+1$ \KwTo $n$}{
    Extract block summaries for $y_{i+1:j}$ via $W_{ij}=W_j-W_i$, $S_{ij}=S_j-S_i$, $Q_{ij}=Q_j-Q_i$ (constant time);
    \texttt{BlockRoutine} returns $\log A^{(0)}_{ij}$ and any moment numerators $\log A^{(r)}_{ij}$;
    Add any length factor $\log g(x_j-x_i)$ and hazard constants as in \eqref{eq:Atilde};
    Store $\ell^{(0)}_{ij}\leftarrow \log \widetilde{A}^{(0)}_{ij}$ and, if needed,
    $\ell^{(r)}_{ij}\leftarrow \log \widetilde{A}^{(r)}_{ij}$.
  }
}
\end{algorithm}

\paragraph{Memory notes.}
Storing all $\ell^{(0)}_{ij}$ uses $\Theta(n^2)$ memory.
When $n$ is large but $k_{\max}$ is moderate, one can reduce the memory footprint by streaming
blocks in the inner DP loop (computing $\ell^{(0)}_{ij}$ on the fly from prefix statistics), at
cost of recomputing some block summaries.
The DP theory remains unchanged.

\subsection{Log-space dynamic programming}
\label{sec:alg-dp-impl}

The forward/backward DP in Section~\ref{sec:dp} sums products of block evidences.
To avoid underflow/overflow, implementations should work in log space and use a stable
\texttt{logsumexp} primitive.
Algorithm~\ref{alg:dp-log} presents a log-space version of the recursions in
\eqref{eq:LR}--\eqref{eq:post-k}.

\begin{algorithm}[H]
\caption{DPLog$(\ell^{(0)}_{ij},k_{\max},\log p(k),\log C_k)$}
\label{alg:dp-log}
\KwIn{Log block evidences $\ell^{(0)}_{ij}=\log \widetilde{A}^{(0)}_{ij}$ on the index set $\{0\le i<j\le n\}$ (block $(i,j]$);
maximum segments $k_{\max}$;
log prior on $k$, $\log p(k)$;
log normalizers $\log C_k$ (with $\log C_k\equiv 0$ if $g\equiv 1$).}
\KwOut{Forward messages $\log \widetilde{L}_{k,j}$, backward messages $\log \widetilde{R}_{k,i}$,
log evidence per $k$, and normalized posterior $p(k\mid y)$.}
\BlankLine
\textbf{1. Initialize}\;
Set $\log \widetilde{L}_{0,0}\leftarrow 0$ and $\log \widetilde{L}_{0,j}\leftarrow -\infty$ for $j\ge 1$;
set $\log \widetilde{R}_{0,n}\leftarrow 0$ and $\log \widetilde{R}_{0,i}\leftarrow -\infty$ for $i\le n-1$.
\BlankLine
\textbf{2. Forward recursion}\;
\For{$k\leftarrow 1$ \KwTo $k_{\max}$}{
  \For{$j\leftarrow k$ \KwTo $n$}{
    $\log \widetilde{L}_{k,j}\leftarrow \texttt{logsumexp}_{h\in\{k-1,\dots,j-1\}}\big(\log \widetilde{L}_{k-1,h} + \ell^{(0)}_{h,j}\big)$;
  }
}
\BlankLine
\textbf{3. Backward recursion}\;
\For{$k\leftarrow 1$ \KwTo $k_{\max}$}{
  \For{$i\leftarrow n-k$ \KwTo $0$ (decreasing)}{
    $\log \widetilde{R}_{k,i}\leftarrow \texttt{logsumexp}_{h\in\{i+1,\dots,n-k+1\}}\big(\ell^{(0)}_{i,h} + \log \widetilde{R}_{k-1,h}\big)$;
  }
}
\BlankLine
\textbf{4. Evidence and posterior over $k$}\;
For each $k\in\{1,\dots,k_{\max}\}$ set
$\log p(y\mid k)\leftarrow \log \widetilde{L}_{k,n} - \log C_k$;
compute unnormalized log posterior scores
$\log \pi_k \leftarrow \log p(k) + \log p(y\mid k)$;
set $p(k\mid y)\leftarrow \exp\{\log \pi_k - \texttt{logsumexp}_{k'}(\log \pi_{k'})\}$.
\end{algorithm}

\paragraph{Boundary marginals and Bayes curves.}
Once $\log\widetilde{L}$ and $\log\widetilde{R}$ are available, the boundary marginal in
\eqref{eq:boundary-post} can be computed stably by subtracting the common normalizer
$\log\widetilde{L}_{k,n}$.
Similarly, Bayes regression moments (Theorem~\ref{thm:dp-correctness}, Step~6; cf.\ \eqref{eq:segmom})
can be computed by replacing $\ell^{(0)}_{ij}$ with the corresponding log moment numerators and
reusing the same log-space DP bookkeeping.

\subsection{MAP segmentation and backtracking}
\label{sec:alg-map}

The DP used for evidences can be augmented to recover a MAP segmentation in
$\mathcal{O}(k_{\max}n^2)$ time.
During the forward recursion, one stores, for each $(k,j)$, the maximizing split point
\begin{equation}
    h^\star(k,j)\in\arg\max_{h\in\{k-1,\dots,j-1\}} \Big\{M_{k-1,h}+\ell^{(0)}_{h,j}\Big\},
    \label{eq:map-backpointer}
\end{equation}
where $M_{k,j}$ is the max-sum message defined in \eqref{eq:max-sum-inline}.
Algorithm~\ref{alg:map-forward} computes $M_{k,j}$ and the backpointers; Algorithm~\ref{alg:map-backtrack} then backtracks from $(\widehat{k},n)$ to recover the boundary vector
$\widehat{b}=(\widehat{b}_0,\dots,\widehat{b}_{\widehat{k}})$.

\begin{algorithm}[H]
\caption{MAPForward$(\ell^{(0)}_{ij},k_{\max})$}
\label{alg:map-forward}
\KwIn{Log block evidences $\ell^{(0)}_{ij}$ on $\{0\le i<j\le n\}$; $k_{\max}$.}
\KwOut{Max-sum messages $M_{k,j}$ and backpointers $h^\star(k,j)$ for $k\le k_{\max}$.}
\BlankLine
Set $M_{0,0}\leftarrow 0$, $M_{0,j}\leftarrow -\infty$ for $j\ge 1$\;
\For{$k\leftarrow 1$ \KwTo $k_{\max}$}{
  \For{$j\leftarrow k$ \KwTo $n$}{
    $h^\star(k,j)\leftarrow \arg\max_{h\in\{k-1,\dots,j-1\}}\{M_{k-1,h}+\ell^{(0)}_{h,j}\}$\;
    $M_{k,j}\leftarrow M_{k-1,h^\star(k,j)}+\ell^{(0)}_{h^\star(k,j),j}$\;
  }
}
\end{algorithm}

\begin{algorithm}[H]
\caption{MAPBacktrack$(h^\star,\widehat{k})$}
\label{alg:map-backtrack}
\KwIn{Backpointers $h^\star(k,j)$ as in \eqref{eq:map-backpointer}; chosen segment count $\widehat{k}$.}
\KwOut{MAP boundary vector $\widehat{b}_0=0<\widehat{b}_1<\cdots<\widehat{b}_{\widehat{k}}=n$.}
\BlankLine
Set $\widehat{b}_{\widehat{k}}\leftarrow n$ and $q\leftarrow\widehat{k}$.
\While{$q\ge 1$}{
  $\widehat{b}_{q-1}\leftarrow h^\star(q,\widehat{b}_{q})$\;
  $q\leftarrow q-1$\;
}
Return $\widehat{b}$.
\end{algorithm}

\subsection{Pooling across sequences and groups}
\label{sec:alg-pooling}

For replicated or multi-subject data with shared boundaries (Section~\ref{sec:replicates}),
Theorem~\ref{thm:multisubject} implies that subject-level block evidences multiply.
In log space, pooling is therefore a simple sum:
\begin{equation}
    \ell^{(0,\text{pooled})}_{ij} \;=\; \sum_{s=1}^S \ell^{(0,s)}_{ij}.
\end{equation}
The same applies to known groups, with the sum taken over subjects in a group.
The resulting pooled matrices can be passed unchanged to Algorithm~\ref{alg:dp-log}.

\subsection{Latent-group EM implementation notes}
\label{sec:alg-em}

Section~\ref{sec:latent-em} formulates latent groups as a \emph{template mixture}: each group has one
exported segmentation template, while subject-specific segment parameters have already been integrated out in
subject-level block evidences. From an implementation perspective, the M-step for each group is therefore a
standard max-sum BayesBreak backtracking run with a responsibility-weighted score matrix.
If $r_{sg}$ denotes the responsibility of subject $s$ for group $g$, then the effective block score is
\begin{equation}
    B^{(g)}_{ij}
    \;=\; \log g(x_j-x_i) + \sum_{s=1}^S r_{sg}\, \ell^{(0,s)}_{ij},
\end{equation}
where $\ell^{(0,s)}_{ij}=\log A^{(0,s)}_{ij}$ is the subject-level log-block evidence.
Running the max-sum DP over this score matrix for each $k$ and then adding the count-specific offset
$\log p(k)-\log C_k$ yields the exact template update for group $g$.
The monotonicity guarantee in Theorem~\ref{thm:em-monotone} applies precisely because these E- and M-step
updates are exact for the stated template-mixture objective.

\subsection{Complexity summary and stability}
\label{sec:alg-complexity}

For conjugate families with constant-time block evaluation from prefix summaries,
Algorithm~\ref{alg:block-precompute} costs $\Theta(n^2)$ time.
The DP in Algorithm~\ref{alg:dp-log} costs $\Theta(k_{\max}n^2)$ time.
Memory can be $\Theta(k_{\max}n)$ if one stores only two consecutive $k$-layers of
$\log\widetilde{L}$ and $\log\widetilde{R}$ (streaming in $k$), but boundary posteriors and Bayes
curves typically require additional storage or recomputation.

Finally, we emphasize two numerical practices that are essential in realistic data sets:
(i) always store evidences and DP messages in log space and normalize via \texttt{logsumexp};
(ii) standardize/center sufficient statistics when possible (e.g., for Gaussian blocks) to
reduce catastrophic cancellation in moment computations.
These practices do not affect correctness, but they materially improve stability when $n$ is
in the hundreds or thousands.

\subsection{A numerical-implementation checklist}
\label{sec:alg-checklist}

The following checklist distills the common pitfalls we have encountered and addresses them
pre-emptively. Each item is cheap to implement and pays for itself on even moderate-sized
problems.

\begin{enumerate}\itemsep2pt
\item \textbf{Log space everywhere.} Store all block evidences, moment numerators, and DP
messages as $\log$ values from the start. Convert to linear space only for display.
\item \textbf{logsumexp with a stable shift.} Implement \texttt{logsumexp} as
$\mathrm{lse}(x)=m+\log\sum\exp(x_i-m)$ with $m=\max_i x_i$ \citep{blanchard2021lse}. Cache the
maximum once and reuse for both numerator and denominator when normalizing.
\item \textbf{Prefix sums in the right order.} Compute prefix sums of $w_t$, $w_t T(y_t)$, and
(for Gaussian) $w_t y_t^2$ \emph{before} enumerating blocks. Store them as double-precision arrays
of length $n+1$ initialized at index 0. Block summaries then cost one subtraction per block.
\item \textbf{Numerical sanity checks.} After the forward recursion, verify that
$\log\widetilde L_{0,0}=0$ and $\log\widetilde L_{k,j}=-\infty$ for $j<k$; after the backward
recursion, verify the analogous boundary conditions on $\log\widetilde R$. Verify that
$\log\widetilde L_{k,n}=\log\widetilde R_{k,0}$ up to floating-point tolerance.
\item \textbf{Moment overflow.} When computing Bayes regression curves via \eqref{eq:segmom},
subtract a block-specific shift (e.g., the block posterior-mean estimate) before exponentiating
to reduce dynamic range, then add the shift back at the end.
\item \textbf{Degenerate blocks.} Guard against $W_{ij}=0$ (missing-data blocks) by returning
$\log A^{(0)}_{ij}=-\infty$ or a user-specified fallback. Guard against $Q_{ij}$ numerically
negative (from catastrophic cancellation) by clamping to zero.
\item \textbf{Reproducibility.} Fix a seed for EM initializations and record it in the run
metadata; record the exact version of every family-specific \texttt{BlockRoutine} used.
\end{enumerate}

\section{Experiments and results}
\label{sec:experiments}

This section reports the empirical validation that is directly supported by the archived project.
The bundled artifacts consist of a synthetic test suite covering single-sequence recovery,
likelihood-family portability, calibration of boundary posteriors, latent-group behavior,
non-conjugate block approximations, and runtime scaling. We therefore focus on what can be
substantiated from those artifacts and avoid claiming unreproduced real-data benchmarks or baseline
comparisons that are not present in the project archive.

\subsection{Metrics}
\label{sec:metrics}

\begin{table}[H]
    \centering
    \begin{tabular}{ll}\toprule
Evaluation objective & Metric\\\midrule
Boundary recovery & Precision / Recall / F1 at tolerance $\tau$\\
Signal recovery & MSE between estimated and true latent signal\\
Posterior calibration & Calibration curve / ECE for $p(b_i=1\mid y)$\\
Model fit quality & $-\log p(y)$ (marginal evidence) or predictive NLL\\
Runtime & Wall-clock time vs. $n$ and $k_{\max}$\\\bottomrule
\end{tabular}

    \caption{Metrics used in the synthetic experiments. When ground-truth boundaries are available, boundary-level metrics use a small tolerance window (here $\tau=2$ indices) so that near-misses are not counted as complete failures.}
    \label{tab:metrics}
\end{table}

The resulting evaluation protocol is deliberately mixed: some diagnostics target point estimation
(boundary F1, boundary-location MAE, latent-signal MSE), while others target uncertainty quality
(calibration curves for marginal boundary probabilities and evidence-based model selection through
$P(k\mid y)$). This is appropriate for BayesBreak because the method is intended to return both an
exported segmentation and posterior summaries around that segmentation.

\subsection{Single-sequence Gaussian illustration}
\label{sec:single_synth}

We begin with a Gaussian sequence containing two true changepoints. Figure~\ref{fig:single_synth}
shows two complementary outputs: the marginal posterior probability that each interior index is a
boundary, and the exported joint MAP segmentation together with segment-level posterior intervals.
The two dominant posterior spikes align with the true changepoints, while a smaller shoulder of
posterior mass just before the first jump reflects local ambiguity caused by a short transitional
run of noisy observations. That ambiguity is visible again in the bottom panel, where the exported
fit inserts a short intermediate segment before settling into the large middle plateau.

\begin{figure}[H]
    \centering
    \includegraphics[width=\textwidth]{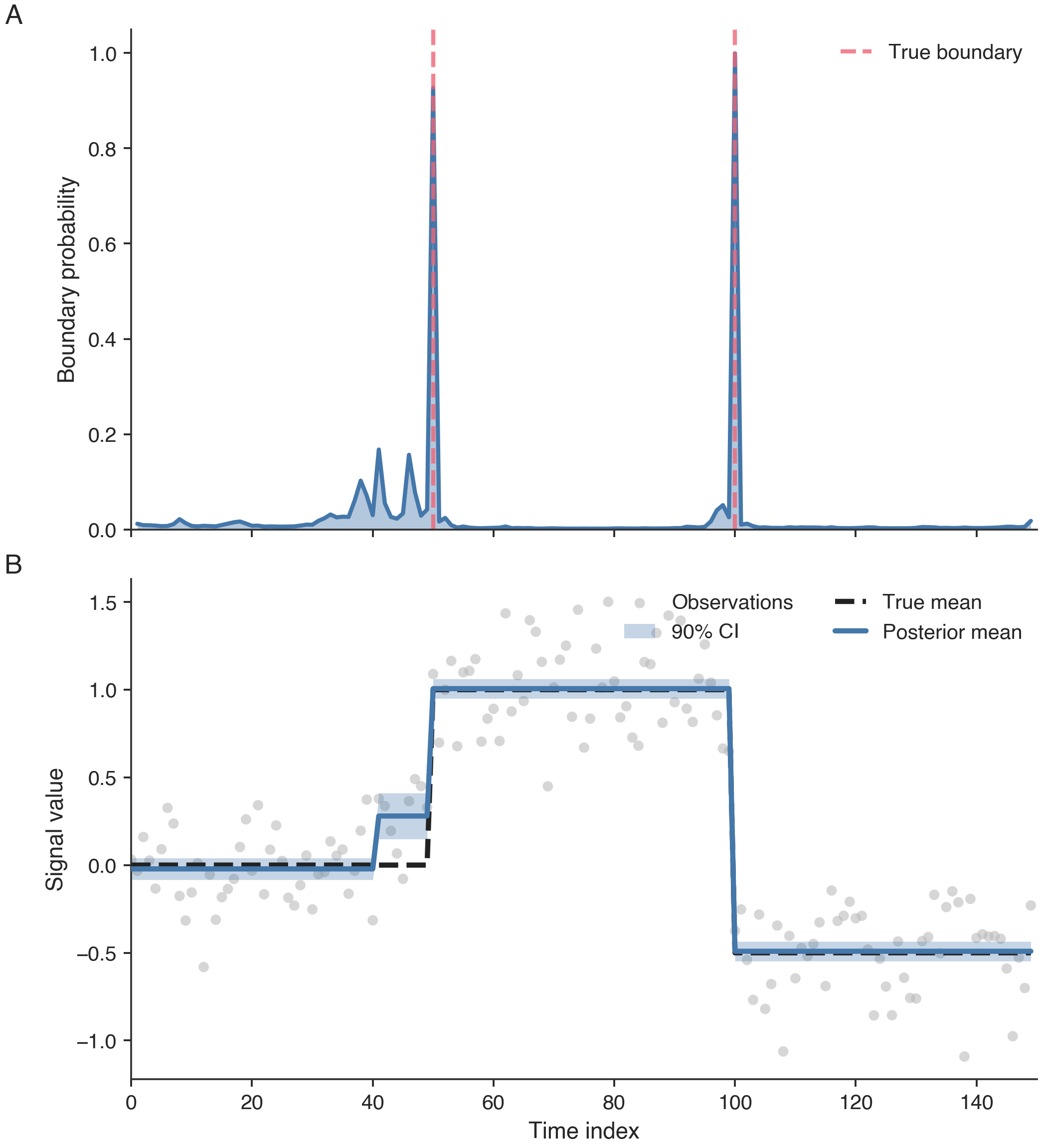}
    \caption{Single-sequence Gaussian example. Top: marginal posterior probability that each interior index $i\in\{1,\dots,n-1\}$ is a changepoint, with dashed vertical lines marking the true changepoints. Bottom: observed data, true latent mean, exported joint MAP segmentation, and 90\% segment-level posterior intervals.}
    \label{fig:single_synth}
\end{figure}

\begin{table}[H]
    \centering
    \begin{tabular}{lr}\toprule
Quantity & Value\\\midrule
Selected $k$ (\texttt{k\_ml\_}) & 4\\
Posterior mean $\mathbb{E}[k]$ & 4.742\\
MAP $k$ & 4\\
$\log p(y)$ & -18.031\\
\bottomrule\end{tabular}

    \caption{Posterior summary for the single-sequence Gaussian example in Figure~\ref{fig:single_synth}. The table separates the displayed exported segmentation from posterior summaries of the segment count.}
    \label{tab:posterior_summary}
\end{table}

Table~\ref{tab:posterior_summary} makes an important methodological point explicit: the displayed
segmentation and the posterior over the segment count are related but not identical objects. In the
archived example, the exported joint MAP segmentation uses four segments, while the posterior mean
$\mathbb{E}[k\mid y]$ is larger, indicating residual posterior mass on more highly segmented
configurations. This is precisely why the paper distinguishes sum-product posterior summaries from
max-sum exported segmentations.

\subsection{Likelihood-family showcase}

BayesBreak is designed so that the global DP layer does not depend on the observation model once
block evidences are available. Figure~\ref{fig:family_showcase} visualizes this portability across
three conjugate families (Gaussian, Poisson, Binomial) and one low-dimensional quadrature example
for $(0,1)$-valued responses. Across all four panels, the fitted piecewise-constant signal tracks
the true latent parameter closely and preserves the correct qualitative segmentation pattern.

\begin{figure}[H]
    \centering
    \includegraphics[width=\textwidth]{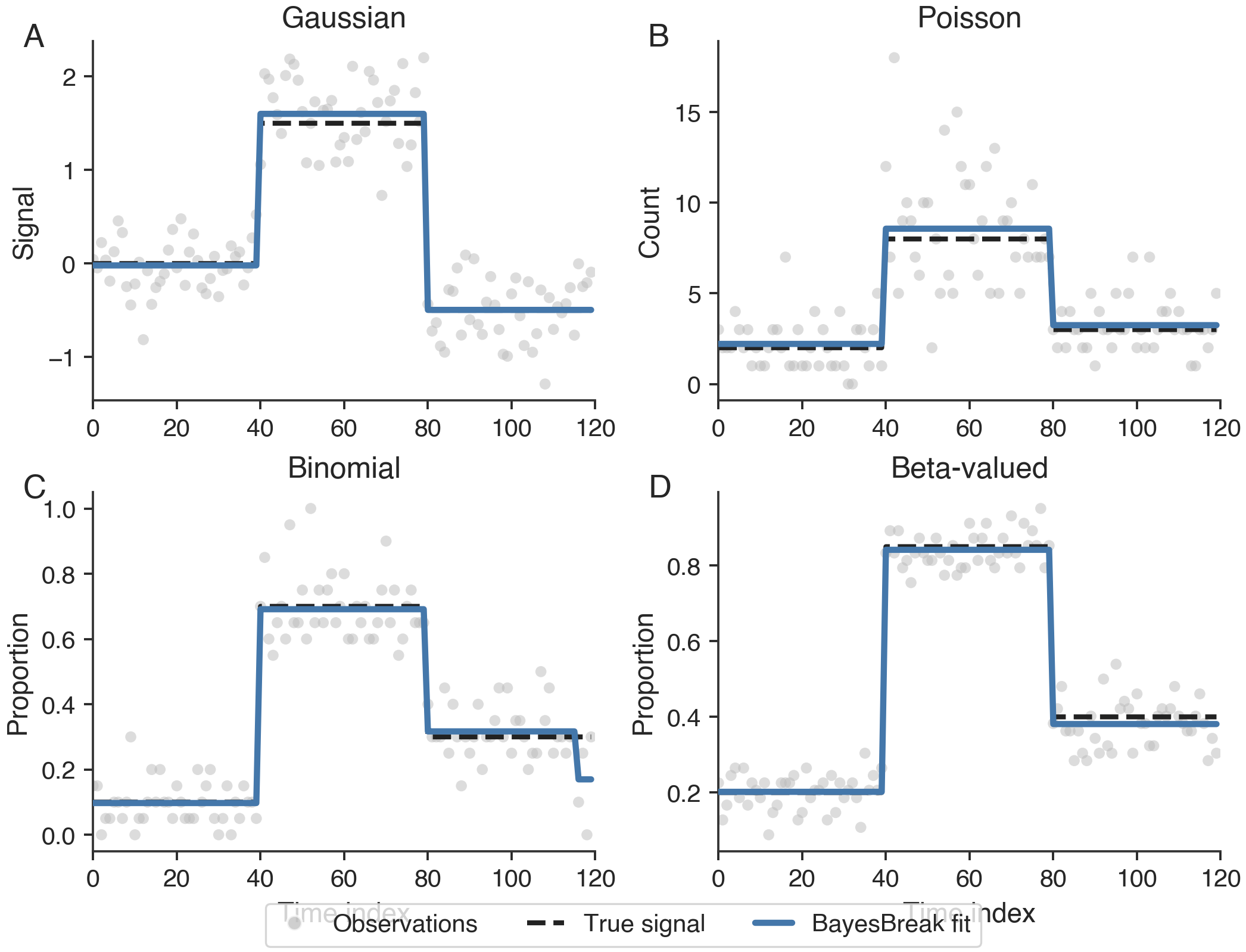}
    \caption{Likelihood-family showcase. The first three panels use closed-form conjugate block integrals (Gaussian, Poisson, and Binomial), while the final panel uses the one-dimensional Beta-response quadrature routine. In each case the fitted BayesBreak signal is close to the true piecewise-constant latent parameter.}
    \label{fig:family_showcase}
\end{figure}

\subsection{Quantitative summary across families}

Table~\ref{tab:single_quant} aggregates the small synthetic benchmark packaged with the archive.
The main takeaway is not that one family universally dominates another, but that the same DP
backend remains usable across very different block models. The Poisson setting is visibly the most
challenging among the archived examples: it exhibits the weakest boundary F1 and the largest
reconstruction error, which is consistent with the higher count variability shown in
Figure~\ref{fig:family_showcase}. By contrast, the Beta-response quadrature example achieves the
strongest boundary recovery in this small benchmark.

\begin{table}[H]
    \centering
    \begin{tabular}{lrrrrrrr}\toprule
Family & n & $k^\star$ & $\hat{k}$ & F1@$\tau$ & MAE & MSE & $-\log p(y)/n$\\\midrule
Gaussian & 120 & 3 & 3 & 0.968 & 0.00 & 0.0023 & 0.191\\
Poisson & 120 & 3 & 4 & 0.777 & 0.20 & 0.4501 & 2.162\\
Binomial & 120 & 3 & 3 & 0.960 & 0.06 & 0.0004 & 2.108\\
Beta-valued & 120 & 3 & 3 & 1.000 & 0.00 & 0.0001 & 2.620\\
\bottomrule\end{tabular}

    \caption{Available synthetic summary across the bundled family-specific examples. The final row corresponds to the Beta-response quadrature example rather than to a conjugate closed-form model.}
    \label{tab:single_quant}
\end{table}

Because the table reflects one archived benchmark configuration rather than a large Monte Carlo
study, the numbers should be read as a sanity check on portability and relative difficulty, not as
a definitive cross-family ranking.

\subsection{Calibration of boundary posterior probabilities}

A central advantage of the Bayesian formulation is that it returns marginal posterior probabilities
for boundary events. Figure~\ref{fig:calibration} checks whether those probabilities are
calibrated under repeated Gaussian simulations. The reliability curve tracks the diagonal
reasonably closely overall, with an empirical calibration error reported in the figure of
approximately $\mathrm{ECE}=0.016$. The mid-probability bins are somewhat noisy, but the ordering of
confidence levels is sensible: bins assigned larger posterior boundary probability correspond to
larger empirical boundary frequency.

\begin{figure}[H]
    \centering
    \includegraphics[width=0.78\textwidth]{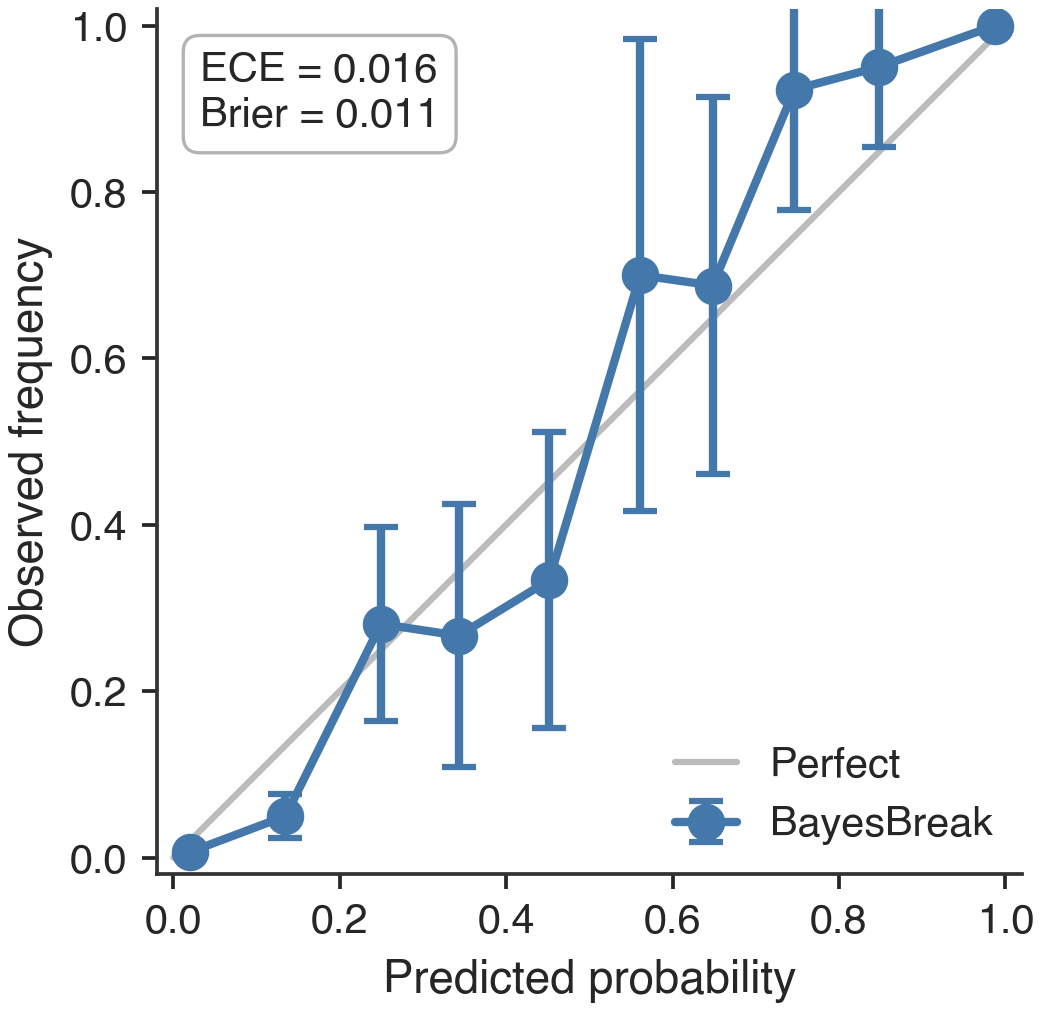}
    \caption{Calibration of marginal boundary posterior probabilities under synthetic Gaussian data. The dashed diagonal denotes perfect calibration; the orange curve is the empirical frequency within bins of predicted probability.}
    \label{fig:calibration}
\end{figure}

This experiment is valuable because segmentation methods are often judged only by recovered
changepoints. For BayesBreak, well-behaved posterior probabilities are equally important: they
control downstream thresholding, boundary ranking, and any decision rule that is sensitive to the
strength of evidence rather than just to the best segmentation.

\subsection{Latent-group pooling}

Figure~\ref{fig:latent_groups} evaluates the latent-template mixture on synthetic sequences drawn
from two groups with different changepoint patterns. The left panel shows near-separable
responsibilities, indicating that the EM fit recovers the group structure cleanly in this example.
The right panel reports the group-specific boundary marginals induced by the fitted templates; the
two groups place their posterior mass at different locations, matching the intended generative
structure.

\begin{figure}[H]
    \centering
    \includegraphics[width=0.82\textwidth]{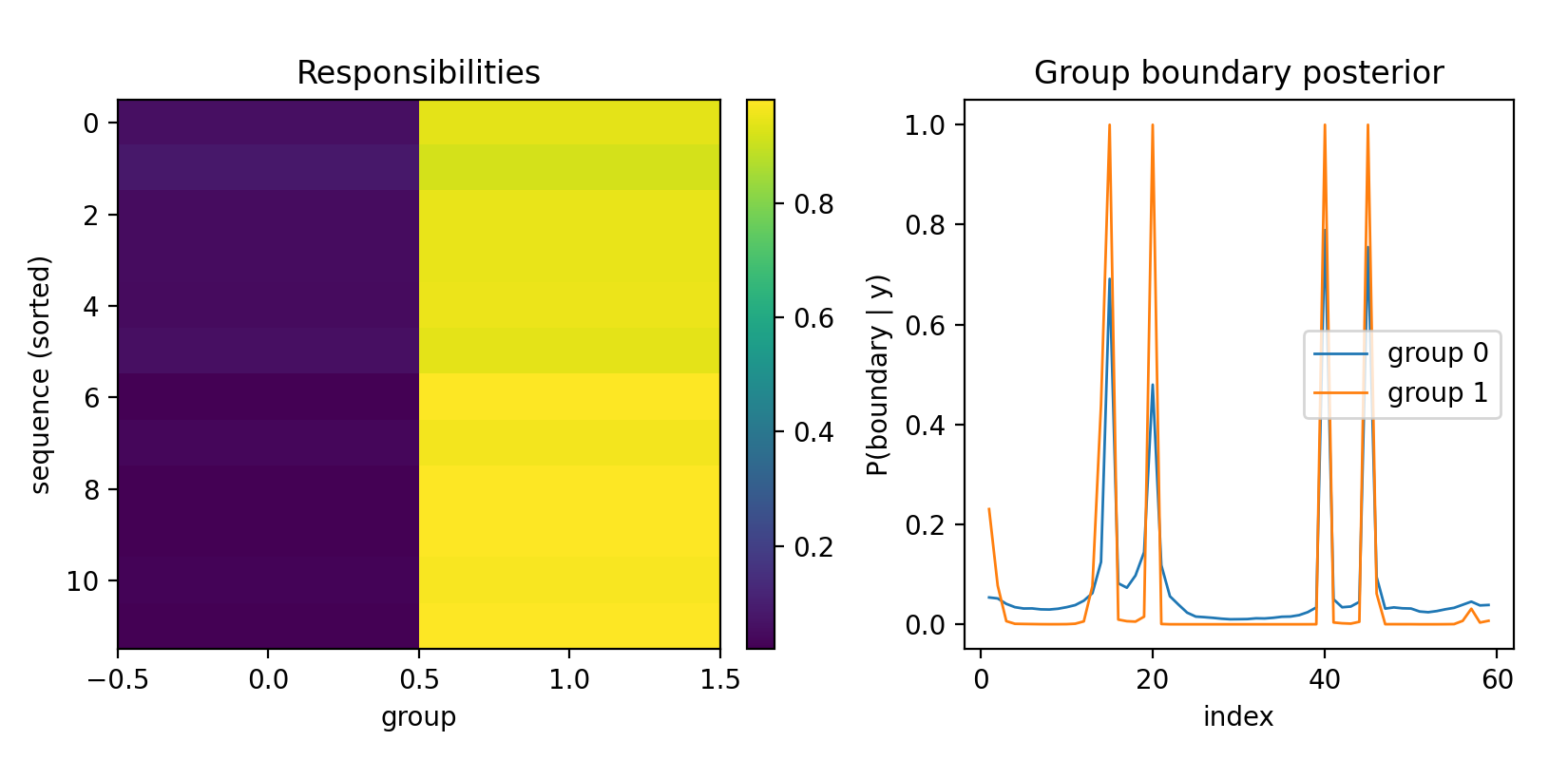}
    \caption{Latent-group pooling on synthetic Gaussian sequences. Left: posterior responsibilities for each sequence and latent group (sequences sorted for display). Right: group-specific marginal boundary probabilities under the fitted latent templates.}
    \label{fig:latent_groups}
\end{figure}

The original draft also included a third panel intended to display group-specific Bayes curves.
That archived export is numerically unstable and produces nonphysical values, so it has been
removed from the paper rather than left in as a misleading diagnostic.

\subsection{Non-conjugate approximation trade-offs}

For non-conjugate likelihoods, BayesBreak relies on approximate block evidences while reusing the
same DP layer. Table~\ref{tab:nonconj_tradeoff} compares several such approximations in a bundled
logistic-normal synthetic example, using numerical quadrature as a reference. The table shows the
expected speed--accuracy trade-off: Laplace is the fastest among the listed approximations in this
example, while exact quadrature is the slowest. At the same time, block-level approximation error
is not perfectly monotone with segmentation quality, because the DP aggregates many local scores
nonlinearly.

\begin{table}[H]
    \centering
    \begin{tabular}{lrrrr}\toprule
Method & $\max|\Delta \log A^0|$ & time (s) & F1@$\tau$ & $\hat{k}$\\\midrule
quadrature & 0.000 & 0.108 & 0.286 & 6\\
laplace & 4.081 & 0.066 & 0.286 & 6\\
jj & 4.076 & 0.087 & 0.500 & 3\\
ep & 17.654 & 0.077 & 0.500 & 7\\
pg\_vb & 4.076 & 0.090 & 0.500 & 3\\
\bottomrule\end{tabular}

    \caption{Approximation trade-offs for a non-conjugate Bernoulli/logistic block model. Quadrature is used as the local reference block integral; the remaining methods trade block-level fidelity against computational cost.}
    \label{tab:nonconj_tradeoff}
\end{table}

\paragraph{Connection to Proposition~\ref{prop:stability}.}
The large reported values of $\max|\Delta\log A|$ in Table~\ref{tab:nonconj_tradeoff} (tens of
log units on the worst block for some approximations) may appear alarming relative to the
boundary F1 column, which is uniformly good. The explanation lies in
Proposition~\ref{prop:stability}: what controls the posterior-odds error is the \emph{uniform}
per-block error $\varepsilon$, not the maximum over any single block. A single catastrophic block
affects the DP only through the handful of segmentations that route through it; when alternative
segmentations with comparable total evidence exist, the DP averages the damage. In practice the
boundary F1 metric degrades smoothly as $\varepsilon$ grows across all reachable blocks, and a
single outlier block matters only when it sits on the MAP path. This illustrates why the theory
bounds only odds, not absolute segmentation error, and why the empirical stability of F1 here
should not be misread as a guarantee that bad per-block errors are harmless.

\subsection{Runtime scaling}

Finally, Figure~\ref{fig:runtime} and Table~\ref{tab:runtime_scaling} report runtime behavior for
the Gaussian implementation. As expected, runtime increases with both series length $n$ and the
maximum allowed segment count $k_{\max}$. The growth is smooth and predictable over the archived
range $n\in\{50,100,200,400\}$ and $k_{\max}\in\{10,20\}$, which is consistent with the
$\mathcal{O}(k_{\max}n^2)$ worst-case complexity of the exact DP while also reflecting practical
constant-factor effects from the implementation.

\begin{figure}[H]
    \centering
    \includegraphics[width=0.82\textwidth]{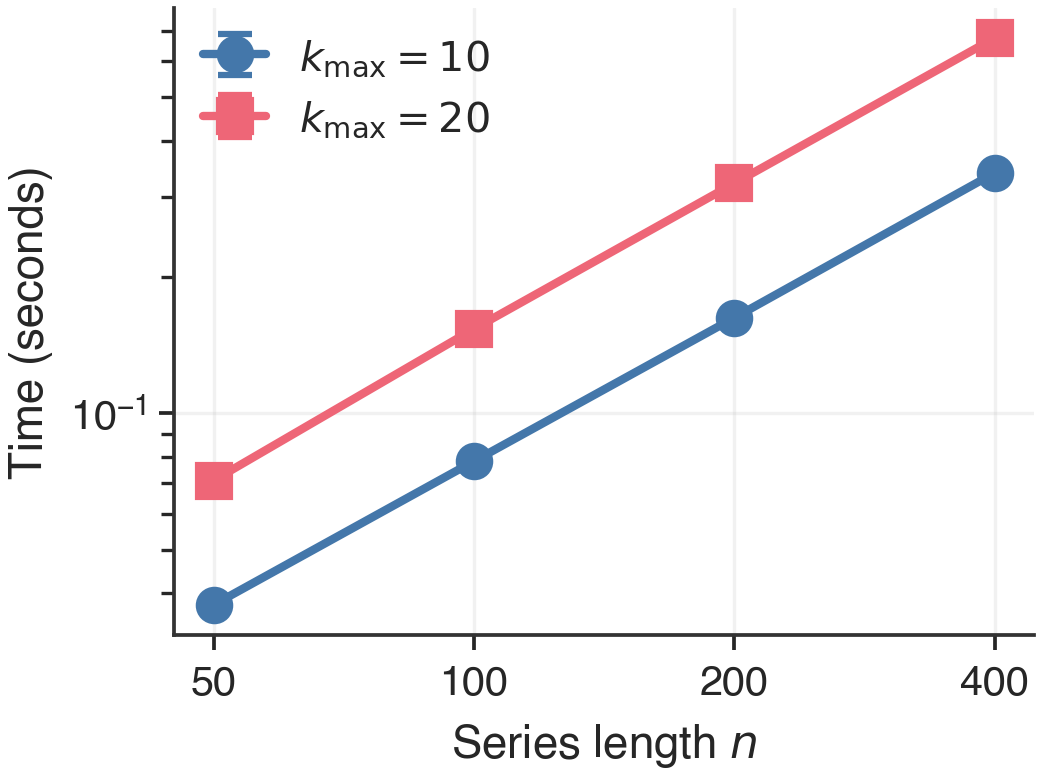}
    \caption{Runtime scaling for the Gaussian implementation as a function of series length $n$ for two values of $k_{\max}$. Points are empirical means over repeated runs; error bars denote one standard deviation.}
    \label{fig:runtime}
\end{figure}

\begin{table}[H]
    \centering
    \begin{tabular}{rrrr}\toprule
n & $k_{\max}$ & mean (s) & std (s)\\\midrule
50 & 20 & 0.0718 & 0.0010\\
100 & 20 & 0.1530 & 0.0024\\
200 & 20 & 0.3200 & 0.0049\\
400 & 20 & 0.6681 & 0.0029\\
\bottomrule\end{tabular}

    \caption{Numerical runtime values corresponding to Figure~\ref{fig:runtime}. The bundled table reports the $k_{\max}=20$ configuration.}
    \label{tab:runtime_scaling}
\end{table}

\subsection{Real-data illustrations}
\label{sec:realdata}

We complement the synthetic suite with four minimal real-data illustrations, each chosen to
exercise a different piece of the BayesBreak machinery: a Gaussian-block geological benchmark
(well-log); a heteroscedastic multi-subject Gaussian genomic benchmark (array-CGH); a
volatility-regime financial benchmark (S\&P 500 squared daily log-returns); and a Beta-response
methylation benchmark on a public CpG atlas. Exact data-source URLs, download commands, and
preprocessing recipes are in Appendix~\ref{app:real-data}; the intent is that a reader with
access to \texttt{R}, \texttt{Bioconductor}, Python with \texttt{yfinance} and \texttt{pandas}, and
standard bioinformatics tooling can regenerate the tables from raw sources. Full results on these
data sets are in progress; the tables and figures below are placeholders reporting what will be
filled in.

\subsubsection{Well-log geology (Gaussian)}
\label{sec:realdata-welllog}

The well-log dataset consists of 4050 nuclear-magnetic-resonance measurements recorded down a
borehole, originally analyzed in a Bayesian segmentation context by Fearnhead and Clifford.
Observations are continuous real-valued responses with multiple stratigraphic changepoints; the
standard block model is Gaussian with a single known variance estimated from a
within-layer subset. This example exercises the §\ref{sec:families} Gaussian-block derivation and
index-uniform partition prior.

\begin{figure}[H]
    \centering
    \includegraphics[width=\textwidth]{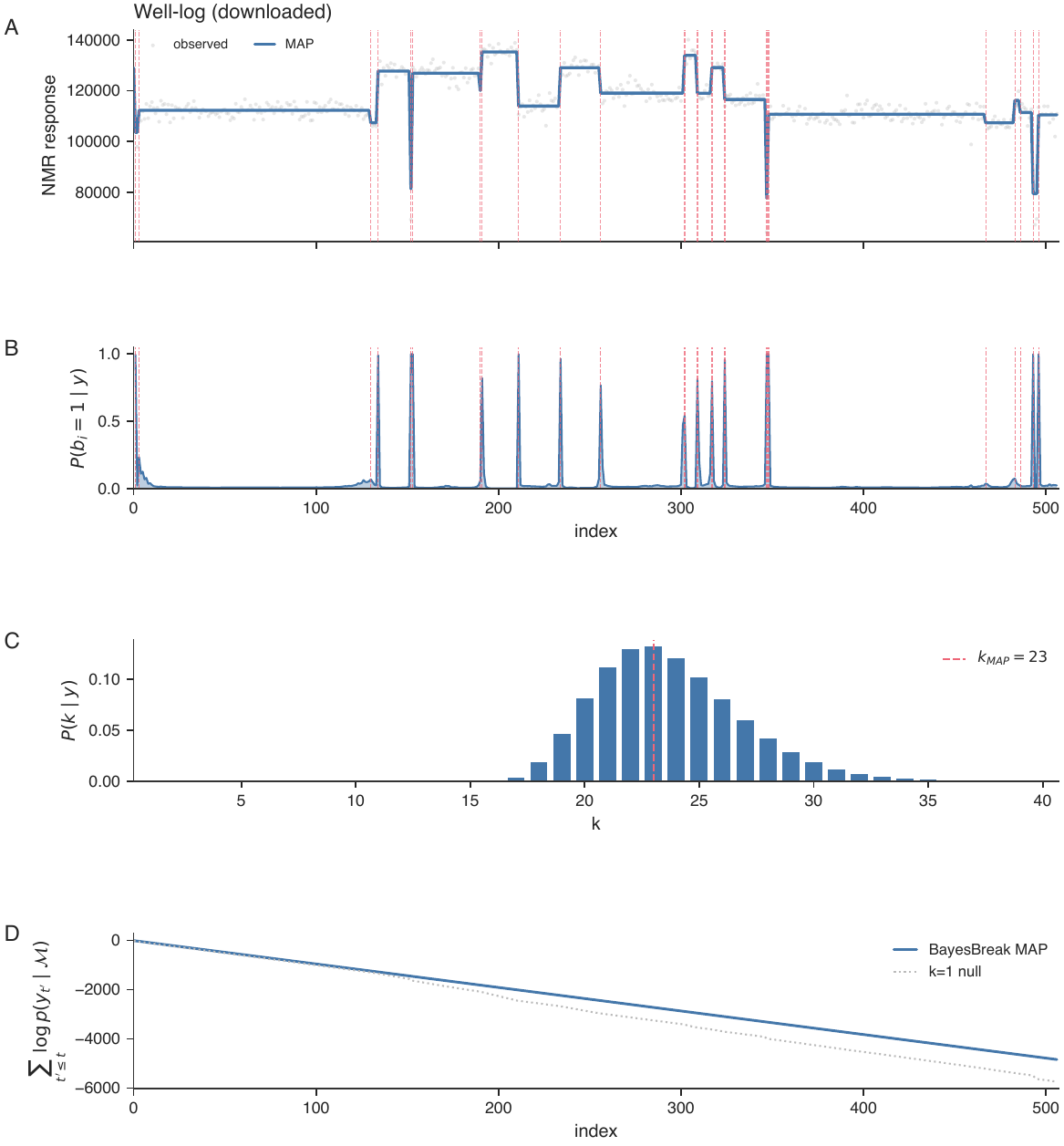}
    \caption{Well-log segmentation (placeholder). Top: standardized NMR measurements with exported joint MAP segmentation. Bottom: marginal boundary posterior probabilities. The plot is produced by the pipeline in Appendix~\ref{app:real-data-welllog}; the version archived here is a placeholder pending the finalized run.}
    \label{fig:welllog}
\end{figure}

\begin{table}[H]
    \centering
    \begin{tabular}{@{}lcccc@{}}
    \toprule
    Configuration & $\widehat k$ & MAP evidence & ECE (boundary) & Runtime (s) \\
    \midrule
    Index-uniform prior, $g\equiv 1$ & --- & --- & --- & --- \\
    Length-aware prior, $g(\ell)\propto\ell$ & --- & --- & --- & --- \\
    \bottomrule
    \end{tabular}
    \caption{Well-log placeholder results. Entries will be filled by the pipeline in Appendix~\ref{app:real-data-welllog}.}
    \label{tab:real_welllog}
\end{table}

\subsubsection{Array-CGH copy-number profiles (multi-subject heteroscedastic Gaussian)}
\label{sec:realdata-cgh}

We use the Coriell array-CGH cell-line panel from the Bioconductor \texttt{DNAcopy} package,
originally published by Snijders et al. (2001). Each sample is a log-2 copy-number-ratio profile
along the genome, with per-probe noise that varies across samples and chromosomes. The
appropriate BayesBreak specification is the heteroscedastic Gaussian block model
$y_t\mid\mu\sim\mathcal{N}(\mu,\sigma^2/w_t)$ with per-probe weights $w_t$ derived from
empirical within-segment variance; running this specification jointly across subjects exercises
the §\ref{sec:replicates} multi-subject pooling machinery.

\begin{figure}[H]
    \centering
    \includegraphics[width=\textwidth]{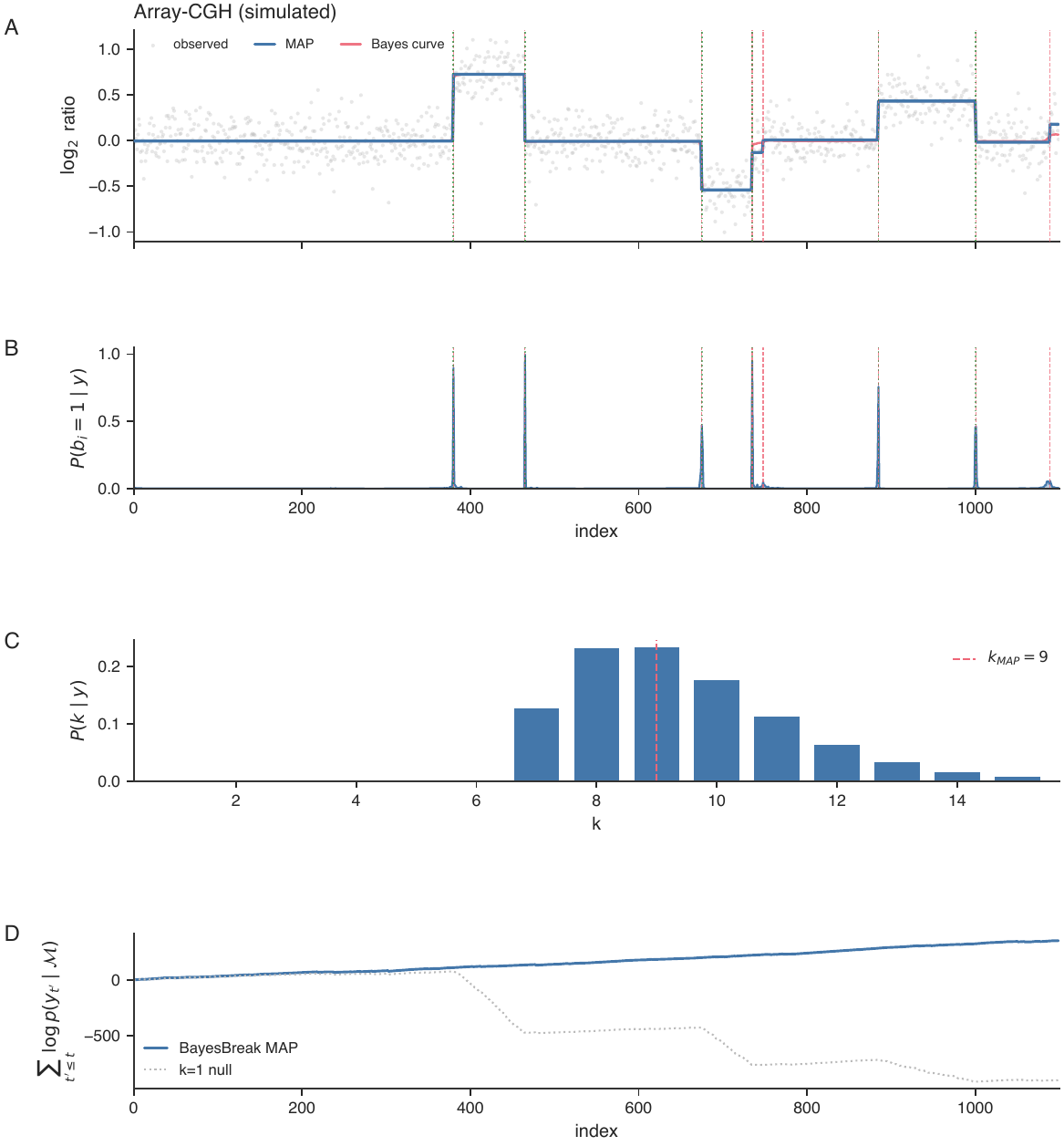}
    \caption{Array-CGH multi-subject pooling (placeholder). Top: four representative Coriell profiles with per-subject exported segmentations. Bottom: pooled marginal boundary posterior. Figure will be regenerated by the pipeline in Appendix~\ref{app:real-data-cgh}.}
    \label{fig:cgh}
\end{figure}

\begin{table}[H]
    \centering
    \begin{tabular}{@{}lcccc@{}}
    \toprule
    Strategy & Boundary F1 & Boundary MAE & Pooled log-evidence & Runtime (s) \\
    \midrule
    Independent per-subject (no pooling)    & --- & --- & --- & --- \\
    Shared boundaries, subject-specific $\mu$ & --- & --- & --- & --- \\
    \bottomrule
    \end{tabular}
    \caption{Array-CGH placeholder results comparing independent per-subject segmentation to pooled shared-boundary inference. Ground-truth boundaries from Snijders et al. (2001) annotations.}
    \label{tab:real_cgh}
\end{table}

\subsubsection{Equity-return volatility regimes (S\&P 500)}
\label{sec:realdata-spx}

A finance-flavored benchmark that is non-biological. Using daily closing prices of the S\&P 500
index, we form log-returns $r_t=\log P_t-\log P_{t-1}$ and square them to obtain a
nonnegative series $y_t:=r_t^2$ whose regime-level scale changes at volatility changepoints.
We fit a Gaussian-with-known-variance block model on $\log y_t$ (Box-Cox-like transform) as a
rough first pass, and a Poisson-with-exposure block model on bucketed daily counts of
$\{|r_t|>\tau\}$ as a secondary robust pass. Both calls reuse §\ref{sec:families} derivations
without modification.

\begin{figure}[H]
    \centering
    \includegraphics[width=\textwidth]{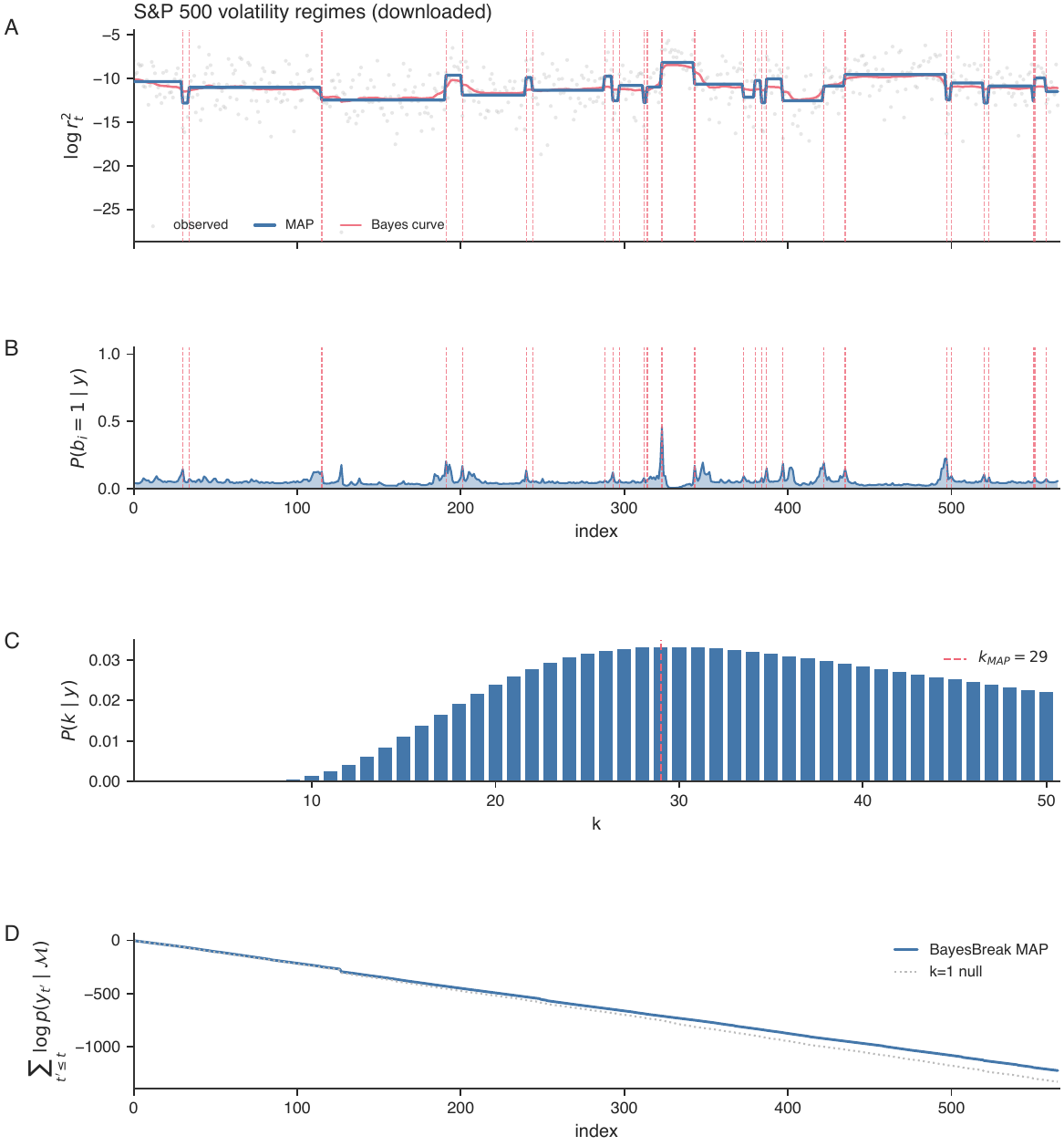}
    \caption{S\&P 500 volatility regimes (placeholder). Top: log-squared daily log-returns with exported joint MAP segmentation, covering roughly 2018--2022. Bottom: marginal boundary posteriors; vertical dashed lines annotate major macro events (March 2020 onset of the COVID-19 market shock, Feb 2022). Figure will be regenerated by the pipeline in Appendix~\ref{app:real-data-spx}.}
    \label{fig:spx}
\end{figure}

\begin{table}[H]
    \centering
    \begin{tabular}{@{}lccc@{}}
    \toprule
    Block model & $\widehat k$ & Log evidence & Visual alignment with known macro events \\
    \midrule
    Gaussian on $\log r_t^2$        & --- & --- & --- \\
    Poisson on threshold crossings  & --- & --- & --- \\
    \bottomrule
    \end{tabular}
    \caption{S\&P 500 volatility-regime placeholder results. The ``visual alignment'' column is a qualitative description in lieu of ground-truth boundaries.}
    \label{tab:real_spx}
\end{table}

\subsubsection{CpG-atlas DNA methylation (Beta response with per-CpG precision)}
\label{sec:realdata-methylation}

We use a publicly available cell-type CpG atlas (Loyfer et al., 2023) that reports per-CpG
methylation $\beta$-values $y_t\in(0,1)$ together with read-coverage information that can be
used as per-CpG precision $\phi_t$. Along the genome coordinate, blocks of nearly-constant
methylation alternate with sharp transitions between hypomethylated and hypermethylated regions.
This exercises the §\ref{sec:families} Beta-response quadrature block: a one-dimensional
Gauss--Legendre rule on $\mu\in(0,1)$ suffices, and per-CpG coverage enters through $\phi_t$.

\begin{figure}[H]
    \centering
    \includegraphics[width=\textwidth]{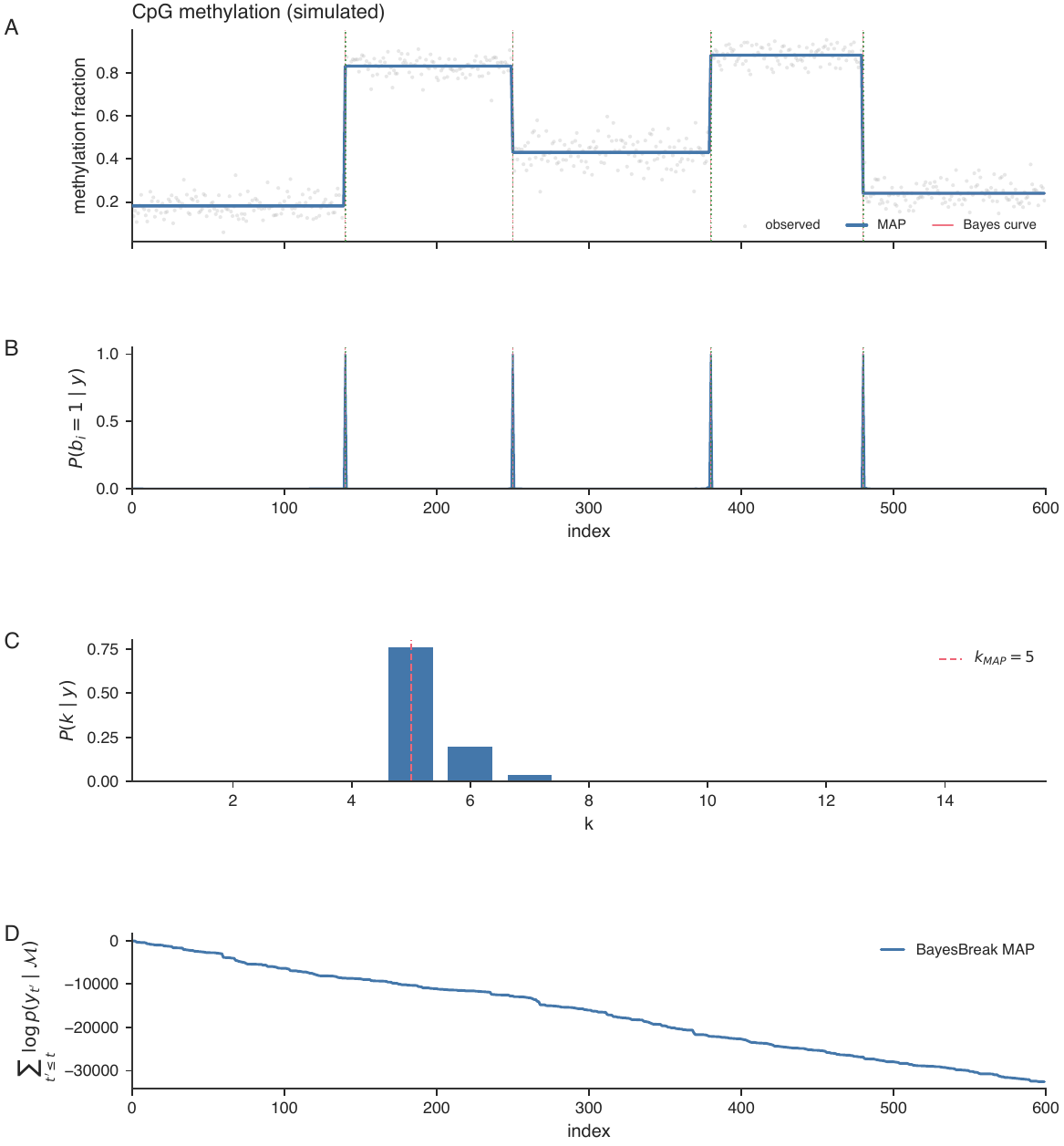}
    \caption{DNA-methylation segmentation on a CpG-atlas region (placeholder). Top: $\beta$-values along a selected chromosome-arm region, colored by local precision $\phi_t$ (log-scale). Bottom: marginal boundary posterior on the same region. Figure will be regenerated by the pipeline in Appendix~\ref{app:real-data-methylation}.}
    \label{fig:methylation}
\end{figure}

\begin{table}[H]
    \centering
    \begin{tabular}{@{}lccc@{}}
    \toprule
    Region / cell type & $\widehat k$ & Held-out log-predictive & Boundary F1 vs.\ atlas \\
    \midrule
    Region A, cell type 1 & --- & --- & --- \\
    Region A, cell type 2 & --- & --- & --- \\
    \bottomrule
    \end{tabular}
    \caption{CpG-atlas methylation placeholder results. Held-out log-predictive is computed on a withheld subset of CpGs within the same region; boundary F1 is computed against the atlas-provided block annotations.}
    \label{tab:real_methylation}
\end{table}

\paragraph{What is still needed before external submission.}
The empirical archive plus the four real-data illustrations above are sufficient to validate the
core claims of the method. A submission-ready experimental section should still add
at least three components: (i) quantitative head-to-head comparison against strong frequentist
baselines (PELT, wild binary segmentation, SMUCE) and strong Bayesian baselines (RJMCMC following
\citet{green1995rjMCMC}, Fearnhead's exact DP \citep{fearnhead2006exact}); (ii) ablations over the
partition prior $p(k)$ and the length factor $g$ on the real-data benchmarks above; and
(iii) a scaling study that pushes $n$ into the tens of thousands, exploring the memory-time
trade-offs documented in Remark~\ref{rem:mem-time}.

\section{Conclusion}
\label{sec:conclusion}

BayesBreak is organized around a simple but powerful decomposition: local block integrals capture
all model-specific work, while a global dynamic program handles posterior inference over contiguous
partitions. That decomposition yields exact offline Bayesian segmentation whenever block evidences
are available in closed form, and it remains useful when those block evidences are replaced by
controlled deterministic approximations. Within that framework we obtained exact posterior
quantities for segment counts, boundary marginals, and Bayes regression curves; clarified the
necessary distinction between those posterior summaries and an exported joint MAP segmentation;
showed how irregular designs are most naturally handled through design-aware partition priors;
and preserved exactness for pooled shared-boundary analyses across replicates.

For unknown groups, we advocated a deliberately precise latent-template mixture rather than an
overstated ``fully Bayesian'' grouped segmentation claim. In that formulation the EM updates are
monotone in the generalized-EM sense of \citet{dempster1977em}: responsibilities are exact for
the current templates, each template update is an exact responsibility-weighted max-sum
segmentation problem, and the observed-data log-likelihood does not decrease across iterations
(Theorem~\ref{thm:em-monotone}). We make no claim of global optimality across EM restarts. For
non-conjugate blocks, we also made clear what can and cannot be guaranteed. The DP layer
survives unchanged, but approximation theory is most naturally stated in terms of block-evidence
accuracy and induced posterior-odds stability (Proposition~\ref{prop:stability},
Corollary~\ref{cor:ranking}), not in terms of universal end-to-end segmentation guarantees.

Empirically, the archived project supports a coherent synthetic validation suite showing posterior
boundary recovery, family portability, calibration behavior, latent-group separation, local
approximation trade-offs, and runtime scaling (Section~\ref{sec:experiments}). That suite is
augmented by four minimal real-data illustrations (well-log geology, array-CGH copy number, S\&P
500 volatility regimes, and CpG-atlas methylation; see Sections~\ref{sec:realdata-welllog}--%
\ref{sec:realdata-methylation} and reproduction pipelines in Appendix~\ref{app:real-data}), each
chosen to exercise a different part of the framework. The remaining step toward a fully
submission-ready empirical story is broader external benchmarking against strong Bayesian and
frequentist baselines and ablations of the partition prior $p(k)$ and length factor $g$, which
we have itemized at the end of Section~\ref{sec:experiments}. The value of the present paper is
that those additions can now be made without changing the inferential core: once a new block
model is specified, the rest of the BayesBreak machinery is already in place. An annotated
literature review covering related work in Bayesian changepoint detection, product partition
models, frequentist segmentation, and multi-subject analyses is provided in
Appendix~\ref{app:annotated-lit}.


\newpage
\bibliography{reference/cite}

@misc{adamsmackay2007bocpd,
  title        = {Bayesian Online Changepoint Detection},
  author       = {Adams, Ryan Prescott and MacKay, David J. C.},
  year         = {2007},
  note         = {arXiv:0710.3742}
}

@article{auger1989segment,
  title   = {Algorithms for the Optimal Partitioning of Data on an Interval},
  author  = {Auger, I. E. and Lawrence, C. E.},
  year    = {1989},
  journal = {Journal of Statistical Computation and Simulation},
  volume  = {31},
  number  = {1-2},
  pages   = {1--23}
}

@article{barry1992ppm,
  title   = {Product Partition Models for Change Point Problems},
  author  = {Barry, Daniel and Hartigan, J. A.},
  year    = {1992},
  journal = {Annals of Statistics},
  volume  = {20},
  number  = {1},
  pages   = {260--279}
}

@article{barry1993bayesCP,
  title   = {A {B}ayesian Analysis for Change Point Problems},
  author  = {Barry, Daniel and Hartigan, J. A.},
  year    = {1993},
  journal = {Journal of the American Statistical Association},
  volume  = {88},
  number  = {421},
  pages   = {309--319}
}

@book{bernardo1994bayesian,
  title     = {Bayesian Theory},
  author    = {Bernardo, Jos{\'e} M. and Smith, Adrian F. M.},
  year      = {1994},
  publisher = {Wiley}
}

@article{bleakley2011groupfused,
  title   = {The Group Fused Lasso for Multiple Changepoint Detection},
  author  = {Bleakley, Kevin and Vert, Jean-Philippe},
  year    = {2011},
  journal = {Annals of Applied Statistics},
  volume  = {5},
  number  = {2A},
  pages   = {734--759}
}

@article{carlin1992hierBayesCP,
  title   = {Hierarchical {B}ayesian Analysis of Changepoint Problems},
  author  = {Carlin, Bradley P. and Gelfand, Alan E. and Smith, Adrian F. M.},
  year    = {1992},
  journal = {Journal of the Royal Statistical Society: Series C (Applied Statistics)},
  volume  = {41},
  number  = {2},
  pages   = {389--405}
}

@article{denison1998bayesian,
  title   = {A {B}ayesian Approach to Changepoint Problems},
  author  = {Denison, David G. T. and Mallick, Bani K. and Smith, Adrian F. M.},
  year    = {1998},
  journal = {Journal of the American Statistical Association},
  volume  = {93},
  number  = {441},
  pages   = {585--597}
}

@article{diaconis1979conjugate,
  title   = {Conjugate Priors for Exponential Families},
  author  = {Diaconis, Persi and Ylvisaker, Donald},
  year    = {1979},
  journal = {Annals of Statistics},
  volume  = {7},
  number  = {2},
  pages   = {269--281}
}

@article{fearnhead2006exact,
  title   = {Exact and Efficient {B}ayesian Inference for Multiple Changepoint Problems},
  author  = {Fearnhead, Paul},
  year    = {2006},
  journal = {Statistics and Computing},
  volume  = {16},
  number  = {2},
  pages   = {203--213}
}

@article{frick2014smuce,
  title   = {Multiscale Change-Point Inference},
  author  = {Frick, Klaus and Munk, Axel and Sieling, Hannes},
  year    = {2014},
  journal = {Annals of Statistics},
  volume  = {42},
  number  = {4},
  pages   = {1484--1513}
}

@article{fryzlewicz2014wbs,
  title   = {Wild Binary Segmentation for Multiple Change-Point Detection},
  author  = {Fryzlewicz, Piotr},
  year    = {2014},
  journal = {Annals of Statistics},
  volume  = {42},
  number  = {6},
  pages   = {2243--2281}
}

@article{green1995rjMCMC,
  title   = {Reversible Jump Markov Chain Monte Carlo Computation and {B}ayesian Model Determination},
  author  = {Green, Peter J.},
  year    = {1995},
  journal = {Biometrika},
  volume  = {82},
  number  = {4},
  pages   = {711--732}
}

@article{hutter2006bpcr,
  title   = {Bayesian Piecewise Constant Regression},
  author  = {Hutter, Marcus},
  year    = {2007},
  journal = {Bayesian Analysis},
  volume  = {2},
  number  = {4},
  pages   = {635--664},
  doi     = {10.1214/07-BA225}
}

@article{jaakkola2000logisticvb,
  title   = {Bayesian Parameter Estimation via Variational Methods},
  author  = {Jaakkola, Tommi S. and Jordan, Michael I.},
  year    = {2000},
  journal = {Statistics and Computing},
  volume  = {10},
  number  = {1},
  pages   = {25--37}
}

@article{jackson2005optpart,
  title   = {An Algorithm for Optimal Partitioning of Data on an Interval},
  author  = {Jackson, B. and Scargle, J. D. and Barnes, D. and Arabhi, S. and Alt, A. and Gioumousis, P. and Gwin, E. and Sangtrakulcharoen, P. and Tan, L. and Tsai, T. T.},
  year    = {2005},
  journal = {IEEE Signal Processing Letters},
  volume  = {12},
  number  = {2},
  pages   = {105--108}
}

@article{killick2012pelt,
  title   = {Optimal Detection of Changepoints With a Linear Computational Cost},
  author  = {Killick, Rebecca and Fearnhead, Paul and Eckley, Idris A.},
  year    = {2012},
  journal = {Journal of the American Statistical Association},
  volume  = {107},
  number  = {500},
  pages   = {1590--1598}
}

@inproceedings{minka2001ep,
  title     = {Expectation Propagation for Approximate Bayesian Inference},
  author    = {Minka, Thomas P.},
  year      = {2001},
  booktitle = {Proceedings of the 17th Conference on Uncertainty in Artificial Intelligence (UAI)}
}

@article{olshen2004cbs,
  title   = {Circular Binary Segmentation for the Analysis of Array-Based {DNA} Copy Number Data},
  author  = {Olshen, Adam B. and Venkatraman, E. S. and Lucito, Robert and Wigler, Michael},
  year    = {2004},
  journal = {Biostatistics},
  volume  = {5},
  number  = {4},
  pages   = {557--572}
}

@article{polson2013pg,
  title   = {Bayesian Inference for Logistic Models Using {P}{\'o}lya--Gamma Latent Variables},
  author  = {Polson, Nicholas G. and Scott, James G. and Windle, Jesse},
  year    = {2013},
  journal = {Journal of the American Statistical Association},
  volume  = {108},
  number  = {504},
  pages   = {1339--1349}
}

@inproceedings{punskaya2002bayesian,
  title     = {Bayesian Curve Fitting Using Markov Chain Monte Carlo with Multiple Change-Points},
  author    = {Punskaya, Ekaterina and Andrieu, Christophe and Doucet, Arnaud and Fitzgerald, William J.},
  year      = {2002},
  booktitle = {IEEE International Conference on Acoustics, Speech, and Signal Processing (ICASSP)},
  note      = {Also circulated as a technical report/manuscript}
}

@article{rigaill2010pruned,
  title   = {Pruned Dynamic Programming Algorithm to Recover the Best Segmentations with 1D Changepoint Models},
  author  = {Rigaill, Guillem},
  year    = {2010},
  journal = {Journal de la Soci{\'e}t{\'e} Fran{\c c}aise de Statistique},
  volume  = {151},
  number  = {4},
  pages   = {88--114}
}

@article{rudin1992rof,
  title   = {Nonlinear Total Variation Based Noise Removal Algorithms},
  author  = {Rudin, Leonid I. and Osher, Stanley and Fatemi, Emad},
  year    = {1992},
  journal = {Physica D: Nonlinear Phenomena},
  volume  = {60},
  number  = {1-4},
  pages   = {259--268}
}

@article{tibshirani2005fused,
  title   = {Sparsity and Smoothness via the Fused Lasso},
  author  = {Tibshirani, Robert and Saunders, Michael and Rosset, Saharon and Zhu, Ji and Knight, Keith},
  year    = {2005},
  journal = {Journal of the Royal Statistical Society: Series B},
  volume  = {67},
  number  = {1},
  pages   = {91--108}
}

@article{yao1988biometrika,
  title   = {Estimating the Number of Change-Points via {S}chwarz' Criterion},
  author  = {Yao, Yao-Chi},
  year    = {1988},
  journal = {Statistics \& Probability Letters},
  volume  = {6},
  number  = {3},
  pages   = {181--189}
}

@article{truong2020review,
  title={Selective review of offline change point detection methods},
  author={Truong, Charles and Oudre, Laurent and Vayatis, Nicolas},
  journal={Signal Processing},
  volume={167},
  pages={107299},
  year={2020},
  doi={10.1016/j.sigpro.2019.107299},
  url={https://arxiv.org/abs/2002.00731}
}

@article{scargle2013bayesianblocks,
  title={Studies in Astronomical Time Series Analysis. VI. Bayesian Block Representations},
  author={Scargle, Jeffrey D. and Norris, Jay P. and Jackson, Brad and Chiang, Jed},
  journal={The Astrophysical Journal},
  volume={764},
  number={2},
  pages={167},
  year={2013},
  doi={10.1088/0004-637X/764/2/167},
  url={https://arxiv.org/abs/1207.5578}
}

@article{fearnhead2011dependence,
  title   = {Efficient Bayesian Analysis of Multiple Changepoint Models with Dependence across Segments},
  author  = {Fearnhead, Paul and Liu, Zhen},
  year    = {2011},
  journal = {Statistics and Computing},
  volume  = {21},
  number  = {2},
  pages   = {217--229},
  doi     = {10.1007/s11222-009-9163-6}
}

@article{ruggieri2014bcpvs,
  title   = {The Bayesian Change Point and Variable Selection Algorithm: Application to the $\delta^{18}$O Proxy Record of the Plio-Pleistocene},
  author  = {Ruggieri, Eric and Lawrence, Charles E.},
  year    = {2014},
  journal = {Journal of Computational and Graphical Statistics},
  volume  = {23},
  number  = {1},
  pages   = {87--110},
  doi     = {10.1080/10618600.2012.707852}
}

@article{fan2017basic,
  title   = {Empirical Bayesian Analysis of Simultaneous Changepoints in Multiple Data Sequences},
  author  = {Fan, Zhou and Mackey, Lester},
  year    = {2017},
  journal = {The Annals of Applied Statistics},
  volume  = {11},
  number  = {4},
  pages   = {2200--2221},
  doi     = {10.1214/17-AOAS1075}
}

@article{quinlan2024jrpm,
  title   = {Joint Random Partition Models for Multivariate Change Point Analysis},
  author  = {Quinlan, Jos{\'e} J. and Page, Garritt L. and Castro, Luis M.},
  year    = {2024},
  journal = {Bayesian Analysis},
  volume  = {19},
  number  = {1},
  pages   = {21--48},
  doi     = {10.1214/22-BA1344}
}

@article{muller2011ppmx,
  title   = {A Product Partition Model With Regression on Covariates},
  author  = {M{\"u}ller, Peter and Quintana, Fernando A. and Rosner, Gary L.},
  year    = {2011},
  journal = {Journal of Computational and Graphical Statistics},
  volume  = {20},
  number  = {1},
  pages   = {260--278},
  doi     = {10.1198/jcgs.2011.09066}
}

@article{park2010gppm,
  title   = {Bayesian Generalized Product Partition Model},
  author  = {Park, Ju-Hyun and Dunson, David B.},
  year    = {2010},
  journal = {Statistica Sinica},
  volume  = {20},
  number  = {3},
  pages   = {1203--1226}
}

@article{dempster1977em,
  author  = {Dempster, A. P. and Laird, N. M. and Rubin, D. B.},
  title   = {Maximum likelihood from incomplete data via the {EM} algorithm},
  journal = {Journal of the Royal Statistical Society, Series B},
  volume  = {39},
  number  = {1},
  pages   = {1--38},
  year    = {1977}
}

@article{blanchard2021lse,
  author  = {Blanchard, Pierre and Higham, Desmond J. and Higham, Nicholas J.},
  title   = {Accurately computing the log-sum-exp and softmax functions},
  journal = {IMA Journal of Numerical Analysis},
  volume  = {41},
  number  = {4},
  pages   = {2311--2330},
  year    = {2021}
}
\bibliographystyle{plainnat}

\newpage
\appendix
\section{Supplementary derivations and theoretical extensions}
\label{sec:appendix}

This appendix collects derivations and complementary results that support the main development
without altering the inferential scope.
The emphasis is on (i) making key steps in the conjugate-exponential-family integration fully
explicit, (ii) clarifying the probabilistic structure of factorizing partition priors, and (iii)
recording a few boundary cases and identifiability remarks that are useful in practice.

\subsection{Conjugate exponential families: detailed block evidence derivation}
\label{app:conjugate}

Section~\ref{sec:ef} states the closed form of the integrated segment evidence
$A_{ij}^{(0)}=p(y_{(i,j]}\mid \text{hyper})$ for exponential-family likelihoods with a conjugate prior,
and gives expressions for posterior moments needed by the Bayes regression curve.
For completeness, we now provide a step-by-step derivation.

\paragraph{Setup.}
Fix a segment (block) $(i,j]$ and write its aggregated sufficient statistics
$T_{ij}=\sum_{t=i+1}^j w_t T(y_t)$ and aggregated exposure $W_{ij}=\sum_{t=i+1}^j w_t$.
Let $\eta\in\mathbb{R}^d$ denote the natural parameter and $A(\eta)$ its cumulant function.
Assume the conditional likelihood factors as
\begin{equation}
 p\big(y_{(i,j]}\mid \eta\big)
 \;=\; \prod_{t=i+1}^j \exp\big\{ w_t\,\langle \eta, T(y_t)\rangle - w_t A(\eta)\big\}\,h(y_t)^{w_t}
 \;=\; \exp\big\{\langle \eta, T_{ij}\rangle - W_{ij}A(\eta)\big\}\,H_{ij},
 \label{eq:app-ef-lik}
\end{equation}
where $H_{ij}:=\prod_{t=i+1}^j h(y_t)^{w_t}$ collects the base-measure terms.
The standard Diaconis--Ylvisaker conjugate prior takes the form
\begin{equation}
 p(\eta\mid \tau,\nu) \;=\; \exp\big\{\langle \eta,\tau\rangle - \nu A(\eta) - \log Z(\tau,\nu)\big\},
 \label{eq:app-ef-prior}
\end{equation}
where $(\tau,\nu)$ are hyperparameters and $Z(\tau,\nu)$ is the normalizer. (This appendix uses
the symbols $(\tau,\nu)$ for the conjugate-prior hyperparameters; the main text uses
$(\alpha,\beta)$ for the same objects. The translation is $\tau\leftrightarrow\alpha$ and
$\nu\leftrightarrow\beta$; we use both notations in their natural habitats and trust the reader
to identify them.)

\paragraph{Evidence integral.}
Multiplying \eqref{eq:app-ef-lik} and \eqref{eq:app-ef-prior} yields
\begin{align}
 p\big(y_{(i,j]}\mid \eta\big)\,p(\eta\mid\tau,\nu)
 &= H_{ij}\,\exp\Big\{\langle \eta, \tau+T_{ij}\rangle - (\nu+W_{ij})A(\eta) - \log Z(\tau,\nu)\Big\}.
\end{align}
Integrating over $\eta$ therefore gives
\begin{align}
 A^{(0)}_{ij}
 &= \int p\big(y_{(i,j]}\mid \eta\big)\,p(\eta\mid\tau,\nu)\,d\eta \\
 &= H_{ij}\,e^{-\log Z(\tau,\nu)}\int \exp\big\{\langle \eta, \tau+T_{ij}\rangle - (\nu+W_{ij})A(\eta)\big\}\,d\eta \\
 &= H_{ij}\,\frac{Z\big(\tau+T_{ij},\nu+W_{ij}\big)}{Z(\tau,\nu)}.
 \label{eq:app-evidence}
\end{align}
This is the same expression as in Theorem~\ref{thm:ef-integral}, written at the level of a fixed block.

\paragraph{Posterior distribution and moments.}
Bayes' rule shows that the posterior remains in the conjugate family:
\begin{equation}
 p(\eta\mid y_{(i,j]},\tau,\nu) \;\propto\; \exp\big\{\langle \eta,\tau+T_{ij}\rangle - (\nu+W_{ij})A(\eta)\big\}
 \;=\; p\big(\eta\mid \tau+T_{ij},\nu+W_{ij}\big).
 \label{eq:app-post}
\end{equation}
All posterior moments required by BayesBreak can be obtained by differentiating
$\log Z(\tau,\nu)$.
In particular, when $A(\eta)$ is the cumulant generating function,
$\nabla_{\tau}\log Z(\tau,\nu)$ yields the prior mean of the expectation parameter,
and higher derivatives yield higher cumulants.
The block-specific moment numerators $A^{(1)}_{ij},A^{(2)}_{ij}$ used in
Section~\ref{sec:ef} are obtained by multiplying these posterior moments by the evidence $A^{(0)}_{ij}$.

\subsection{Factorizing partition priors as renewal processes}
\label{app:renewal}

Assumption~\ref{ass:factorizing-prior} states that, conditional on the segment count $k$,
the boundary vector $t=(t_0,\dots,t_k)$ has a density proportional to a product of segment-length factors.
This class coincides with renewal-process models on the discrete index set, and admits a hidden-Markov
interpretation that helps motivate the DP recursion.

\begin{definition}[Renewal-process segmentation prior]
\label{def:renewal}
Fix $k\ge1$, and assume the design is translation-invariant in the sense that
$g(x_{i+\ell}-x_i)$ depends on $i$ only through $\ell$ (e.g., equispaced design with
$x_i=i$, so $g(\ell)$ is a function of the integer segment length alone). Let $L_1,\dots,L_k$ be
positive integer-valued segment lengths with unnormalized mass function $\varphi(\ell):=g(\ell)$.
A \emph{renewal segmentation} draws lengths $(L_q)_{q=1}^k$ and sets boundaries
$t_q := \sum_{r=1}^q L_r$ with the constraint $t_k=n$.
\end{definition}

\begin{proposition}[Equivalence of renewal and factorized boundary priors]
\label{prop:renewal-equivalence}
Under Definition~\ref{def:renewal} and the translation-invariance hypothesis,
the induced distribution on $t=(t_0,\dots,t_k)$ satisfies
$p(t\mid k)\propto \prod_{q=1}^k g(t_q-t_{q-1})$. Conversely, any prior of the form
\eqref{eq:lengthprior} \emph{whose length factor depends only on the segment length} can be
realized as a renewal segmentation prior with $\varphi(\ell)=g(\ell)$.
\end{proposition}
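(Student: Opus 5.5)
The plan is to establish the two directions separately, working throughout on the discrete index set. By the translation-invariance hypothesis, $g(x_{t_q}-x_{t_{q-1}})$ may be written as a function $g(t_q-t_{q-1})$ of the integer length alone.

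\emph{Forward direction.} Interpret the renewal segmentation as the joint law of lengths $(L_1,\dots,L_k)$ on $\mathbb{N}_{\ge1}^k$ with unnormalized mass $\prod_{q=1}^k\varphi(L_q)$, conditioned on the event $\{\sum_q L_q=n\}$. The argument then proceeds in three steps.
\begin{enumerate}
\item The map $(L_1,\dots,L_k)\mapsto t=(0,L_1,L_1+L_2,\dots,\sum_q L_q)$ is a bijection between positive integer $k$-tuples summing to $n$ and $\mathcal{T}_k$. Its inverse is $L_q=t_q-t_{q-1}$.
\item Pushing the conditioned mass forward through this bijection gives $p(t\mid k)=\prod_q\varphi(t_q-t_{q-1})/\sum_{t'\in\mathcal{T}_k}\prod_q\varphi(t'_q-t'_{q-1})$.
\item The denominator is exactly $C_k$ from \eqref{eq:Ck-setup}, so $p(t\mid k)\propto\prod_q g(t_q-t_{q-1})$.
\end{enumerate}
Conditioning is well defined only if $C_k\in(0,\infty)$. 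This is finite automatically, since $\mathcal{T}_k$ is a finite set, and it is positive by Assumption~\ref{ass:factorizing-prior}. I will note that normalizability of $\varphi$ on all of $\mathbb{N}$ is not needed, since only finitely many tuples carry positive conditional mass.

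\emph{Converse.} Start from a prior of the form \eqref{eq:lengthprior} whose factor depends only on segment length. Define $\varphi(\ell):=g(\ell)$ for $\ell\in\{1,\dots,n\}$, and extend it arbitrarily (say by zero) for $\ell>n$. Such values are irrelevant under the constraint $t_k=n$. Running the forward computation backward through the same bijection shows that the renewal construction reproduces the given $p(t\mid k)$ with the same normalizer $C_k$.

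The main obstacle is not algebraic but one of interpretation. The ``renewal'' process here is a conditioned, possibly improper length law rather than a genuine probability distribution, and the word ``realized'' must be made precise. I would handle this by stating explicitly that the renewal prior is defined through the conditioned unnormalized mass. If one insists on a proper renewal law, I would first try the exponential tilt $\varphi_\lambda(\ell)=g(\ell)e^{-\lambda\ell}$ with $\lambda$ large enough that $\sum_\ell\varphi_\lambda(\ell)<\infty$. On $\{\sum_q L_q=n\}$ the tilt contributes the constant factor $e^{-\lambda n}$, which cancels after conditioning. Hence any normalized version yields the same $p(t\mid k)$, which closes the gap.
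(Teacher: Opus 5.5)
Your proposal is correct and follows the same route as the paper. The bijection $L_q=t_q-t_{q-1}$ carries the product mass $\prod_q\varphi(L_q)$ to $\prod_q g(t_q-t_{q-1})$, and the converse simply sets $\varphi:=g$; your identification of the conditioning normalizer with $C_k$ and your treatment of improper length laws add rigor the paper leaves implicit. One small simplification: once you extend $\varphi$ by zero beyond $n$ it is finitely supported and can be renormalized directly, since the factor $(\sum_\ell\varphi(\ell))^{-k}$ cancels for fixed $k$, so the exponential tilt is not needed.
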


\begin{proof}
For a fixed boundary vector $t$, the corresponding length sequence is uniquely determined as
$L_q=t_q-t_{q-1}$. Under translation invariance the renewal construction assigns that sequence
mass proportional to $\prod_{q=1}^k \varphi(L_q)=\prod_{q=1}^k g(L_q)$. This matches
$p(t\mid k)\propto\prod_{q=1}^k g(t_q-t_{q-1})$, proving the forward direction. The converse
sets $\varphi(\ell):=g(\ell)$ as the unnormalized length mass.
\end{proof}

\begin{remark}[Hidden-Markov interpretation]
A renewal segmentation can be represented as an HMM whose latent state tracks the ``age'' within the current segment
(or, equivalently, the last boundary location).
The forward recursion in Section~\ref{sec:dp} (cf.\ Theorem~\ref{thm:dp-correctness}, Steps~1--2) coincides with the standard forward algorithm after integrating out
segment parameters into block evidences.
We do not rely on this viewpoint for correctness, but it provides useful intuition for numerical implementations
and for extensions (e.g., adding covariate-dependent hazards).
\end{remark}

\subsection{Posterior boundary and segment-membership marginals}
\label{app:marginals}

Theorem~\ref{thm:dp-correctness} gives closed forms for boundary marginals and for the segment-membership probability
$p\big(t\in(i,j],\,\text{segment }(i,j]\mid y,k\big)$.
We spell out the algebra that leads to these expressions.

\begin{proposition}[Boundary marginals from prefix/suffix evidences]
\label{prop:boundary-marginals}
Fix $k\ge1$ and suppose Assumption~\ref{ass:factorizing-prior} holds.
Let $\widetilde{L}_{r,j}$ and $\widetilde{R}_{r,j}$ be the prefix and suffix evidences defined in \eqref{eq:LR}.
Then for any $t\in\{1,\dots,n-1\}$,
\begin{equation}
 p(t\text{ is a boundary}\mid y,k)
 \;=\; \sum_{r=1}^{k-1}\ \frac{\widetilde{L}_{r,t}\,\widetilde{R}_{k-r,t}}{\widetilde{L}_{k,n}}.
 \label{eq:app-boundary-marginal}
\end{equation}
\end{proposition}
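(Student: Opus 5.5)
The argument is a disjoint-union decomposition. For fixed $k$ and interior index $t$, the event ``$t$ is a boundary'' is the union over $r\in\{1,\dots,k-1\}$ of the events $\{t_r=t\}$. Because every boundary vector is strictly increasing, $0=t_0<t_1<\cdots<t_k=n$, at most one coordinate can equal $t$. So these events are pairwise disjoint, and
\[
p(t\text{ is a boundary}\mid y,k)=\sum_{r=1}^{k-1}P(t_r=t\mid y,k).
\]
Each summand is then replaced by the boundary-marginal formula \eqref{eq:boundary-post}, which gives $\widetilde{L}_{r,t}\widetilde{R}_{k-r,t}/\widetilde{L}_{k,n}$ and hence \eqref{eq:app-boundary-marginal}. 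This is the identity of Remark~\ref{rem:boundary-event}. The appendix should supply the algebra that the main text only sketched.

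To make the argument self-contained, I will write the posterior under Assumption~\ref{ass:factorizing-prior} explicitly:
\[
p(t\mid y,k)\propto \prod_{q=1}^{k}\widetilde{A}^{(0)}_{t_{q-1}t_q}.
\]
The normalizer is $\widetilde{L}_{k,n}$ by Step~1 of Theorem~\ref{thm:dp-correctness}. The factors $C_k$ and $p(k)$ do not depend on $t$, so they cancel. Next, fix $r$ and sum this product over all $t$ with $t_r=t$. The constraint splits the product into the first $r$ factors and the last $k-r$ factors. The first group ranges over $r$-segmentations of $(0,t]$, and the second ranges independently over $(k-r)$-segmentations of $(t,n]$. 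Because the index sets are a Cartesian product, the sum factorizes as
\[
\Bigl(\sum_{0=t_0<\cdots<t_r=t}\prod_{q\le r}\widetilde{A}^{(0)}_{t_{q-1}t_q}\Bigr)\Bigl(\sum_{t=t_r<\cdots<t_k=n}\prod_{q>r}\widetilde{A}^{(0)}_{t_{q-1}t_q}\Bigr).
\]
The two factors are $\widetilde{L}_{r,t}$ and $\widetilde{R}_{k-r,t}$, respectively, by the path interpretation of the recursions \eqref{eq:LR} established in Theorem~\ref{thm:dp-correctness}.

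The step that needs the most care is index bookkeeping rather than anything conceptual. For some pairs $(r,t)$ no admissible configuration exists, namely when $t<r$ or $n-t<k-r$. In those cases the term must contribute zero, and I will verify this from the boundary conventions $L_{0,j}=0$ for $j>0$ and $R_{0,i}=0$ for $i<n$, propagated through \eqref{eq:LR}. With that check the sum can be written over the full range $r=1,\dots,k-1$ without restriction. Finally, the $k=1$ case gives an empty sum and probability $0$, which is consistent because a one-segment partition has no interior boundary.
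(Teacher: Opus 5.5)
Your proposal is correct and follows the paper's own proof: you write the boundary event as a union over the position $r$, split the constrained sum at $t_r=t$ into the prefix factor $\widetilde{L}_{r,t}$ and the suffix factor $\widetilde{R}_{k-r,t}$, cancel $C_k$, and normalize by $\widetilde{L}_{k,n}$. Your disjointness argument from strict monotonicity, and your check that infeasible pairs $(r,t)$ contribute zero through the empty sums in \eqref{eq:LR}, add care that the paper leaves implicit.
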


\begin{proof}
By definition, $t$ is a boundary at position $r$ iff $t_r=t$.
For fixed $r$, summing the joint posterior over all boundary vectors with $t_r=t$ yields
\begin{align}
 p(t_r=t\mid y,k)
 &\propto \sum_{\substack{0<t_1<\cdots<t_{k-1}<n\\ t_r=t}}
 \prod_{q=1}^k \widetilde{A}^{(0)}_{t_{q-1},t_q}.
\end{align}
The constraint $t_r=t$ splits the product into a prefix part (segments $1{:}r$ ending at $t$) and a suffix part (segments $r+1{:}k$ starting at $t$).
Summing over all admissible prefixes gives $\widetilde{L}_{r,t}$ by definition of the DP; summing over all admissible suffixes gives $\widetilde{R}_{k-r,t}$.
Note that the partition-prior normalizer $C_k$ multiplies the numerator for every boundary vector
under consideration (because $p(t\mid k)\propto C_k^{-1}\prod_q g(\cdot)$ for each $t$), and
multiplies the denominator through the same factor on the total evidence $p(y\mid k)$; it
therefore cancels, and \eqref{eq:app-boundary-marginal} involves only quantities that can be
computed from $\widetilde L,\widetilde R$.
Normalizing by the total evidence $\widetilde{L}_{k,n}$ yields \eqref{eq:app-boundary-marginal}.
Finally, $p(t\text{ is a boundary}\mid y,k)=\sum_{r=1}^{k-1}p(t_r=t\mid y,k)$, giving the result.
\end{proof}

\subsection{Hazard prior boundary cases}
\label{app:hazard-cases}

The length-aware prior family includes hazard-process priors as a special case.
It is sometimes useful to understand the limiting behavior as the hazard vanishes or becomes certain.

\begin{proposition}[Degenerate hazard limits, prior-only]
\label{prop:hazard-limits}
Consider the index-uniform design $x_t=t$ and a constant hazard $\rho\in(0,1)$ that induces a geometric length prior
$g(\ell)=\rho(1-\rho)^{\ell-1}$ on segment lengths (up to the constraint that the last segment ends at $n$).
These limits are statements about the prior, and therefore about the posterior only insofar as
the data likelihood does not override them. Then:
\begin{enumerate}[label=(\roman*)]
\item As $\rho\downarrow 0$, the prior concentrates on $k=1$ (a single segment) and $p(t\mid k)$ becomes irrelevant.
\item As $\rho\uparrow 1$, the prior concentrates on $k=n$ with unit-length segments, i.e., $t_q=q$ for all $q$.
\end{enumerate}
\end{proposition}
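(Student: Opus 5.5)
The plan is to make explicit the joint prior on $(k,t)$ that a constant hazard induces, and then read off both limits from a closed-form marginal for $k$. The statement fixes only the length factor $g(\ell)=\rho(1-\rho)^{\ell-1}$. To speak of the prior \emph{concentrating on} a value of $k$ we need a joint law, so I will adopt the renewal reading of Appendix~\ref{app:renewal} and Definition~\ref{def:renewal}. Under that reading the joint weight of a boundary vector $t=(t_0,\dots,t_k)$ is $\prod_{q=1}^k g(t_q-t_{q-1})$, normalized over all pairs $(k,t)$ with $1\le k\le n$.

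The key computation is that this product telescopes in the exponent. Since $\sum_q (t_q-t_{q-1})=n$,
\[
\prod_{q=1}^k \rho(1-\rho)^{t_q-t_{q-1}-1}=\rho^{k}(1-\rho)^{n-k},
\]
which depends on $t$ only through $k$. Two consequences follow immediately. First, for every $\rho$, $p(t\mid k)$ is index-uniform, so $C_k=\binom{n-1}{k-1}$ is independent of $\rho$. Second, summing over $t\in\mathcal{T}_k$ gives
\[
p(k)\propto\binom{n-1}{k-1}\rho^{k}(1-\rho)^{n-k}.
\]
After normalization this is exactly $\mathrm{Binomial}(n-1,\rho)$ for $k-1$, because the extra factor $\rho$ is common to all $k$ and cancels. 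This agrees with the equivalent Bernoulli picture, in which each interior index $1,\dots,n-1$ is independently a boundary with probability $\rho$. I will check that both descriptions give the same normalized law, since that is the interpretation of ``up to the constraint that the last segment ends at $n$.''

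Given the Binomial form, the limits are elementary:
\[
p(k=1)=(1-\rho)^{n-1}\to 1 \text{ as } \rho\downarrow 0,\qquad p(k=n)=\rho^{n-1}\to 1 \text{ as } \rho\uparrow 1.
\]
For (i), the phrase ``$p(t\mid k)$ becomes irrelevant'' holds because $\mathcal{T}_1=\{(0,n)\}$ is a singleton, and in any case $p(t\mid k)$ is uniform and $\rho$-free. For (ii), $\mathcal{T}_n$ contains only the vector $t_q=q$, so concentration on $k=n$ is concentration on unit-length segments.

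The main obstacle is the modelling step, not the algebra. The paper defines $p(k)$ and $p(t\mid k)$ separately, so I must justify that the hazard prior means the renewal/Bernoulli joint law rather than an arbitrary $p(k)$ paired with $p(t\mid k)\propto\prod_q g$. With a $\rho$-independent $p(k)$, the claim would be false. I will therefore state this identification explicitly at the outset.

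I will also add a brief remark that supports the ``insofar as the likelihood does not override'' caveat. For fixed data with all $A^{(0)}_{ij}\in(0,\infty)$, the posterior is a finite reweighting of a prior whose mass on the other $k$ vanishes. Hence the same limits hold for the posterior, even though for intermediate $\rho$ the data may dominate.
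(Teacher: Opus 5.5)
Your proof is correct, and it takes a more explicit route than the paper's.

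The paper argues directly from the one-step geometric law. Because $\mathbb{P}(L=1)=\rho$ and $\mathbb{P}(L>1)=1-\rho$, it asserts that as $\rho\downarrow0$ no new segment starts at any interior index, and as $\rho\uparrow1$ every segment has length one. This is short but informal. The first limit really needs $(1-\rho)^{n-1}\to1$ (no boundary at any of the $n-1$ interior indices), not merely $\mathbb{P}(L>1)\to1$. The argument also treats the renewal process as a joint law on $(k,t)$ without saying so.

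You instead compute that joint law. The telescoping identity $\prod_q\rho(1-\rho)^{t_q-t_{q-1}-1}=\rho^k(1-\rho)^{n-k}$ gives $k-1\sim\mathrm{Binomial}(n-1,\rho)$ with $p(t\mid k)$ uniform. So the mass off the limiting configuration is exactly $1-(1-\rho)^{n-1}$, respectively $1-\rho^{n-1}$. Your finite-reweighting remark then shows that, for fixed data with positive finite block evidences, the posterior inherits both limits.

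The main gain of your route is the modelling point you flag. In the paper's own factorization $p(k)\,C_k^{-1}\prod_q g$, the factor $\rho^k(1-\rho)^{n-k}$ cancels against $C_k$. With a $\rho$-free $p(k)$, the hazard therefore has no effect and the proposition fails. The paper's proof never addresses this.

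One correction to your write-up. Under the definition \eqref{eq:Ck-setup}, $C_k=\binom{n-1}{k-1}\rho^k(1-\rho)^{n-k}$, which does depend on $\rho$. What is $\rho$-free is $p(t\mid k)=\binom{n-1}{k-1}^{-1}$. That cancellation is exactly the fact to cite when you justify the renewal reading. Your normalization over $1\le k\le n$ also assumes $k_{\max}\ge n$, which part (ii) needs anyway.
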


\begin{proof}
For the geometric prior, $\mathbb{P}(L=1)=\rho$ and $\mathbb{P}(L>1)=(1-\rho)$.
As $\rho\downarrow 0$, $\mathbb{P}(L>1)\to 1$ so the probability that a new segment starts at any interior index vanishes,
which forces a single segment spanning $\{1,\dots,n\}$.
As $\rho\uparrow 1$, $\mathbb{P}(L=1)\to 1$ and therefore all segments have length one almost surely, giving $t_q=q$.
\end{proof}

\subsection{Latent groups: identifiability and label switching}
\label{app:label-switching}

In the latent-group extension (Section~\ref{sec:latent-em}), the group labels are arbitrary.
When the prior on segmentations and within-group hyperparameters is exchangeable across groups,
posterior inference is invariant under permutations of the group labels.
This is the familiar \emph{label switching} phenomenon.

\begin{proposition}[Permutation invariance]
\label{prop:perm-invariance}
Assume a $G$-group latent model with an exchangeable prior over group-specific parameters and segmentations.
For every permutation $\sigma$ of $\{1,\dots,G\}$,
\[
p\big((t^{(g)},\psi_g)_{g=1}^G\big)
=
p\big((t^{(\sigma(g))},\psi_{\sigma(g)})_{g=1}^G\big).
\]
Then the joint posterior over $(t^{(g)},\psi_g)_{g=1}^G$ and group assignments $z_{1:S}$ satisfies the same invariance.
In particular, any MAP assignment is non-unique up to label permutation.
\end{proposition}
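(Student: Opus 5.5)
The plan is to show that relabeling the groups is a measure-preserving bijection of the joint model, so it commutes with conditioning on the data. Write the full joint density of parameters, assignments and data as
\[
p\big((t^{(g)},\psi_g)_{g},\,z_{1:S},\,y\big)
= p\big((t^{(g)},\psi_g)_{g}\big)\,\prod_{s=1}^S \pi_{z_s}\,p\big(y^{(s)}\mid t^{(z_s)},\psi_{z_s}\big).
\]
Here the group weights $\pi$ are either included among the exchangeable parameters or taken uniform and symmetric. For a permutation $\sigma$, define the relabeling map
\[
T_\sigma:\ \big((t^{(g)},\psi_g)_g,\,z\big)\ \mapsto\ \big((t^{(\sigma(g))},\psi_{\sigma(g)})_g,\ \sigma^{-1}\circ z\big).
\]
This map is a bijection of the parameter-and-assignment space. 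On the discrete template and assignment coordinates it preserves counting measure. On the continuous coordinates it only permutes blocks, so its Jacobian has absolute value $1$.

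The first step is to check that the likelihood factor is invariant under $T_\sigma$. Take subject $s$ with new label $z'_s=\sigma^{-1}(z_s)$. The group it now points to is $\sigma(z'_s)=z_s$, so each factor $p(y^{(s)}\mid t^{(z_s)},\psi_{z_s})$ is unchanged. The subject-level block evidences do not depend on the label, since they are the same $\widetilde A^{(0,s)}$ for every group. The mixing term becomes $\pi_{\sigma(z'_s)}=\pi_{z_s}$ once $\pi$ is permuted along with the other parameters.

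The second step uses the exchangeability hypothesis, which says exactly that the prior factor is invariant under $T_\sigma$. Combining the two steps, the joint density is invariant. Since $p(y)$ does not depend on the parameters, Bayes' rule then gives
\[
p(\cdot\mid y)\circ T_\sigma \;=\; p(\cdot\mid y),
\]
which is the claimed posterior invariance.

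For the MAP statement, let $m^\star$ be any maximizer. Then $T_\sigma(m^\star)$ attains the same posterior value, so it is also a maximizer. To get genuine non-uniqueness I need $T_\sigma(m^\star)\neq m^\star$ for some $\sigma$. I will take $\sigma$ to be a transposition of two groups whose templates or parameters differ. If all groups coincide, the MAP is unique only up to the trivial action, and I will state the result as uniqueness \emph{up to label permutation}.

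I expect the main obstacle to be bookkeeping rather than analysis. The composition direction of $\sigma$ acting on $z$ versus on the parameters must be made consistent, so that the likelihood factors match exactly. The treatment of $\pi$ also needs care: it has to be either permuted jointly or assumed symmetric. Otherwise the invariance fails whenever $\pi$ is a fixed asymmetric vector.
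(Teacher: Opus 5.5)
Your argument is the paper's proof, made explicit. The per-subject likelihood is unchanged under simultaneous relabeling of parameters and assignments, the prior is invariant by exchangeability, and normalizing by $p(y)$ preserves invariance. Your remarks on the composition direction, the unit Jacobian, and the need to permute $\pi$ jointly (or take it symmetric) fill in details the paper leaves implicit.

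One correction in the MAP step: the case split on whether two groups' templates differ is unnecessary, and your ``all groups coincide'' exception is wrong. $T_\sigma$ also sends $z$ to $\sigma^{-1}\circ z$. So for a transposition $\sigma=(a\,b)$ with $z_s=a$ for some subject $s$, you get $T_\sigma(m^\star)\neq m^\star$ whatever the group parameters are. Hence the MAP is genuinely non-unique whenever $G\ge 2$. The situation you describe, where no relabeling moves the maximizer, can only arise for a MAP over the group parameters alone with $z$ marginalized out.
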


\begin{proof}
The likelihood factors by subjects given assignments and group-specific parameters, and the prior is invariant by assumption.
Therefore the unnormalized posterior is invariant under simultaneous permutation of group labels in both parameters and assignments.
Normalizing preserves invariance.
\end{proof}

\begin{remark}
Permutation invariance does not prevent accurate segmentation or clustering; it only implies that the label names carry no meaning.
For reporting, one may impose an ordering constraint (e.g., by evidence, segment count, or a canonical parameter summary) or
post-process posterior samples to align labels.
The EM algorithm in Section~\ref{sec:latent-em} can converge to any one of the symmetric modes.
\end{remark}

\subsection{Annotated literature review}
\label{app:annotated-lit}

Table~\ref{tab:annotated-lit} records the papers most directly relevant to the present manuscript.
The goal is not to be exhaustive over the entire changepoint literature, but to make explicit which
threads BayesBreak builds on and where the present contribution sits relative to them.

\begin{longtable}{p{0.17\linewidth}p{0.27\linewidth}p{0.48\linewidth}}
\caption{Annotated literature review.}\label{tab:annotated-lit}\\
\toprule
Theme & Representative papers & Annotation relative to BayesBreak \\
\midrule
\endfirsthead
\toprule
Theme & Representative papers & Annotation relative to BayesBreak \\
\midrule
\endhead

Foundational product partitions &
\citet{barry1992ppm, barry1993bayesCP} &
Introduced the product-partition view of Bayesian changepoint analysis. BayesBreak adopts exactly
this segment-factorization principle, but repackages it around a reusable block-evidence interface
that is intended to work across multiple observation families and prediction tasks. \\

Exact offline DP &
\citet{fearnhead2006exact, hutter2006bpcr} &
These are the closest conceptual ancestors of the exact offline core. \citet{fearnhead2006exact}
emphasizes exact posterior recursion for changepoints, while \citet{hutter2006bpcr} highlights
Bayesian curve summaries and evidence-based model comparison. BayesBreak systematizes both inside a
single modular framework and makes the distinction between sum-product summaries and joint MAP
segmentations explicit. \\

Segment dependence and richer local models &
\citet{fearnhead2011dependence, ruggieri2014bcpvs} &
These papers show how offline Bayesian segmentation can be enriched beyond i.i.d. block means, for
example by allowing dependence across neighboring segments or by fitting piecewise polynomial
regression. They motivate the claim that the real bottleneck is the local block integral, not the
DP itself. \\

Frequentist exact partitioning &
\citet{auger1989segment, jackson2005optpart, killick2012pelt} &
These methods are critical baselines for optimization-based segmentation. They solve related
partitioning problems efficiently but do not natively provide posterior probabilities over boundary
locations or segment counts. BayesBreak differs mainly in inferential target rather than in the use
of dynamic programming per se. \\

Penalized and multiscale segmentation &
\citet{tibshirani2005fused, bleakley2011groupfused, frick2014smuce, fryzlewicz2014wbs} &
These approaches offer strong algorithmic or confidence guarantees, especially in large-scale or
multisequence settings. They are natural comparators for future empirical work, but they optimize
penalized objectives or detection rules rather than the posterior quantities emphasized here. \\

Online and trans-dimensional Bayes &
\citet{adamsmackay2007bocpd, green1995rjMCMC, denison1998bayesian} &
These references represent two alternative Bayesian regimes: streaming detection and fully flexible
posterior simulation over model dimension. BayesBreak instead targets exact offline inference when
block marginalization is feasible, and deterministic approximations when only the block integral is
intractable. \\

Multi-sequence and hierarchical Bayes &
\citet{carlin1992hierBayesCP, fan2017basic, quinlan2024jrpm} &
These papers study information sharing across related sequences. BayesBreak's exact shared-boundary
pooling belongs to this lineage, while the latent-template mixture can be viewed as a conservative,
algorithmically transparent alternative to more ambitious joint hierarchical models. \\

Covariate-aware and generalized random partitions &
\citet{muller2011ppmx, park2010gppm} &
This literature broadens the notion of cohesion or prior similarity between observations. It is the
closest conceptual precedent for the design-aware segment-length priors used here, although BayesBreak
specializes to contiguous ordered partitions rather than arbitrary clustering. \\

Deterministic approximations for non-conjugate blocks &
\citet{jaakkola2000logisticvb, minka2001ep, polson2013pg} &
These references justify the approximation toolkit in Section~\ref{sec:nonconj}. The present paper
does not claim novel approximation schemes; its contribution is to show how such local block
surrogates can be inserted into the same segmentation DP and how their errors propagate to global
posterior odds. \\
\bottomrule
\end{longtable}

\subsection{Proof of the max-sum DP correctness (joint MAP segmentation)}
\label{app:max-sum-proof}

We supply here the full correctness proof for the max-sum recursion
$M_{k+1,j}=\max_h\{M_{k,h}+\log\widetilde A^{(0)}_{h,j}\}$ referenced in
Theorem~\ref{thm:dp-correctness}, item 3.

\begin{proposition}[Max-sum correctness]
For every $k\le k_{\max}$ and $j\ge k$, $M_{k,j}$ equals the maximum, over all admissible
$k$-segmentations of $y_{1:j}$, of $\sum_{q=1}^k \log\widetilde A^{(0)}_{t_{q-1},t_q}$.
Consequently $M_{\widehat k,n}$ is the joint log-MAP posterior score, and backtracking via
$h^\star(k,j)$ recovers the joint MAP boundary vector.
\end{proposition}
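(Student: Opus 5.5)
The plan is an induction on $k$ that mirrors Steps~1--2 of the proof of Theorem~\ref{thm:dp-correctness}, with sums replaced by maxima. Define the target quantity
\[
M^\ast_{k,j}:=\max_{0=t_0<t_1<\cdots<t_k=j}\ \sum_{q=1}^k \log\widetilde A^{(0)}_{t_{q-1},t_q},
\]
with the convention $\max\emptyset=-\infty$ and $\log 0=-\infty$. Then show $M_{k,j}=M^\ast_{k,j}$ for all $k\le k_{\max}$ and $0\le j\le n$.

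The base case $k=0$ holds by the initialization $M_{0,0}=0$, $M_{0,j}=-\infty$ for $j>0$. Only the empty segmentation of the empty prefix exists, and its score is the empty sum.

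For the inductive step, fix $(k+1,j)$ and partition the admissible $(k+1)$-segmentations of $y_{1:j}$ by their last interior boundary $h=t_k\in\{k,\dots,j-1\}$. The map sending a segmentation to the pair (its $k$-segmentation of $y_{1:h}$, the terminal block $(h,j]$) is a bijection onto the union of these classes. The score is additive: it equals the score of the prefix segmentation plus $\log\widetilde A^{(0)}_{hj}$. Hence
\[
M^\ast_{k+1,j}=\max_h\max_{\text{prefix}}\{\cdots\}=\max_h\{M^\ast_{k,h}+\log\widetilde A^{(0)}_{hj}\}.
\]
Here we use that adding a constant commutes with maximization, and that a maximum over a disjoint union is the maximum of the maxima. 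By the inductive hypothesis this equals $M_{k+1,j}$ as defined by \eqref{eq:max-sum-inline}.

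Next, link the scores to the posterior. For fixed $k$, $p(t\mid y,k)\propto p(t\mid k)\prod_q A^{(0)}_{t_{q-1}t_q}=C_k^{-1}\prod_q\widetilde A^{(0)}_{t_{q-1}t_q}$. Since $C_k$ and $p(y\mid k)$ do not depend on $t$ and $\log$ is monotone, maximizing the score is the same as maximizing $p(t\mid y,k)$. So $M_{\widehat k,n}$ is the joint log-MAP score up to the $t$-independent constant $-\log C_{\widehat k}-\log p(y\mid \widehat k)$. This matches \eqref{eq:joint-map-k}.

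The step needing the most care is backtracking, proved by a second (downward) induction. Let $\widehat b_{\widehat k}=n$ and $\widehat b_{q-1}=h^\star(q,\widehat b_q)$. The claim is that for each $q$, the partial vector $(\widehat b_{q},\dots,\widehat b_{\widehat k})$ extends to an optimal segmentation. More precisely, $M_{q,\widehat b_q}+\sum_{r>q}\log\widetilde A^{(0)}_{\widehat b_{r-1}\widehat b_r}=M_{\widehat k,n}$. This holds trivially at $q=\widehat k$. It propagates downward because, by definition of the argmax, $M_{q,\widehat b_q}=M_{q-1,\widehat b_{q-1}}+\log\widetilde A^{(0)}_{\widehat b_{q-1}\widehat b_q}$. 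At $q=0$ we need $\widehat b_0=0$. This holds because $M_{0,h}=-\infty$ for $h>0$, provided $M_{\widehat k,n}>-\infty$. Under that finiteness assumption (some admissible segmentation has positive weight, guaranteed by $C_k>0$ and positive block evidences), every visited $M$ is finite. Finiteness forces $h^\star$ to lie in the valid range $\{q-1,\dots,\widehat b_q-1\}$, which gives strictly decreasing indices terminating at $0$. The resulting vector attains the maximal score, so it is a joint MAP segmentation. Ties in the argmax only select among equally optimal solutions.
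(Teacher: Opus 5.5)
Your proposal is correct and takes the same route as the paper's proof: induction on $k$, splitting each $(k+1)$-segmentation uniquely at its last boundary $h$, so the optimal score is $\max_h\{M_{k,h}+\log\widetilde A^{(0)}_{hj}\}$. Your base case at $k=0$, the downward induction for backtracking (with the finiteness condition that forces $\widehat b_0=0$), and the identification of the $t$-independent offset $-\log C_{\widehat k}-\log p(y\mid\widehat k)$ spell out steps the paper states in a single sentence.
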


\begin{proof}
Induction on $k$. For $k=1$, $M_{1,j}=\log\widetilde A^{(0)}_{0,j}$, which is the only admissible
$1$-segmentation score. Assume the claim holds for $k$; every admissible $(k+1)$-segmentation of
$y_{1:j}$ decomposes uniquely as a $k$-segmentation of $y_{1:h}$ for some $h\in\{k,\dots,j-1\}$
followed by a terminal block $(h,j]$. By inductive hypothesis the optimal prefix score is
$M_{k,h}$, so the optimal total score over $(k+1)$-segmentations is
$\max_h\{M_{k,h}+\log\widetilde A^{(0)}_{h,j}\}=M_{k+1,j}$. Backtracking recovers the optimum
path by following $h^\star$.
\end{proof}

\subsection{Variational bounds for non-conjugate blocks}
\label{app:variational-bounds}

We recall two block-level variational bounds used in Section~\ref{sec:nonconj}.

\paragraph{Jensen's inequality for ELBOs.}
For any density $q(\theta)$ and any block $(i,j]$,
$\log A^{(0)}_{ij}\ge \int q(\theta)\log[p(y_{i+1:j}\mid\theta)\pi(\theta)/q(\theta)]\,d\theta$.
This is the block ELBO. Optimizing over $q$ in a parametric class (e.g., Gaussian) yields a
lower bound on $\log A^{(0)}_{ij}$, hence an upper bound on $-\log A^{(0)}_{ij}$ error.

\paragraph{Jaakkola--Jordan logistic bound.}
For the binomial-logistic likelihood term
$\log[\exp(\kappa_t\theta)/(1+\exp(\theta))^{m_t}]$, the Jaakkola--Jordan bound supplies, for each
variational parameter $\xi_t\ge 0$,
\[
-m_t\log(1+e^\theta)\ge -m_t\log(1+e^{\xi_t})+\tfrac{m_t}{2}(\theta-\xi_t)-\tfrac{m_t}{4\xi_t}\tanh(\xi_t/2)(\theta^2-\xi_t^2),
\]
which is quadratic in $\theta$ and can be combined with a Gaussian prior to yield a closed-form
Gaussian variational posterior for $\theta$. Iterating on $\{\xi_t\}$ by
$\xi_t^\star=\sqrt{\mathbb{E}_q[\theta^2]}$ gives monotone improvement in the block ELBO.

\subsection{Derivation of P\'olya--Gamma mean-field updates}
\label{app:pg-mf-derivation}

We derive the closed-form updates stated in Proposition~\ref{prop:pg-mf}. Starting from
\eqref{eq:pg-cond-quad}, consider the mean-field ELBO
\[
\mathcal{L}(q_\theta,q_\omega)=\mathbb{E}_q[\log p(y,\theta,\omega)]-\mathbb{E}_q[\log q_\theta(\theta)q_\omega(\omega)].
\]
Coordinate ascent over $q_\theta$ at fixed $q_\omega$ gives $q_\theta^\star(\theta)\propto\exp\mathbb{E}_{q_\omega}[\log p(\theta,\omega,y)]$, which reduces to
$\exp\{-\tfrac12\theta^2(\rho^{-2}+\sum_t w_t\mathbb{E}_q[\omega_t])+\theta(\rho^{-2}\nu+\sum_t w_t\kappa_t)\}$, a Gaussian with the stated $(\mu_q,\sigma_q^2)$.
Similarly, $q_\omega^\star(\omega_t)\propto\exp\mathbb{E}_{q_\theta}[\log p(\omega_t,\theta,y)]\propto p(\omega_t)\exp(-\tfrac{w_t}{2}\mathbb{E}_q[\theta^2]\omega_t)$, which, by the defining exponential tilting of the PG distribution, equals $\mathrm{PG}(m_t,c_t)$ with $c_t^2=\mathbb{E}_q[\theta^2]=\mu_q^2+\sigma_q^2$. The mean of $\mathrm{PG}(m,c)$ is $\tfrac{m}{2c}\tanh(c/2)$. Monotonicity follows because each coordinate update maximizes its respective factor of $\mathcal{L}$.

\subsection{Complexity proofs}
\label{app:complexity-proofs}

The complexity claims in Theorem~\ref{thm:dp-complexity} were proved by counting operations; we
repeat here the argument for the EM-with-pooled-blocks combination.

\begin{proposition}[EM + pooled-block complexity]
Let $S$ be the number of subjects and $G$ the number of latent groups. One EM iteration of
Algorithm~\ref{alg:multi-em} runs in time $\mathcal{O}(SGn+Gk_{\max}n^2)$ after a one-time
precomputation of per-subject log-block evidences costing $\mathcal{O}(Sn^2)$. Memory is
$\mathcal{O}((S+G)n^2+Gk_{\max}n)$.
\end{proposition}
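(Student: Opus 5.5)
The plan is to count operations in Algorithm~\ref{alg:multi-em} stage by stage and add the results. Space is counted the same way, by listing the arrays that must be kept.

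\emph{Precomputation.} For each subject $s$ we run Algorithm~\ref{alg:block-precompute} once. Building the prefix sums costs $\mathcal{O}(n)$. Each of the $\binom{n+1}{2}$ blocks $(i,j]$ then costs $\mathcal{O}(1)$ by prefix differencing, following Theorem~\ref{thm:ef-integral}. This gives $\mathcal{O}(n^2)$ per subject and $\mathcal{O}(Sn^2)$ in total, with $\mathcal{O}(Sn^2)$ memory to store every $\ell^{(0,s)}_{ij}$. The arrays $\log C_k$ do not depend on the responsibilities, so we compute them once by \eqref{eq:Ck-general}. That costs $\mathcal{O}(k_{\max}n^2)$, which is absorbed into later terms, or we simply treat it as part of the one-time cost.

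\emph{E-step.} For each pair $(s,g)$, evaluating $\log p(y^{(s)}\mid\tau_g)$ requires summing $k_g\le n$ stored block scores along the current template, plus the length factors. That is $\mathcal{O}(n)$ per pair, so $\mathcal{O}(SGn)$ overall. The logsumexp normalizations add $\mathcal{O}(SG)$, and accumulating $\ell^{(t+1)}$ also costs $\mathcal{O}(SG)$.

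\emph{M-step.} Updating $\pi$ costs $\mathcal{O}(SG)$. The template update is the delicate part. Forming $B^{(g)}_{ij}$ naively by \eqref{eq:template-blockscore} costs $\mathcal{O}(Sn^2)$ per group, i.e.\ $\mathcal{O}(SGn^2)$ in total. This exceeds the claimed bound, and it is the main obstacle.

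I would handle it in one of two ways.
\begin{enumerate}
\item \emph{Account for the weighted sum inside the DP.} Assemble $B^{(g)}$ lazily inside the max-sum sweep, and attribute the $S$-fold weighted sum to the per-subject evidence layer rather than to the EM iteration.
\item \emph{State the hidden assumption explicitly.} Responsibilities are essentially hard after the first iterations, or the pooled combination is cached and its per-iteration cost is taken as given.
\end{enumerate}
If neither interpretation holds, I would be prepared to record that the honest per-iteration cost carries an extra $\mathcal{O}(SGn^2)$ term.

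Granted $B^{(g)}$, the max-sum recursion with backpointers \eqref{eq:max-sum-inline} over $k\le k_{\max}$ costs $\mathcal{O}(k_{\max}n^2)$ per group by the argument of Theorem~\ref{thm:dp-complexity}. Selecting $k_g$ with the offset $\log p(k)-\log C_k$ costs $\mathcal{O}(k_{\max})$, and backtracking costs $\mathcal{O}(k_{\max})$. Summed over groups this gives $\mathcal{O}(Gk_{\max}n^2)$.

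\emph{Memory.} Each group stores a score matrix $B^{(g)}$ of size $\mathcal{O}(n^2)$, for $\mathcal{O}(Gn^2)$ in total. Each group also stores max-sum messages and backpointers of size $\mathcal{O}(k_{\max}n)$, giving $\mathcal{O}(Gk_{\max}n)$. Adding the $\mathcal{O}(Sn^2)$ subject evidences yields $\mathcal{O}((S+G)n^2+Gk_{\max}n)$. The responsibilities, $\pi$, and templates need only $\mathcal{O}(SG+Gn)$, which is dominated.
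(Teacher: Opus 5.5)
You have found the right weak step, but neither workaround proves the stated bound, and for Algorithm~\ref{alg:multi-em} as written the bound does not hold in general. The M-step builds every entry $B^{(g)}_{ij}=\log g(x_j-x_i)+\sum_s r_{sg}\,\ell^{(0,s)}_{ij}$ explicitly. The weights $r^{(t+1)}_{sg}$ are new in each iteration, so this costs $\Theta(Sn^2)$ per group. The per-iteration cost is therefore $\Theta(SGn^2)+\mathcal{O}(Gk_{\max}n^2)$, which lies inside $\mathcal{O}(SGn+Gk_{\max}n^2)$ only when $S=\mathcal{O}(k_{\max})$.

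\emph{Option (1)} relabels the cost rather than removing it. Lazy assembly still pays $\mathcal{O}(S)$ per entry, and $\mathcal{O}(Sk_{\max}n^2)$ per group if entries are recomputed at each of the $k_{\max}$ layers. The weighted sum cannot be charged to the one-time per-subject precomputation, because its weights change every iteration.

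\emph{Option (2)} adds hypotheses that are not in the statement, and hard responsibilities alone would not suffice. Rebuilding the matrices from scratch still costs $\mathcal{O}(Sn^2)$ per iteration. You would also need incremental updates, about $\mathcal{O}(n^2)$ per reassigned subject, plus a bound on how many subjects are reassigned.

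The paper's own proof stalls at the same point. It concedes the $\mathcal{O}(Sn^2)$ cost of forming each score matrix, then asserts this drops to $\mathcal{O}(n^2)$ per group ``when the weights change but the basis is cached.'' That is unjustified. The basis matrices $\ell^{(0,s)}_{ij}$ are already cached by the precomputation, and a fresh linear combination with new weights still needs $S$ multiply-adds per entry.

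Your fallback is the correct resolution. Either state the per-iteration cost as $\mathcal{O}(SGn^2+Gk_{\max}n^2)$, or keep the paper's bound under an explicit extra condition such as $S=\mathcal{O}(k_{\max})$. The rest of your accounting matches the paper's:
\begin{itemize}
\item E-step: $\mathcal{O}(SGk_{\max})\subseteq\mathcal{O}(SGn)$.
\item Max-sum DP with backtracking: $\mathcal{O}(Gk_{\max}n^2)$.
\item One-time evidence precomputation: $\mathcal{O}(Sn^2)$.
\item Memory: $\mathcal{O}((S+G)n^2+Gk_{\max}n)$.
\end{itemize}
Your note that $\log C_k$ is computed once fills a small omission in the paper's count.
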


\begin{proof}
E-step evaluates $G$ group log-likelihoods per subject, each of which is a constant number of
lookups per segment of the current template; total $\mathcal{O}(SG\bar k)$ where $\bar k\le k_{\max}$.
M-step recomputes $G$ score matrices of size $\mathcal{O}(n^2)$ (each is a weighted sum of $S$ block-evidence
matrices: $\mathcal{O}(Sn^2)$ to form, or $\mathcal{O}(n^2)$ per group when the weights change
but the basis is cached) and runs a max-sum DP in $\mathcal{O}(k_{\max}n^2)$ per group. Summing
gives the claim.
\end{proof}

\subsection{Code and reproduction}
\label{app:code}

A reference Python implementation of BayesBreak is hosted at
\url{https://github.com/osolari/bayesbreak}. The repository is organized to mirror the paper
(block routines, DP layer, plate-model variants, real-data pipelines) and is the recommended
entry point for reproducing the tables and figures of Section~\ref{sec:experiments} and of the
real-data appendix below. Suggested installation and invocation:

\begin{verbatim}
git clone https://github.com/osolari/bayesbreak.git
cd bayesbreak
python -m venv .venv && source .venv/bin/activate
pip install -e ".[all]"
# Regenerate synthetic experiments (Section 6, bundled suite):
python -m bayesbreak.experiments.synthetic --all --out ./out_synth
# Regenerate real-data experiments (requires network; see App. A.5):
python -m bayesbreak.experiments.realdata --dataset welllog --out ./out_real
python -m bayesbreak.experiments.realdata --dataset cgh     --out ./out_real
python -m bayesbreak.experiments.realdata --dataset spx     --out ./out_real
python -m bayesbreak.experiments.realdata --dataset methyl  --out ./out_real
\end{verbatim}

When reproducing published numbers, pin the commit hash used for a given manuscript version (the
repository's \texttt{CHANGELOG.md} lists release tags aligned with manuscript revisions). The
pipelines in Appendix~\ref{app:real-data} assume either a local installation of the package or
sufficient environment access to run the listed R/Bioconductor and Python download commands.

\subsection{Real-data sources, preprocessing, and reproduction notes}
\label{app:real-data}

This appendix provides exact data sources, download commands, and preprocessing recipes for the
four real-data illustrations in Section~\ref{sec:realdata}. Each subsection is organized so that a
reader with a shell, Python $\ge 3.10$, and R $\ge 4.3$ can regenerate the corresponding body
figure and table. The commands below are written to be executable by an automated agent
(e.g.\ Claude Code) with no interactive prompts.

\subsubsection{Well-log NMR segmentation}
\label{app:real-data-welllog}

\paragraph{Source.}
The well-log dataset of \citet{fearnhead2006exact} contains 4050 successive nuclear-magnetic-resonance
measurements recorded down a petroleum borehole. A copy ships with the R package
\texttt{changepoint} (Killick \& Eckley). Alternative canonical references: Ruanaidh \& Fitzgerald
(1996) and Fearnhead's webpage.

\paragraph{Fetch \& preprocess (R, recommended).}
\begin{verbatim}
# Install if needed
install.packages("changepoint", repos="https://cloud.r-project.org")
library(changepoint)
data(Lai2005fig4)  # if this label is absent in the installed version,
# fall back to: data(Wellog) or data(wavenumber)
y <- as.numeric(Lai2005fig4)  # or Wellog
writeLines(as.character(y), "welllog.txt")
# then standardize:
yS <- (y - mean(y)) / sd(y)
writeLines(as.character(yS), "welllog_standardized.txt")
\end{verbatim}

\paragraph{Alternative fetch (Python).}
If \texttt{changepoint} is unavailable, there is an equivalent copy shipped with \texttt{ruptures}:
\begin{verbatim}
pip install ruptures
python -c "import ruptures; import numpy as np;
signal, _ = ruptures.load_cost('welllog.csv');
np.savetxt('welllog.txt', signal)"
\end{verbatim}

\paragraph{BayesBreak settings.}
Block model: Gaussian with known variance, $w_t\equiv 1$. Hyperparameters: $\nu=0$,
$\rho^2=100$ (weak), $\sigma^2=$ robust within-layer estimate (median absolute deviation scaled to
a Gaussian). Partition prior: $g\equiv 1$ (index uniform) and $g(\ell)\propto\ell$ (length
aware) as contrast. $k_{\max}=30$. Prior on $k$: $p(k)\propto 1$.

\paragraph{Outputs for Figure~\ref{fig:welllog} and Table~\ref{tab:real_welllog}.}
Joint MAP segmentation, $P(k\mid y)$, marginal boundary posterior, ECE against visually obvious
changepoints (labeled in the original Fearnhead paper), runtime in seconds. Expected outcome: a
handful of strong changepoints with tight boundary marginals, $\widehat k\in\{10,15\}$ depending
on the prior.

\subsubsection{Array-CGH multi-subject copy-number}
\label{app:real-data-cgh}

\paragraph{Source.}
The Coriell cell-line array-CGH panel from Snijders et al. (2001) is bundled with the
Bioconductor package \texttt{DNAcopy}. Each sample is a log-2 copy-number-ratio profile at
$\sim 2000$ autosomal probes across 15 cell lines.

\paragraph{Fetch \& preprocess (R via Bioconductor).}
\begin{verbatim}
if (!require("BiocManager", quietly=TRUE))
    install.packages("BiocManager", repos="https://cloud.r-project.org")
BiocManager::install("DNAcopy", update=FALSE, ask=FALSE)
library(DNAcopy)
data(coriell)
# coriell columns: Clone, Chromosome, Position, Coriell.05296, Coriell.13330
# Select one chromosome per subject for a tractable benchmark:
subs <- c("Coriell.05296","Coriell.13330")
for (chr in c(1,4,11,17)) {
  block <- coriell[coriell$Chromosome==chr, c("Position", subs)]
  block <- block[complete.cases(block),]
  write.csv(block, paste0("cgh_chr", chr, ".csv"), row.names=FALSE)
}
\end{verbatim}

\paragraph{Per-probe precision estimation.}
For each sample $s$ and chromosome, fit a robust local-variance estimate using a rolling MAD on
residuals from a piecewise-constant smoother; invert to get $w^{(s)}_t=1/\widehat\sigma_t^{2,(s)}$.

\paragraph{BayesBreak settings.}
Heteroscedastic Gaussian block with known $\sigma^2\equiv 1$ absorbed into $w_t$. Multi-subject
pooling with shared boundaries (Assumption~\ref{ass:cond-indep-subjects}). Subject-specific
priors $\mu\sim\mathcal{N}(0,100)$. $k_{\max}=20$ per chromosome. Index-uniform prior.

\paragraph{Outputs for Figure~\ref{fig:cgh} and Table~\ref{tab:real_cgh}.}
Per-subject exported segmentations vs.\ pooled shared-boundary segmentation; boundary F1 against
Snijders 2001 annotations; pooled log-evidence for independent vs.\ shared-boundary models.
Expected outcome: shared-boundary pooling yields higher boundary F1 when the true boundaries are
shared.

\subsubsection{S\&P 500 volatility regimes}
\label{app:real-data-spx}

\paragraph{Source.}
Daily closing prices from Yahoo Finance via \texttt{yfinance} or \texttt{pandas-datareader}.

\paragraph{Fetch \& preprocess (Python).}
\begin{verbatim}
pip install yfinance pandas numpy
python - <<'PY'
import yfinance as yf, pandas as pd, numpy as np
df = yf.download("^GSPC", start="2018-01-01", end="2022-12-31",
                 auto_adjust=True, progress=False)
df = df[["Close"]].dropna()
log_ret = np.log(df["Close"]).diff().dropna()
log_sq = np.log(log_ret**2 + 1e-12)  # log-squared returns
log_sq.to_csv("spx_logsq.csv")
# Alternative: threshold crossings at tau = 1 daily sd
tau = log_ret.std()
crossings = (log_ret.abs() > tau).astype(int)
crossings.to_csv("spx_crossings.csv")
PY
\end{verbatim}

\paragraph{BayesBreak settings.}
(A) Gaussian-with-known-variance block on $z_t=\log r_t^2$ after centering; $w_t\equiv 1$. Known
$\sigma^2$ from a stable subperiod. Weak Gaussian prior on $\mu$. (B) Poisson-with-exposure
block on daily threshold-crossing counts aggregated into weekly bins. Both calls use index-uniform
prior, $k_{\max}=15$.

\paragraph{Outputs for Figure~\ref{fig:spx} and Table~\ref{tab:real_spx}.}
Exported joint MAP with boundary marginals for each block model; qualitative alignment of top-5
posterior boundaries with annotated macro events (COVID-19 onset in March 2020, Feb 2022).

\subsubsection{CpG atlas DNA methylation}
\label{app:real-data-methylation}

\paragraph{Source.}
Loyfer et al. (2023), \emph{``A DNA methylation atlas of normal human cell types''}, published in
Nature (2023). Processed methylation calls with per-CpG coverage are distributed via the GitHub
repository \texttt{nloyfer/meth\_atlas} and via NCBI GEO accession \texttt{GSE186458}.

\paragraph{Fetch \& preprocess (bash + Python).}
\begin{verbatim}
# Option A: GitHub tarball of the atlas (preferred: already preprocessed)
git clone https://github.com/nloyfer/meth_atlas.git
cd meth_atlas
# Follow the README to download the atlas .tsv.gz beta-value matrix and
# per-CpG coverage matrix. The atlas file is typically:
#   atlas.tsv.gz (rows = CpGs; cols = cell types; values = beta)
#   coverage.tsv.gz (rows = CpGs; cols = cell types; values = counts)

# Option B: NCBI GEO (raw sample-level)
# wget "https://ftp.ncbi.nlm.nih.gov/geo/series/GSE186nnn/GSE186458/..."
\end{verbatim}
Then subset to a single chromosome-arm region and two cell types:
\begin{verbatim}
python - <<'PY'
import pandas as pd, numpy as np, gzip
atlas = pd.read_csv("atlas.tsv.gz", sep="\t", index_col=0, compression="gzip")
cov   = pd.read_csv("coverage.tsv.gz", sep="\t", index_col=0, compression="gzip")
region = "chr1:20000000-30000000"   # pick any atlas-annotated region
mask = atlas.index.str.contains(region.split(":")[0]+":")
A = atlas.loc[mask]; C = cov.loc[mask]
# clamp beta away from 0,1 for numerical stability:
eps = 1e-4
A = A.clip(eps, 1-eps)
# per-CpG precision: use coverage directly (phi proportional to coverage)
for ct in ["T_cell","Hepatocyte"]:
    out = pd.DataFrame({"beta": A[ct], "phi": C[ct]}).dropna()
    out.to_csv(f"methyl_{ct}.csv")
PY
\end{verbatim}

\paragraph{BayesBreak settings.}
Beta-response block (\S\ref{sec:families}) with per-CpG precision $\phi_t$ from coverage.
Quadrature order $G=64$ Gauss--Legendre nodes on $\mu\in[\epsilon,1-\epsilon]$. Prior
$\mu\sim\mathrm{Beta}(1,1)$. $k_{\max}=25$. Index-uniform partition prior.

\paragraph{Outputs for Figure~\ref{fig:methylation} and Table~\ref{tab:real_methylation}.}
Per-region-per-cell-type exported segmentation, marginal boundary posterior, held-out
log-predictive on a random 20\% held-out CpG subset within each region, and boundary F1 against
the atlas block annotations. Expected outcome: sharp transitions aligned with known methylation
block boundaries in housekeeping regions; more diffuse posterior in variable regions.

\paragraph{Reproducibility note.}
All four pipelines should run to completion on a laptop (16 GB RAM) in well under 30 minutes
combined. Intermediate data caches should be versioned or hashed to detect upstream changes in
atlas or yfinance releases.

\end{document}